%% file: neurips_2026.tex
\PassOptionsToPackage{bookmarks=true}{hyperref}
\documentclass[]{fairmeta}

\usepackage{helvet}

\DeclareTextFontCommand{\textbf}{\fontseries{bx}\selectfont}

\title{Unleashing the Potential of Diffusion Models\\[0.2em] for End-to-End Autonomous Driving}

\usepackage{url}
\usepackage{amsmath}
\usepackage{amsfonts}
\usepackage{amssymb}
\usepackage{amsthm}
\usepackage{mathtools}
\usepackage{nicefrac}
\usepackage{bbm}
\usepackage{algorithm}
\usepackage{algpseudocode}
\usepackage{float}
\usepackage{wrapfig}
\usepackage{listings}
\usepackage{diagbox}
\usepackage{capt-of}
\usepackage{makecell}
\usepackage{pifont}
\usepackage{multicol}
\usepackage{array}
\usepackage{thmtools}
\usepackage{enumitem}

\declaretheorem[name=Theorem, numberwithin=section]{theorem}

\declaretheorem[name=Remark, numberwithin=section, sibling=theorem]{remark}

\definecolor{mine}{RGB}{205, 232, 248}

\newcommand{\sd}[1]{\textcolor{gray}{#1}}
\definecolor{ego_car_yellow}{RGB}{255, 188, 50}
\definecolor{ego_pred_blue}{RGB}{45, 133, 240}
\definecolor{fut_gt_red}{RGB}{244, 67, 60}

\newcommand{\name}{HDP}

\author[1,*]{Yinan Zheng}
\author[1,*]{Tianyi Tan}
\author[2,*]{Bin Huang}
\author[2]{Enguang Liu}
\author[1]{Ruiming Liang}
\author[2]{Jianlin Zhang}
\author[2]{\\[0.15em]Jianwei Cui}
\author[2]{Guang Chen}
\author[2]{Kun Ma}
\author[2]{Hangjun Ye}
\author[2]{Long Chen}
\author[1]{\\[0.15em]Ya-Qin Zhang}

\author[1,\dagger]{Xianyuan Zhan}
\author[1,\dagger]{Jingjing Liu}

\affiliation[1]{Institute for AI Industry Research (AIR), Tsinghua University}
\affiliation[2]{Xiaomi EV}

\contribution[*]{Equal contribution}
\contribution[\dagger]{Corresponding authors}

\abstract{Diffusion models have become a popular choice for decision-making tasks in robotics, and more recently, are also being considered for solving autonomous driving tasks. However, their applications and evaluations in autonomous driving remain limited to simulation-based or laboratory settings. The full strength of diffusion models for large-scale, complex real-world settings, such as End-to-End Autonomous Driving (E2E AD), remains underexplored. In this study, we conducted a systematic and large-scale investigation to unleash the potential of the diffusion models as planners for E2E AD, based on a tremendous amount of real-vehicle data and road testing. Through comprehensive and carefully controlled studies, we identify key insights into the diffusion loss space, trajectory representation, and data scaling that significantly impact E2E planning performance. Moreover, we also provide an effective reinforcement learning post-training strategy to further enhance the safety and robustness of the learned planner. The resulting diffusion-based learning framework, \textit{\textbf{H}yper \textbf{D}iffusion \textbf{P}lanner} (\textit{\name{}}), is deployed on a real-vehicle platform and evaluated across 6 urban driving scenarios and 200 km of real-world testing, achieving a notable 10x performance improvement over the base model. Our work demonstrates that diffusion models, when properly designed and trained, can serve as effective and scalable E2E AD planners for complex, real-world autonomous driving tasks. Project website: \url{https://zhengyinan-air.github.io/Hyper-Diffusion-Planner/}.}

\metadata[Emails]{\texttt{zhengyn23@mails.tsinghua.edu.cn}, \texttt{zhanxianyuan@air.tsinghua.edu.cn}}

\begin{document}

\newcommand{\zyn}[1]{\textcolor{orange}{\small{\bf [zyn: #1]}}}
\newcommand{\tanty}[1]{\textcolor{blue}{\small{\bf [tanty: #1]}}}
\newcommand{\zh}[1]{\textcolor{cyan}{\small{\bf [zh: #1]}}}

\newtcolorbox{conclusionbox}{
  colback=blue!5,        % Background color
  colframe=blue!75!black, % Border color
  coltitle=black,        % Title text color
  fonttitle=\bfseries,   % Title font style
  boxrule=1pt,           % Border thickness
  arc=1mm,               % Rounded corners
  left=2mm,              % Left padding
  right=2mm,             % Right padding
  top=1mm,               % Top padding
  bottom=1mm,            % Bottom padding
}

\maketitle

\section{Introduction}

Diffusion models~\citep{ho2020denoising, sohldickstein2015deep} have demonstrated 
remarkable capabilities in image and video generation~\citep{betker2023improving, 
croitoru2023diffusion, liu2024sora, rombach2022high} and are becoming increasingly 
popular in robotics~\citep{black2024pi0visionlanguageactionflowmodel, 
chi2023diffusion, intelligence2025pi_, liurdt}. Their generative, multimodal 
nature also makes them a natural fit for autonomous driving 
(AD), where human driving behavior is inherently diverse and stochastic~\citep{zheng2025diffusionplanner, tan2025flow}. Among AD 
paradigms, End-to-End Autonomous Driving (E2E AD)~\citep{bojarski2016end, 
chen2023end, hu2023planning} has emerged as one of the most practically viable 
directions toward production-grade autonomy~\citep{TeslaOpenDay2023Video}, and 
a growing body of work has accordingly explored diffusion-based planners in 
this setting~\citep{diffusiondrive,  
li2025discrete, wang2026meanfuser}, with encouraging benchmark results.

However, what these benchmarks actually reveal about diffusion-based planning is less clear than it appears, for two reasons. First, they offer only a narrow view of real-world driving. As illustrated in Figure~\ref{fig:teaser}, mainstream open-source datasets~\citep{caesar2021nuplan} and simulation environments~\citep{Dauner2024NEURIPS} differ substantially from real-world deployment in data scale, trajectory diversity, and evaluation protocols. In particular, their open-loop and pseudo-closed-loop metrics are widely acknowledged to be poor proxies for true closed-loop performance on the road~\citep{Dauner2023CORL, li2024ego, zheng2025diffusionplanner}, so planners that excel in simulation may degrade substantially in real-world deployment, as our experiments confirm. Second, even on these benchmarks, top-performing planners often rely on 
rule-based post-processing~\citep{fan2018baidu} or incorporate strong 
assistive designs—such as pre-defined anchor trajectories~\citep{li2024hydra} 
or explicit goal conditions~\citep{albrecht2021interpretable, gu2021densetnt}—
to reduce the learning burden, leaving it unclear how much of the gain comes 
from the model itself rather than from these hand-crafted priors. Taken together, the true ceiling of diffusion models as AD planners in real world remains 
unknown, raising a critical question:
\textit{Are we fully exploiting the potential of diffusion models as AD 
planners?}

\begin{figure}[t]
\begin{center}
% \vspace{-14pt}
\includegraphics[width=1\textwidth]{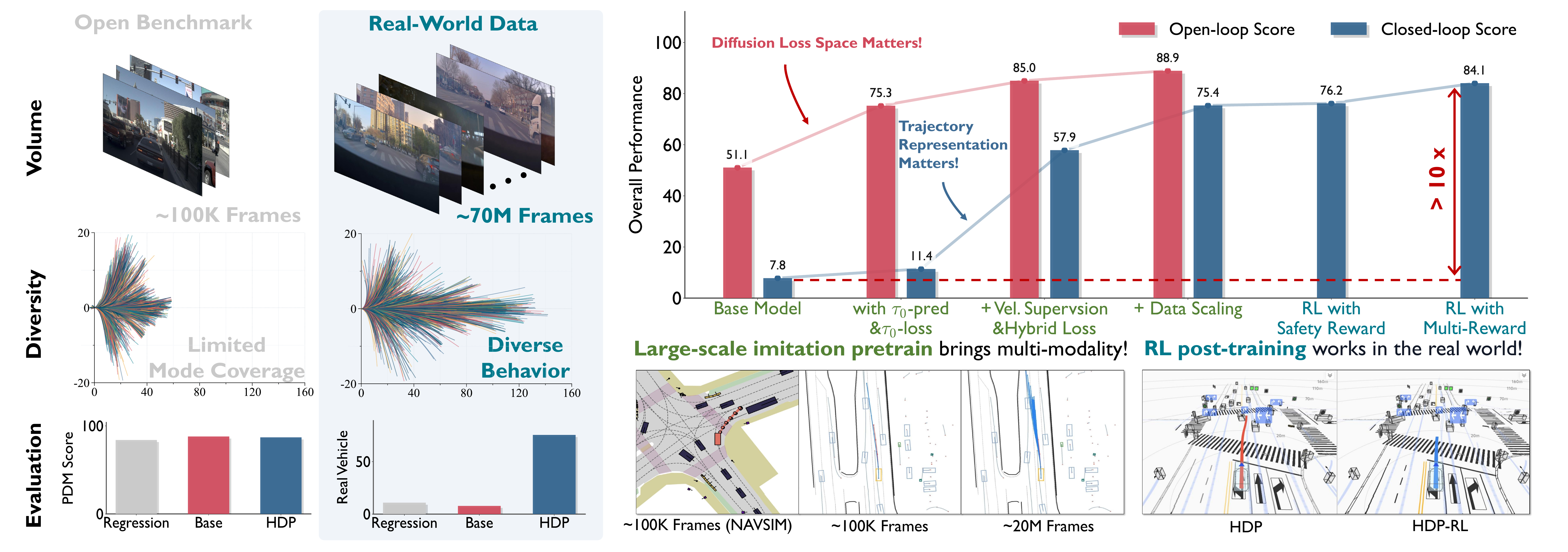}
\end{center}
% \vspace{-15pt}
\caption{\small Simulation-based benchmarks differ substantially from real-world deployment, leaving the true capability of diffusion-based planners largely unverified. Through a detailed investigation of the IL/RL training recipe, we unleash the potential of diffusion-based planning, achieving a 10x improvement in real world.}
\vspace{-15pt}
\label{fig:teaser}
\end{figure}

To answer this question, we conduct a systematic investigation of diffusion-based E2E AD, leveraging massive real-vehicle data and rigorous on-road testing. We adopt a vanilla diffusion-based planning head\citep{zheng2025diffusionplanner} for trajectory generation. We deliberately keep the diffusion head free of hand-crafted priors, so that any performance gain we observe can be attributed to diffusion modeling itself. Building on this base model, we identify a set of key design principles, each supported by real-vehicle evidence:

\begin{itemize}[leftmargin=*] 
\item \textbf{Diffusion loss space matters}. Our key insight stems from the observation that planning trajectory lives in a low-dimensional manifold, distinct from image generation. Consequently, we re-examine the diffusion loss space design~\citep{li2025back} and find that data ($\tau_0$)-prediction combined with diffusion loss directly supervised on data ($\tau_0$-loss)
% $\tau_0$-prediction combined with $\tau_0$-loss 
best captures the trajectory manifold, enabling better learning performance and high-quality trajectory generation.
% generation of high-quality trajectories.
\item \textbf{Trajectory representation matters}. In our $\tau_0$-prediction setup, predicting waypoints yields better spatial awareness, while predicting velocity gives smoother trajectories. Our model therefore outputs velocity but is supervised on both, achieving substantial closed-loop gains. Although this hybrid loss resembles 
auxiliary losses~\citep{bansal2018chauffeurnet, hu2023planning, jiang2023vad} 
routinely used in AD, we \textit{theoretically prove} that it leaves the 
diffusion training optimum unchanged, thereby exposing an often-overlooked 
pitfall: widely used auxiliary losses such as collision penalties or $L_{1}$-norm 
loss would distort it.

% us to leverage the merits of both waypoints and velocity supervision in practice.
% us to leverage both advantages in practice.
 \item \textbf{Emergence of data scaling}. We further show that, without extra auxiliary losses or hand-crafted 
priors, our diffusion framework effectively benefits from data scaling in real-world testing. Our model captures richer multimodal driving behaviors, an effect not observed when training on commonly used 
benchmarks~\citep{caesar2021nuplan, Dauner2024NEURIPS}, whose data are too small to expose it. The same scaling also yields stronger 
open-loop and closed-loop performance, making our framework a viable planner.
 % In contrast, commonly used AD benchmarks fail to capture such a scaling property and result in poor multimodal modeling capabilities.
\end{itemize}

While the imitation pretraining establishes a strong diffusion planner prior, it lacks explicit optimization for safety-critical scenarios. To close this gap, we 
further introduce Reinforcement Learning (RL) post-training that 
re-weights the diffusion objective with safety-aware advantage 
estimates~\citep{peng2019advantage, zheng2024safe}, aligning generated 
trajectories with safety constraints while preserving training stability. We \textit{theoretically prove} that this reweighting is naturally compatible with our hybrid loss, enabling a clean implementation. The same formulation extends to multi-reward settings and further advances the Pareto frontier, showing that \textbf{RL post-training is effective in real-world closed-loop deployment}.
% can naturally combine the hybrid loss introduced earlier, with a simple implementation.

Finally, we integrate the above innovations into a complete framework, \textit{\textbf{H}yper \textbf{D}iffusion \textbf{P}lanner} (\textit{\name{}}), and its RL post-trained variant \textit{\name{}-RL}. With only minimal smoothness post-processing, both are deployed on a real vehicle. We evaluate them across 6 urban driving scenarios over 200~km of road testing under comprehensive metrics: \textit{\name{}}~/~\textit{\name{}-RL} achieve a 10x improvement over the base model. We further analyze the characteristics of the framework and the impact of key components, showing that diffusion models, when properly designed and trained, can serve as effective and scalable planners for real-world autonomous driving.

\section{Preliminaries}
\label{sec:pre}

Our work focuses on the planning module of E2E AD systems, where the planner receives the latent representation $C$ from the scene encoder and generates a trajectory $\tau_0$ for downstream control systems. Diffusion models~\citep{sohldickstein2015deep} define a forward process that transforms the conditional trajectory distribution $q_0(\tau_0 \mid C)$ into a noised distribution $q_{t0}(\tau_t \mid \tau_0) = \mathcal{N}(\tau_t \mid \alpha_t \tau_0,\, \sigma_t^2 \mathbf{I})$ for $t \in [0,1]$, where $\alpha_t, \sigma_t$ define a pre-defined noise schedule. As $t \rightarrow 1$, this schedule ensures that the marginal distribution $q(\tau_1)$ approaches $\mathcal{N}(\tau_1 \mid 0, \mathbf{I})$. The reverse denoising process can be expressed as a diffusion ODE~\citep{song2021scorebased}:
\begin{equation}
\label{eq:reverse}
{\rm d}\tau_t = \left[f(t)\tau_t - \tfrac{1}{2}g^2(t)\,\nabla_{\tau_t}\log q_t(\tau_t)\right]{\rm d}t,
\end{equation}
where $f(t)=\tfrac{{\rm d}\log \alpha_t}{{\rm d}t}$ and $g^2(t)=\tfrac{{\rm d}\sigma_t^2}{{\rm d}t}-2\tfrac{{\rm d}\log \alpha_t}{{\rm d}t}\sigma_t^2$. A commonly used approach for learning diffusion models~\citep{ho2020denoising, rombach2022high} is to train a neural network $\epsilon_{\theta}(\tau_t, t, C)$ to fit the Gaussian noise $\epsilon$:
\begin{equation}
\label{eq:diffusion_loss}
\mathcal{L} = \mathbb{E}_{t,\, \tau_0,\, \tau_t,\, \epsilon}\,\|\epsilon_{\theta}(\tau_t, t, C) - \epsilon\|_2^2,
\end{equation}
where $t\sim \mathbb{U}(0,1), \tau_0 \sim q_0(\tau_0 | C),\tau_t \sim q_{t0}(\tau_t| \tau_0)$ and $\epsilon \sim \mathcal{N}(\tau_{1}\mid0,\mathbf{I})$. Then, we can estimate the score function $\nabla_{\tau_{t}}\log q_t(\tau_{t})$ in Eq.~(\ref{eq:reverse}) using $s_{\theta}(\tau_t, t, C) = - \epsilon_{\theta}(\tau_t, t, C)/\sigma_t$, and use an ODE solver to generate the data.

\section{Investigation Roadmap}

In this section, we first introduce the base model and evaluation metrics for assessing model performance. 
% With the base model and evaluation metrics ready, we will start our journey 
Subsequently, we will briefly outline our investigation roadmap aimed at fully unleashing the potential of diffusion models for E2E AD.

\subsection{Base Model}
\label{sec:basemodel}

\begin{wrapfigure}{r}{0.58\textwidth} 
\vspace{-15pt}
\includegraphics[width=0.56\textwidth]{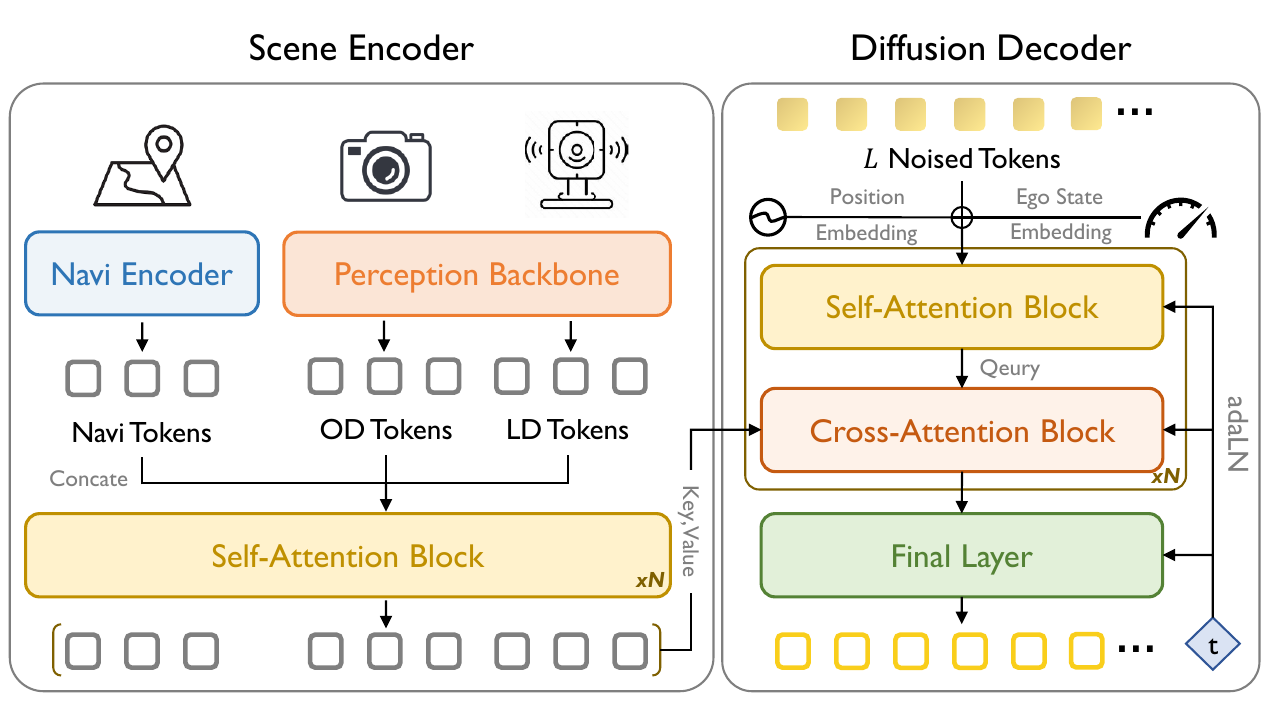}
\vspace{-2pt}
\caption{\small Model architecture of the base model.}
\label{fig:architecture}
\vspace{-10pt}
\end{wrapfigure}

\textbf{Scene Encoder.} 
Following standard E2E AD practice~\citep{chen2023end, hu2023planning}, we adopt a perception backbone that ingests multi-modal inputs (camera and LiDAR) and produces two sets of perception tokens---Object Detection (OD) and Lane Detection (LD). Together with Navi tokens encoding high-level navigation commands, they are concatenated and fused via self-attention into the latent condition $C := \mathrm{Enc}(s)$ consumed by the planner. More architecture and training details are deferred to Appendix~\ref{app:scene_encoder}.

\textbf{Diffusion Decoder.} 
We use a vanilla transformer-based diffusion model~\citep{peebles2023scalable} as the decoder. Conditioned on $C$ and the current ego state, it generates the trajectory $\tau_0 \in \mathbb{R}^{L \times 4}$, where each of the $L$ timesteps contains the ego-centric waypoint coordinates and the cosine/sine of the heading. As illustrated in Figure~\ref{fig:architecture}, the noised trajectory $\tau_t$ is first split and projected into $L$ tokens, with position and velocity embeddings added, fused via self-attention, and then attended to $C$ through cross-attention. The diffusion timestep $t$ is injected via adaptive layer normalization~\citep{peebles2023scalable}. After several such blocks, an MLP head~\citep{liurdt, zheng2025diffusionplanner} produces the prediction, and the model is trained using Eq.~(\ref{eq:diffusion_loss}).

\subsection{Evaluation Metrics}
\label{subsec:metrics}

We adopt two complementary evaluation protocols. \emph{Open-Loop} metrics evaluate trajectory quality via data replay, while \emph{Closed-Loop} metrics measure performance in real-vehicle testing on a fixed route; full definitions and hyperparameters are in Appendix~\ref{app:metrics}.

\textbf{Open-Loop.} Following nuPlan~\citep{caesar2021nuplan}, we report \emph{Average Displacement Error} (ADE), \emph{Final Displacement Error} (FDE), \emph{Comfort}, and \emph{Collision Rate} (CR), and aggregate them into the \emph{Open-Loop Score} $(1-CR)\sum_{m\in\{\text{ADE, FDE, Comfort}\}} \omega_m S_m$. To probe multimodality, we additionally report \emph{Trajectory Divergence}, the average pairwise Euclidean distance among samples drawn for the same scene.

\textbf{Closed-Loop.} For real vehicle testing, we use a fixed route to conduct controlled experiments. We log the \textit{Success Rate} on six common driving scenarios, with 
any takeover counted as a failure: starting maneuvers, car-following with stopping, navigational lane changes, yielding to VRUs, yielding to cross traffic at intersections, and left and right turns. Per-scenario rates are aggregated by 
frequency-weighted averaging for a more representative evaluation. We 
further compute a \textit{Stability Score} as the mean of 
\textit{Centering Performance} and \textit{Speed Compliance}, capturing 
off-center driving or improper speed even without 
takeover. The overall \textit{Closed-Loop Score} is the mean of the two.

\subsection{Roadmap Overview}

With the base model and evaluation metrics ready, we start our journey to unleash the potential of diffusion models for E2E AD. As the AD planning task is obviously different from image generation tasks, with its output trajectories residing on a relatively low-dimensional manifold, need to satisfy hard constraints like collision avoidance, and is evaluated in a closed-loop setting that easily suffers from error accumulation. To address these challenges, we structure our exploration into two phases: 1) \textbf{Imitation Learning Pre-Training}, examining how diffusion loss and trajectory representation shape planning quality and validating closed-loop data scaling; and 2) \textbf{Reinforcement Learning Post-Training}, where a compatible RL algorithm enables stable, efficient enhancement of the pretrained model.

% With the base model and evaluation metrics ready, we start our journey to unleash the potential of diffusion models for E2E AD. As the AD planning task is obviously different from image generation tasks, with its output trajectories residing on a relatively low-dimensional manifold, need to satisfy hard constraints like collision avoidance, and is evaluated in a closed-loop setting that easily suffers from error accumulation. Hence, very different design considerations could apply. 
% Our starting point is that the AD planning task is obviously different from image generation tasks. This is because planning trajectories reside on a relatively low-dimensional manifold, need to satisfy hard constraints like collision avoidance, and are evaluated in a closed-loop setting, which will amplify the error. 
% To address these challenges, we structure our exploration into two separate phases: 1) \textbf{Imitation Learning Pre-Training}, where we study how diffusion loss and trajectory representation influence planning trajectory quality, and validate data scaling in a closed-loop setting; and 2) \textbf{Reinforcement Learning Post-Training}, where we use RL to further enhance the safety of the pre-trained model by developing a compatible RL algorithm for stable and efficient post-training.

\section{Imitation Learning Pre-training}

\subsection{Diffusion Loss Space}
\label{sec:diffusion_loss}

As the score function in the denoising process (Eq.~(\ref{eq:reverse})) is generally intractable, in practice, the diffusion model is typically trained to predict 
% A diffusion model predicts the score function in Eq.~(\ref{eq:reverse}). However, the marginal score function itself is intractable. In practice, the model is trained to predict
one of the three conditioned quantities: the noise $\epsilon$~\citep{ho2020denoising}, the flow velocity $v_t$~\citep{ho2022imagen}, or the clean data $\tau_0$~\citep{ramesh2022hierarchical}. These quantities are mutually convertible, allowing for various loss space designs (see Appendix~\ref{app:b} for more details). 
For instance, the diffusion model can be parameterized to output $\tau_0$, while being supervised with $\epsilon$-loss. 
However, models trained in different loss spaces can exhibit distinct learning dynamics~\citep{li2025back} and planning behaviors. To investigate the impact of loss space design on the planning task, we trained our model with all 9 prediction-loss combinations and conducted open-loop evaluations. The results are shown in Table~\ref{tab:overall_aggregated_score} and Figure~\ref{fig:training_curve}, \ref{fig:grid_openloop}. Most models achieved competitive performance (except $\epsilon$-pred with $\tau_0$- and $v$-loss), successfully capturing the expert policy in the training data while demonstrating multimodal generation capability.
% , with two distinct modes (\textit{proceed} or \textit{yield}) in trajectory predictions. 
In addition, among these models, the $\tau_0$-prediction model trained with $\tau_0$-loss stands out prominently. To study the reasons for this advantage, we provide further investigation from the following perspectives.

\textbf{Fast Convergence}. Figure~\ref{fig:training_curve} displays the aggregated scores of models at various training stages. While the model utilizing $\tau_0$-prediction converged rapidly with increased training steps, the other two approaches experienced notable instability. This disparity stems from differences in the inherent dimensionality of the target manifold~\citep{li2025back}. Because the trajectory $\tau_0$ resides in a low-dimensional manifold, the neural network can capture it more easily. Conversely, the $\epsilon$ and $v$ targets are supported on much higher-dimensional spaces and therefore require greater model capacity. Furthermore, $\tau_0$-loss works best for the $\tau_0$-prediction model compared with the other two choices.
% , which introduce no noise in its learning objective. 

\textbf{High Generation Quality}. Furthermore, we visualized the generated trajectories of different models, as shown in Figure~\ref{fig:grid_openloop}. Although most models generate trajectories that resemble the ground truth, the quality varies across different loss space designs. Some models (especially $\epsilon$-pred) generate trajectories with noticeable non-smoothness and irregular jitters, leading to abrupt changes in heading direction and velocity,
% The generations of most models exhibit irregular jitters of different extents, leading to abrupt changes in velocity and acceleration, 
while the $\tau_0$-prediction models generate trajectories with better kinematic coherence. This disparity likely stems from denoising dynamics during the final, low-noise steps~\citep{ning2025dctdiff}. Unlike $\epsilon$- and $v$-prediction models, which struggle to estimate faint noise signals and consequently generate high-frequency artifacts, data prediction demonstrates superior stability. By directly predicting the trajectory, it effectively suppresses noise to yield smoother, kinematically consistent trajectories. Another thing worth noting is that the $\epsilon$-prediction models trained with $\tau_0$- and $v$-loss suffered a complete breakdown. The failure of these two modes can be attributed to the extremely high variance of training objectives in which the noise target is scaled by $1 / \alpha_t$.
In conclusion, the $\tau_0$-prediction model with $\tau_0$-loss yields both fast convergence and high-quality generation,
% Therefore, we choose this design as the default for the following investigative experiments and discussions.
making it a suitable choice for further investigation.
% % the discussion below. 
Therefore, we choose this design as the default for the following investigative experiments and discussions.

\begin{table*}[t]
    \centering
    \captionsetup[table]{name=Table, labelsep=colon}
    \begin{minipage}{\linewidth}
        \centering
        \begin{minipage}[p]{0.53\linewidth}
            \centering
            \small
            \captionof{table}{\small Aggregated open-loop score. The models are trained for $2\times10^4$ steps in total, and the results are averaged over 3 evaluations. Averaged open-loop score in black and standard variance in \sd{gray}.} % 标题放在表格上方
            \label{tab:overall_aggregated_score}
            \resizebox{\linewidth}{!}{
                \begin{tabular}{l |c c c}
                    \toprule[1.pt]
                      & $\tau_0$-pred & $v$-pred & $\epsilon$-pred \\
                    \midrule
                    $\tau_0$-loss: $\mathbb{E}[||\tau_\theta - \tau_0||^2_2]$      & \colorbox{mine}{75.27} $\pm$ \sd{8.53}  & 35.64 $\pm$ \sd{0.97}  & 11.43 $\pm$ \sd{0.45}   \\
                    $v$-loss: $~\mathbb{E}[||v_{\theta;t} - v_t||^2_2]$          & 63.47 $\pm$ \sd{8.58}  & 53.91 $\pm$ \sd{0.87}   & 0.66  $\pm$ \sd{0.40} \\
                    $\epsilon$-loss: $~\mathbb{E}[||\epsilon_{\theta} - \epsilon||^2_2]$  & 63.78 $\pm$ \sd{8.59}  & 45.24 $\pm$ \sd{2.84}   & 51.07 $\pm$ \sd{4.67} \\
                    \bottomrule[1.pt]
                \end{tabular}
            }
            
            \vspace{0.5em}
            \includegraphics[width=\linewidth]{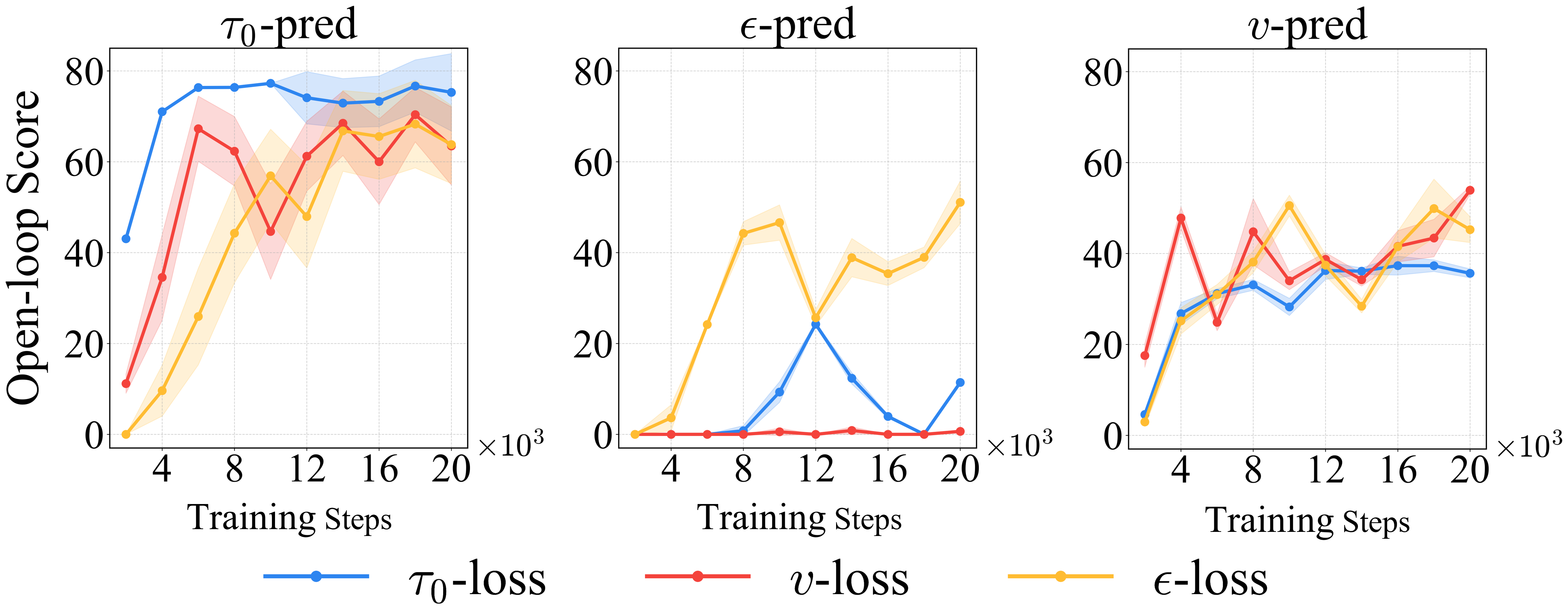}
            % \vspace{-1pt}
            \captionof{figure}{\small The learning curve of models with different loss designs. Results are averaged over three evaluations.}
            \label{fig:training_curve}
        \end{minipage}
        \hfill
        \begin{minipage}[p]{0.45\linewidth}
            \centering
            % 图片填满右侧 minipage 宽度
            \includegraphics[width=\linewidth]{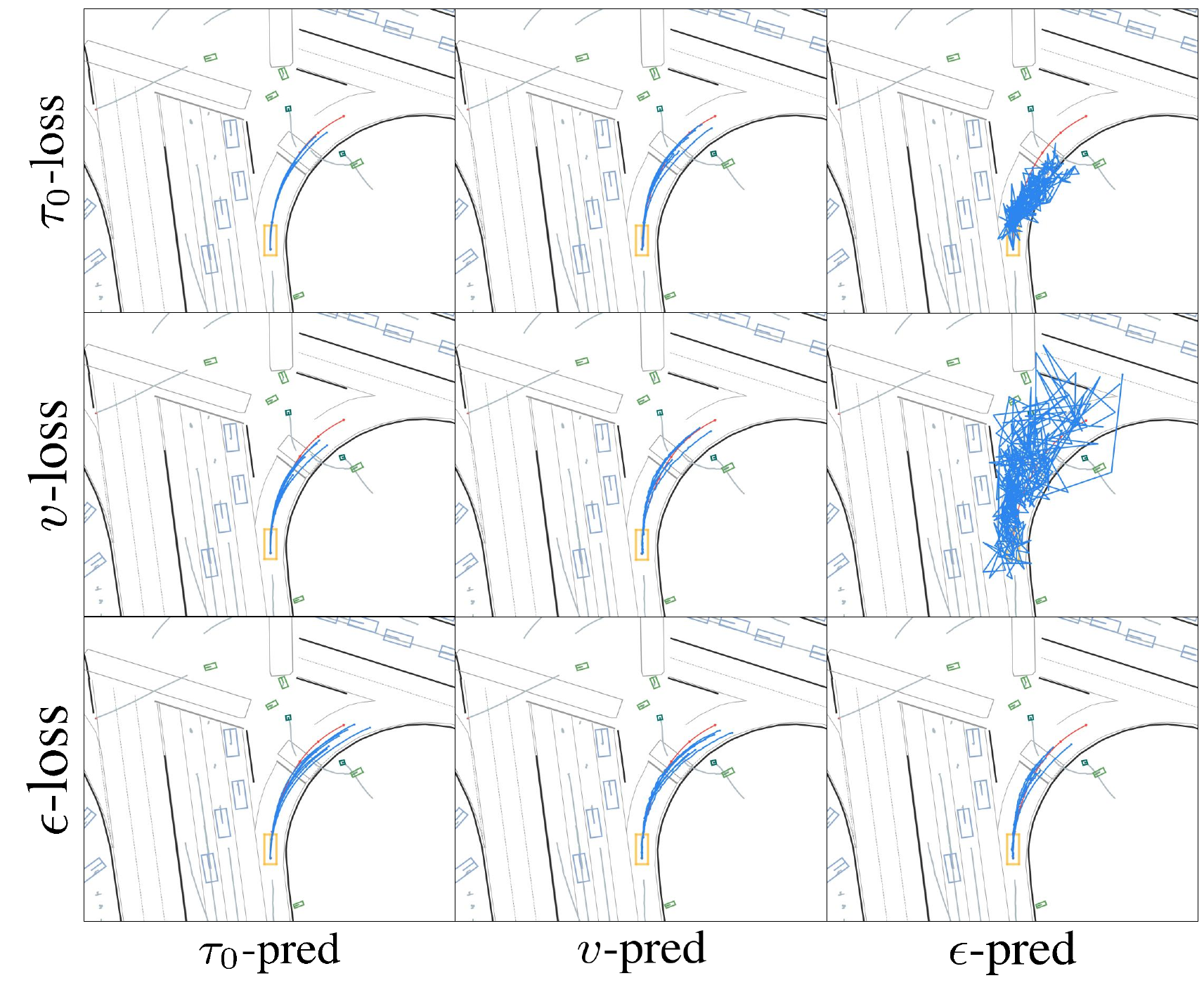}
            % 强制标记为 Figure
            % \vspace{0.3em}
            \captionof{figure}{\small The open-loop visualization of planning trajectories. 6 generations are plotted for each scene. Ego vehicle in \textcolor{ego_car_yellow}{yellow}, model predictions in \textcolor{ego_pred_blue}{blue} and ground truth trajectory in \textcolor{fut_gt_red}{red}.}
            \label{fig:grid_openloop}
        \end{minipage}
    \end{minipage}
    \vspace{-15pt}
\end{table*}

\vspace{-2pt}
\subsection{Trajectory Representation}
\label{sec:trajectoryrepresentation}
% \tanty{need rewrite}
% current logic
% 1. despite fig 3, wpt with good structure but poor smooth (comfort)
% 2. turn to velocity for comfort, but accumulating error -> highlight real vehicle problem
% 3. so propose hybrid loss: each timestep should consider error before
% 4. real vehicle improvement

\begin{wrapfigure}{r}{0.5\linewidth}
    \centering
    \vspace{-7pt}
    \includegraphics[width=\linewidth]{assets/vel_wpt_horizontal_combo.png}
    \caption{\small Comparison of prediction quality with waypoint and velocity representations.}
    \label{fig:vel_wpt_relative}
    \vspace{-7pt}
\end{wrapfigure}

In the previous section, we identified $\tau_0$-prediction with $\tau_0$-loss as a suitable diffusion loss space for planning tasks, which achieves much better learning and open-loop performance. However, when taking a finer-grained inspection on higher-order statistics of generated trajectories, we find that directly using trajectory waypoints as $\tau_0$ could easily result in noticeable jerky movements on the velocity\footnote{Note that the term "velocity" in this section refers to physical kinematic velocity rather than the diffusion velocity.} curve, as shown in Figure~\ref{fig:vel_wpt_relative}.
% In Section~\ref{sec:diffusion_loss}, we conducted an in-depth analysis to identify a suitable diffusion loss space. Although we achieved relatively high-quality trajectory generation, the results still fail to capture fine-grained details. As shown in Figure~{\ref{fig:kinematic_hybrid_loss}}, the velocity curve exhibits jitter. 
This indicates that while the model captures the global geometric structure of the trajectory, it fails to enforce local temporal coherence, which could be highly detrimental to closed-loop real-vehicle performance.

A natural fix is to use a delta representation of the trajectory for higher-order supervision: enforcing the model to predict the velocity $\tau_0^\mathbf{v} = \{(v^l_x, v^l_y)\}_{l=1}^L$ instead of absolute waypoints $\tau_0^\mathbf{x} = \{(x^l, y^l)\}_{l=1}^L$, with the final trajectory recovered via integration at inference. Interestingly, we find empirically that these two trajectory representations have a striking impact on the trajectories generated by diffusion planners. As shown by the $v$-$t$ curves and decomposed metrics in Figure~\ref{fig:vel_wpt_relative}, velocity-represented trajectories demonstrate smoothness and stability similar to human driving, enjoying a much higher comfort score. By contrast, waypoint-represented trajectories suffer from severe jerky motion, but achieve a superior ADE thanks to better modeling of global geometric structure.

\textbf{Hybrid Loss}. An intuitive idea is to supervise the model with both waypoints and velocity representation simultaneously. However, we find that the magnitude of waypoint coordinates in a trajectories increase greatly along the time-axis, while the velocity representations are more concentratedly distributed, resulting in better numerical stability when learning with a diffusion model.

% and making it a better objective for diffusion generation. 
Therefore, we retain the skeleton of velocity representation, but also incorporate the waypoint supervision through a carefully designed hybrid loss.
% Therefore, we retain the design of velocity prediction, opting instead to incorporate the waypoints representation into the model through a designed hybrid loss. 
Specifically, the model outputs the velocity of the planned trajectory, and we compute the $L_2$ loss on both the directly output velocity and the integrated waypoints:
\begin{equation}
    \begin{aligned}
        \mathcal{L}_{velocity}  &= \mathbb{E}_{\tau^\mathbf{v}_0, \epsilon, t}\lVert\tau^\mathbf{v}_\theta - \tau^\mathbf{v}_0\lVert_2^2 \\
        \mathcal{L}_{waypoints} &= \mathbb{E}_{\tau^\mathbf{x}_0,\epsilon, t}\lVert M\tau^\mathbf{v}_\theta \cdot \Delta t - \tau^\mathbf{x}_0\lVert_2^2\\ 
        &= \mathbb{E}_{\tau^\mathbf{v}_0,\epsilon, t}\lVert M\tau^\mathbf{v}_\theta \cdot \Delta t - M\tau^\mathbf{v}_0\cdot \Delta t\lVert_2^2,
        % &= \mathbb{E}_{\mathbf{x},\epsilon,t} \sum\limits_{l=1}^L(\Delta t \cdot (\sum\limits_{i=1}^l \mathbf{v}^i_\theta - \sum\limits_{i=1}^l \mathbf{v}^i))^2
    \end{aligned}
    \label{eq:kinematic_loss}
\end{equation}
where $\Delta t$ is the time interval of neighboring frames, and $M$ is a lower triangular matrix of ones that integrates the velocity into waypoints. The final hybrid loss is a weighted sum of these two losses:
\begin{wrapfigure}{r}{0.45\linewidth}
    \vspace{-7pt}
    \centering
    \includegraphics[width=\linewidth]{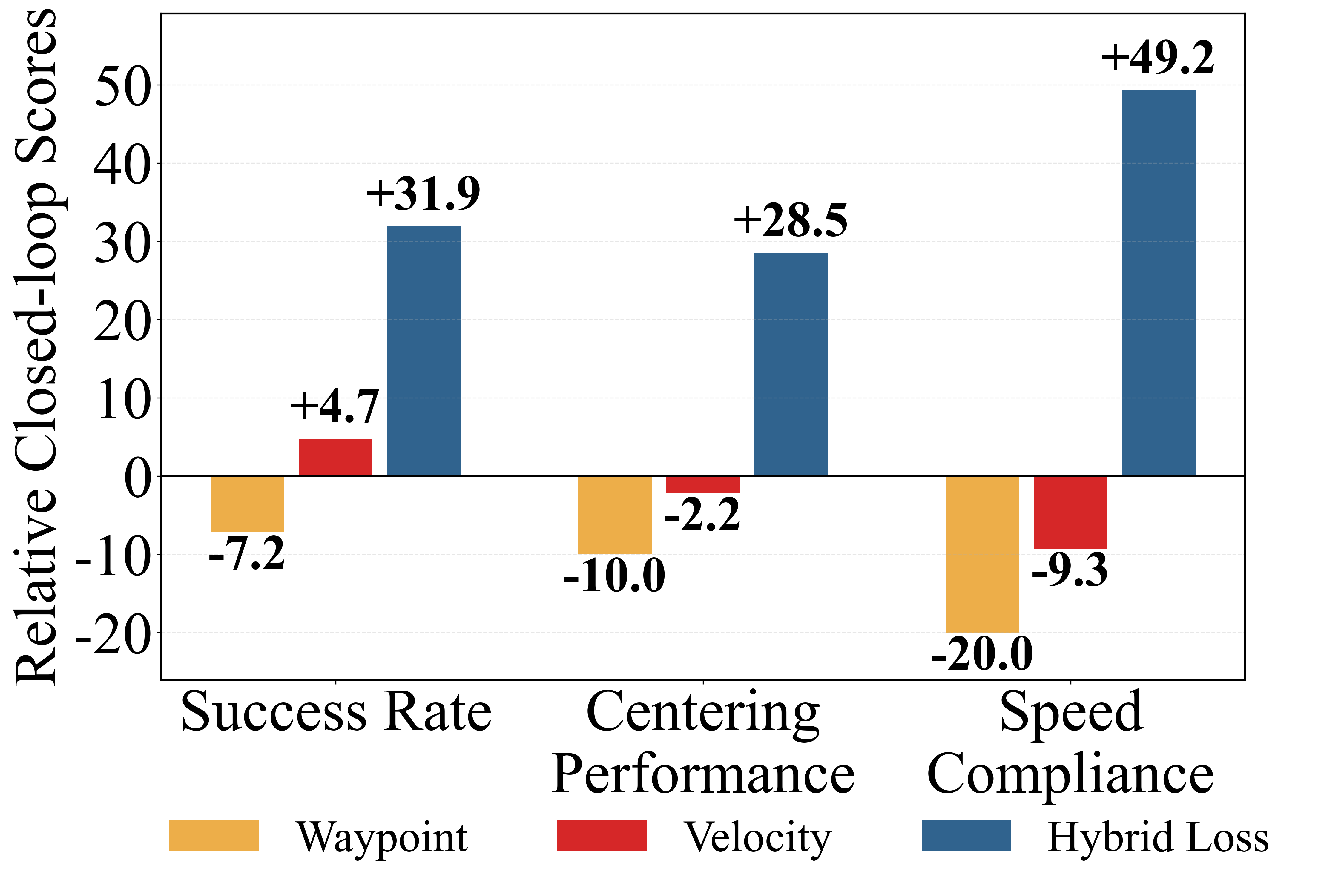}
    \caption{\small Comparison of waypoint, velocity representations and velocity with hybrid loss, which yields both steadiness and smoothness.}
    \label{fig:vel_wpt_vel_hybrid}
    \vspace{-15pt}
\end{wrapfigure}
\begin{equation}
    \begin{aligned}
        \mathcal{L}_{hybrid} &= \mathcal{L}_{velocity} + \omega \cdot \mathcal{L}_{waypoints},
        % &= \mathbb{E}_{\mathbf{v}, \epsilon, t}\lVert\mathbf{v}_\theta - \mathbf{v}\lVert_2^2 + \omega \cdot \mathbb{E}_{\mathbf{x},\epsilon, t}\lVert M\mathbf{v}_\theta \cdot \Delta t - \mathbf{x}\lVert_2^2 \\
        % &= \mathbb{E}_{\mathbf{v},\epsilon, t} \left[(\mathbf{v}_\theta - \mathbf{v})^T(\mathbf{v}_\theta - \mathbf{v})\right. \\
        % &~~~~~~~~~~ + \left.\omega\Delta t^2(\mathbf{v}_\theta - \mathbf{v})^TM^TM(\mathbf{v}_\theta - \mathbf{v})\right] \\
        % &= \mathbb{E}_{\mathbf{v},\epsilon, t} \left[(\mathbf{v}_\theta - \mathbf{v})^T(I + \omega \Delta t^2M^TM)(\mathbf{v}_\theta - \mathbf{v})\right] \\
    \end{aligned}
    \label{eq:hybrid_loss}
\end{equation}
where $\omega$ is a balancing weight. Moreover, we can theoretically show that this Eq.~(\ref{eq:hybrid_loss}) is also a valid diffusion loss to obtain the correct marginal score function of the data distribution. We first recall the following property of Bregman divergences~\citep{banerjee2005clustering}. Please see Appendix~\ref{app:proof_bregman} for proof of Lemma~\ref{lem:bregman}.

\begin{restatable}[\citet{lipman2024flowmatchingguidecode}]{lemma}{BregmanLemma}
    \label{lem:bregman}
    For any positive-definite $P\succ 0$, the quadratic form $D_{P}(u,v)=(u-v)^{\top}P\,(u-v)$ is a Bregman divergence. The unique minimizer of the conditional regression objective $\mathbb{E}_{\tau^{\mathbf{v}}_{0},\,\epsilon,\,t}[D_{P}(\tau^{\mathbf{v}}_{\theta},\tau^{\mathbf{v}}_{0})]$ is the conditional expectation $\tau^{\mathbf{v},\,\star}_{\theta}(\tau^{\mathbf{v}}_{t},t) \;=\; \mathbb{E}\!\left[\tau^{\mathbf{v}}_{0}\,\big|\,\tau^{\mathbf{v}}_{t}\right]$, which by Tweedie's formula recovers the marginal score function in Eq.~(\ref{eq:reverse}).
\end{restatable}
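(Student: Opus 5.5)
The proof has three parts: show that $D_P$ is a Bregman divergence, show that the conditional mean minimizes it pointwise, and then invoke Tweedie's formula.

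\textbf{Bregman structure.} Take the strictly convex generator $\phi(u)=u^\top P u$, which is strictly convex because $P\succ 0$. Its gradient is $\nabla\phi(v)=2Pv$. Expanding gives
\[
\phi(u)-\phi(v)-\langle \nabla\phi(v),u-v\rangle=u^\top P u - v^\top P v - 2v^\top P(u-v)=(u-v)^\top P(u-v).
\]
So $D_P$ is exactly the Bregman divergence of $\phi$. In our setting $P=I+\omega\,\Delta t^2 M^\top M$, obtained by expanding Eq.~(\ref{eq:hybrid_loss}) with Eq.~(\ref{eq:kinematic_loss}). This $P$ is positive definite for $\omega\ge 0$, which makes $\mathcal{L}_{hybrid}$ an instance of the lemma.

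\textbf{Minimizer.} Condition on $(\tau^{\mathbf{v}}_t,t)$; the condition $C$ can be carried along throughout. The network output $\tau^{\mathbf{v}}_\theta(\tau^{\mathbf{v}}_t,t)$ is then a fixed vector $u$, so it suffices to minimize $g(u)=\mathbb{E}[D_P(u,\tau^{\mathbf{v}}_0)\mid\tau^{\mathbf{v}}_t]$ pointwise. Let $m=\mathbb{E}[\tau^{\mathbf{v}}_0\mid\tau^{\mathbf{v}}_t]$. The bias–variance decomposition gives
\[
g(u)=(u-m)^\top P(u-m)+\mathbb{E}\big[(\tau^{\mathbf{v}}_0-m)^\top P(\tau^{\mathbf{v}}_0-m)\mid\tau^{\mathbf{v}}_t\big],
\]
because the cross term $2(u-m)^\top P\,\mathbb{E}[m-\tau^{\mathbf{v}}_0\mid\tau^{\mathbf{v}}_t]$ vanishes. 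Since $P\succ0$, the first term is zero only at $u=m$, and the second term does not depend on $u$. Hence $u=m$ is the unique minimizer (up to measure-zero sets) for almost every $(\tau^{\mathbf{v}}_t,t)$. This is the general Bregman fact of \citet{banerjee2005clustering}: the mean minimizes $\mathbb{E}\,D_\phi(u,X)$ when the data occupy the second argument. Integrating over $(\tau^{\mathbf{v}}_t,t)$ shows the population loss is minimized by $\tau^{\mathbf{v},\star}_\theta=m$. The one technical step to make rigorous is exchanging pointwise and global minimization. This needs a finite second moment of $\tau^{\mathbf{v}}_0$ and a sufficiently expressive function class, which I will assume.

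\textbf{Tweedie step.} The forward process is Gaussian, $q_{t0}(\tau_t\mid\tau_0)=\mathcal{N}(\alpha_t\tau_0,\sigma_t^2 I)$, so Tweedie's formula applies:
\[
\nabla\log q_t(\tau_t)=\frac{\alpha_t\,\mathbb{E}[\tau_0\mid\tau_t]-\tau_t}{\sigma_t^2}.
\]
Plugging $\tau^{\mathbf{v},\star}_\theta$ into this expression therefore recovers the exact marginal score in Eq.~(\ref{eq:reverse}).

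The main subtlety is that the diffusion runs on the velocity representation, so the score recovered is that of the velocity-space marginal. The waypoint term only changes the metric through $P$, not the minimizer. That is precisely why any positive-definite quadratic reweighting is harmless, whereas non-quadratic terms such as $L_1$ losses or collision penalties change the minimizer: an $L_1$ loss, for example, yields a conditional median rather than the mean.
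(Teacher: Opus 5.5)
Your proposal is correct and follows the paper's proof essentially step for step: the same generator $u^{\top}Pu$ with gradient $2Pu$, pointwise minimization after conditioning on $\tau^{\mathbf{v}}_t$ (your completing-the-square decomposition is equivalent to the paper's first-order condition plus the Hessian $2P\succ 0$), and the same Tweedie identity. One side remark needs correcting: the general Bregman fact of Banerjee et al.\ is that the mean minimizes $\mathbb{E}\,D_\phi(X,u)$ with the data in the \emph{first} argument, whereas with the data in the second argument a general Bregman divergence is minimized by $(\nabla\phi)^{-1}(\mathbb{E}[\nabla\phi(X)])$, so only the symmetry of the quadratic $D_P$ makes the argument order irrelevant here (your direct computation does not rely on this remark).
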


With Lemma~\ref{lem:bregman} in hand, it suffices to verify that the hybrid loss in Eq.~(\ref{eq:hybrid_loss}) takes the form of a positive-definite quadratic Bregman divergence. Please see Appendix~\ref{app:proof_general_score} for proof of Theorem~\ref{thm:general_score_matching}.
\begin{conclusionbox}
\begin{restatable}{theorem}{GeneralScoreMatching}
    \label{thm:general_score_matching}
    The hybrid loss in Eq.~(\ref{eq:hybrid_loss}) is equivalent to a score matching loss under $\mathit{P}$-norm:
    \begin{equation}
        \mathcal{L}_{hybrid} = \mathbb{E}_{\tau^\mathbf{v}_0, \epsilon, t}[||\tau^\mathbf{v}_\theta - \tau^\mathbf{v}_0||_\mathit{P}^2],
    \end{equation}
    where $\mathit{P} = I + \Delta t^2 \cdot \omega M^TM$ is positive-definite. The minimizer of the loss recovers the marginal score function in Eq.~(\ref{eq:reverse}).
\end{restatable}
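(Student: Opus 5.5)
The plan is to expand both terms of the hybrid loss as quadratic forms in the residual $e := \tau^\mathbf{v}_\theta - \tau^\mathbf{v}_0$ and collect them into a single form. By linearity of expectation and the second line of Eq.~(\ref{eq:kinematic_loss}), the waypoint term is $\mathbb{E}\,\|\Delta t\, M e\|_2^2 = \mathbb{E}\,[\Delta t^2\, e^\top M^\top M e]$, while the velocity term is $\mathbb{E}\,[e^\top I e]$. Adding them with weight $\omega$ gives
\begin{equation*}
\mathcal{L}_{hybrid} = \mathbb{E}_{\tau^\mathbf{v}_0,\epsilon,t}\big[e^\top (I + \omega\Delta t^2 M^\top M)\, e\big] = \mathbb{E}_{\tau^\mathbf{v}_0,\epsilon,t}\big[\|\tau^\mathbf{v}_\theta-\tau^\mathbf{v}_0\|_P^2\big],
\end{equation*}
which is the claimed identity.

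For the velocity tensor of shape $L\times 2$, I interpret $M$ as acting along the time axis, i.e.\ as $M\otimes I_2$ on the vectorized trajectory; this changes nothing in the algebra.

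Next I verify $P\succ 0$, assuming $\omega\ge 0$. For any $u\neq 0$ we have $u^\top P u = \|u\|^2 + \omega\Delta t^2\|Mu\|^2 \ge \|u\|^2 > 0$. Since $M^\top M$ is positive semidefinite, $P$ is positive definite. Invertibility of $M$ (it is lower triangular with unit diagonal) would even give strict positivity of the second term, but this is not needed.

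Finally, I invoke Lemma~\ref{lem:bregman} with this $P$. Since $P$ does not depend on $\theta$, $t$, or the data, the minimizer over functions of $(\tau^\mathbf{v}_t,t,C)$ is the conditional expectation $\mathbb{E}[\tau^\mathbf{v}_0\mid\tau^\mathbf{v}_t]$. One can also see this directly: pointwise in $\tau^\mathbf{v}_t$, the objective equals $\mathbb{E}\big[\|\tau^\mathbf{v}_0-\mathbb{E}[\tau^\mathbf{v}_0\mid \tau^\mathbf{v}_t]\|_P^2\mid\tau^\mathbf{v}_t\big] + \|\tau^\mathbf{v}_\theta-\mathbb{E}[\tau^\mathbf{v}_0\mid\tau^\mathbf{v}_t]\|_P^2$, because the cross term vanishes. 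Tweedie's formula then gives
\begin{equation*}
\nabla\log q_t(\tau_t) = \frac{\alpha_t\,\mathbb{E}[\tau_0\mid\tau_t]-\tau_t}{\sigma_t^2},
\end{equation*}
which recovers the marginal score in Eq.~(\ref{eq:reverse}).

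The only delicate point is the last step, ensuring that the noising is applied to the velocity representation $\tau^\mathbf{v}$, so that Tweedie's formula is applied in the same variables. The algebra itself is routine.
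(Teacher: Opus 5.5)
Your proposal is correct and follows the paper's proof essentially step for step. Like the paper, you rewrite both terms as quadratic forms in the residual to obtain $P = I + \omega\Delta t^2 M^\top M$, check that $z^\top P z = \|z\|^2 + \omega\Delta t^2\|Mz\|^2 > 0$, and then invoke Lemma~\ref{lem:bregman} together with Tweedie's formula. Your explicit bias--variance decomposition and your note that $\omega \ge 0$ is needed are correct additions that the paper leaves implicit.
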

\end{conclusionbox}

\begin{remark}[Generality of $M$]
\label{rem:generality_M}
Theorem~\ref{thm:general_score_matching} only requires $M^{\top}\!M\succeq 0$, which holds for any matrix $M$, so $\mathcal{L}_{hybrid}$ remains a valid score matching objective under any choice of kinematic-coupling matrix. The lower-triangular integration matrix in Eq.~(\ref{eq:kinematic_loss}) is the specific instantiation we adopt for its direct physical interpretation as velocity-to-waypoint integration.
\end{remark}

\begin{remark}[Bias of non-Bregman losses]
\label{rem:non_bregman_bias}
The $L_{1}$-norm losses~\citep{diffusiondrive} (not strictly convex) and auxiliary planning losses~\citep{jiang2023vad} (depend on the prediction) used in AD are not Bregman divergences. By Lemma~\ref{lem:bregman}, their minimizers do not coincide with $\mathbb{E}[\tau^{\mathbf{v}}_{0}\mid\tau^{\mathbf{v}}_{t}]$, yielding biased score estimators.
\end{remark}

% It is worth noting that many existing studies adopt L1-norm loss ~\citep{diffusiondrive} or auxiliary planning loss~\citep{jiang2023vad} (e.g., collision loss) in diffusion-based AD tasks, which will lead to a biased score function that does not faithfully reflect the data distribution. (\zyn{add appendix discussion})

% directly applying the widely adopted 1-norm loss~\citep{diffusiondrive} or auxiliary planning loss~\citep{jiang2023vad} (e.g., collision loss) in autonomous driving will lead to a biased score function that does not faithfully reflect the data distribution See Appendix for the proof and more discussion \tanty{appendix}. 

% In practice, the integration in $\mathcal{L}_{waypoint}$ could result in gradient accumulation with future timesteps. To avoid the imbalanced gradient distribution over future predictions, we limit the gradient backpropagation to a temporal window of size $W$ by detaching the trajectory history beyond this horizon (see Algorithm~\ref{alg:hybrid_loss} in Appendix~\ref{app:c} for detailed implementation). This stop-gradient operation preserves the forward value of $\mathcal{L}_{hybrid}$ and therefore does not alter the loss landscape characterized by Theorem~\ref{thm:general_score_matching}; it serves purely as a training-stabilization.

% as illustrated in a simple implementation in Algorithm~\ref{alg:hybrid_loss}.

\textbf{Detached Integral.} The integration in $\mathcal{L}_{waypoint}$ accumulates gradients along the temporal axis, causing imbalanced supervision across future timesteps. We mitigate this by restricting gradient backpropagation through the integration to a temporal window of $W$ steps, achieved via stop-gradient on the trajectory history beyond this horizon. This preserves the forward value of $\mathcal{L}_{hybrid}$ and does not alter the minimizer characterized by Theorem~\ref{thm:general_score_matching}; it serves purely as training stabilization. The matrix-form expression $\hat{\tau}^{\mathbf{x}}_{\theta} = M_W \tau^{\mathbf{v}}_{\theta}\Delta t + \mathrm{sg}((M-M_W)\tau^{\mathbf{v}}_{\theta}\Delta t)$ and full implementation are deferred to Appendix~\ref{app:hybrid loss}.

Our proposed hybrid loss enables substantial performance improvement in closed-loop real-vehicle testing. As shown in Figure~\ref{fig:vel_wpt_vel_hybrid}, training the diffusion model with the hybrid loss improves all closed-loop metrics, outperforming solely using waypoint and velocity representation by a large margin. This shows that the hybrid loss indeed effectively combines the merits of both representations, capturing the overall vehicle motion trend while preserving the kinematic coherence.

% We further plotted the $v$-$t$ curve of a trajectory generated by a model trained with the hybrid loss in Figure~\ref{fig:vel_wpt_vel_hybrid}. The predicted trajectory effectively combines the strengths of both representations, capturing the overall motion trend while preserving the dynamical properties.
% \tanty{need a subsection conclusion}
% \vspace{-10pt}
\subsection{Multimodal Capability and Data Scaling}
\label{sec:datascaling}

% To further study the generative capability of diffusion models, we perform an additional analysis on training data scaling. Starting from hundreds of thousands of frames, which is a typical capacity of existing benchmarks~\citep{Dauner2024NEURIPS}, we enlarge the data size to a maximum of approximately 70 million frames \tanty{ref Figure 5}. \tanty{to be determined}
Diffusion models are renowned for their multimodal generation capabilities. However, existing diffusion-based planning models~\citep{zheng2025diffusionplanner, tan2025flow} often suffer from severe mode collapse on AD benchmarks~\citep{caesar2021nuplan,Dauner2024NEURIPS}. To investigate the reasons for this discrepancy, as well as examine the multimodal capability and scalability of our proposed framework, we conduct a series of controlled data scaling experiments, spanning from 100K to over 70M real-vehicle training frames. By comparison, existing mainstream E2E AD benchmarks like NAVSIM~\citep{Dauner2024NEURIPS} only contain 100K training data. We evaluate our model's multimodal generation capability in Figure~\ref{fig:divergence_scaling_curve}, \ref{fig:multimodal_vis}, as well as its open- and closed-loop scaling performance in Figure~\ref{fig:data_scaling}.

% In contrast, our model demonstrates high-quality multimodal generation. To investigate the reasons for this discrepancy, we performed a data scaling experiment. Starting from hundreds of thousands of frames, which is a typical capacity of existing benchmarks~\citep{Dauner2024NEURIPS}, we gradually enlarge the data size to a maximum of approximately 20 million frames. We evaluated the model's multimodal generation capability, and the results are shown in Figure~\ref{fig:data_scaling_curve}, and the open-loop visualization in Figure~\ref{fig:data_scaling_vis}. 
\begin{figure*}[t]
    \centering

    \begin{minipage}[t]{0.415\linewidth}
        \centering
        \includegraphics[width=\linewidth]{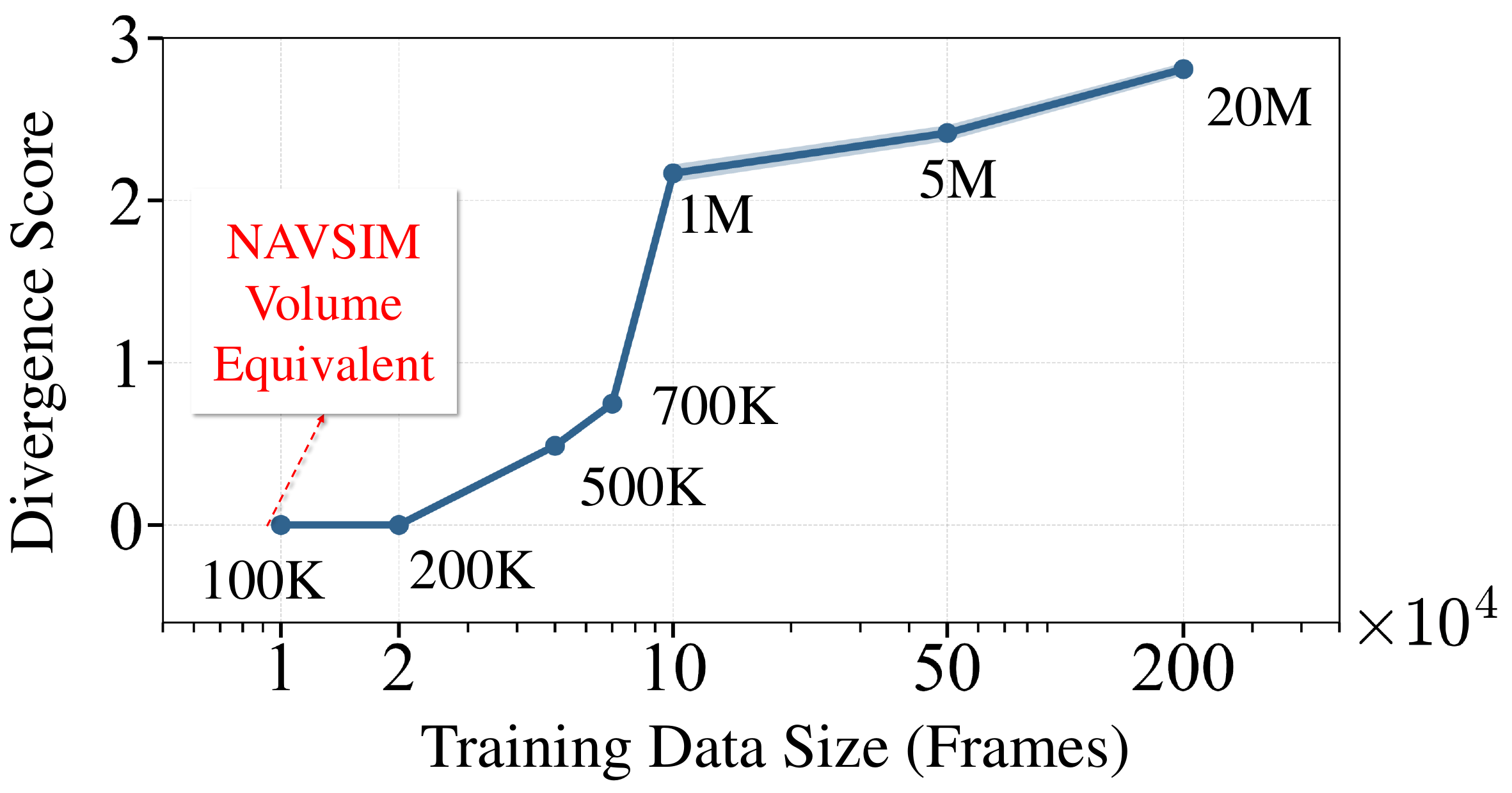}
        \captionof{figure}{\small Divergence score evaluated under different training data sizes.}
        \label{fig:divergence_scaling_curve}
    \end{minipage}
    \hspace{0.02\linewidth}
    \begin{minipage}[t]{0.535\linewidth}
        \centering
        \includegraphics[width=\linewidth]{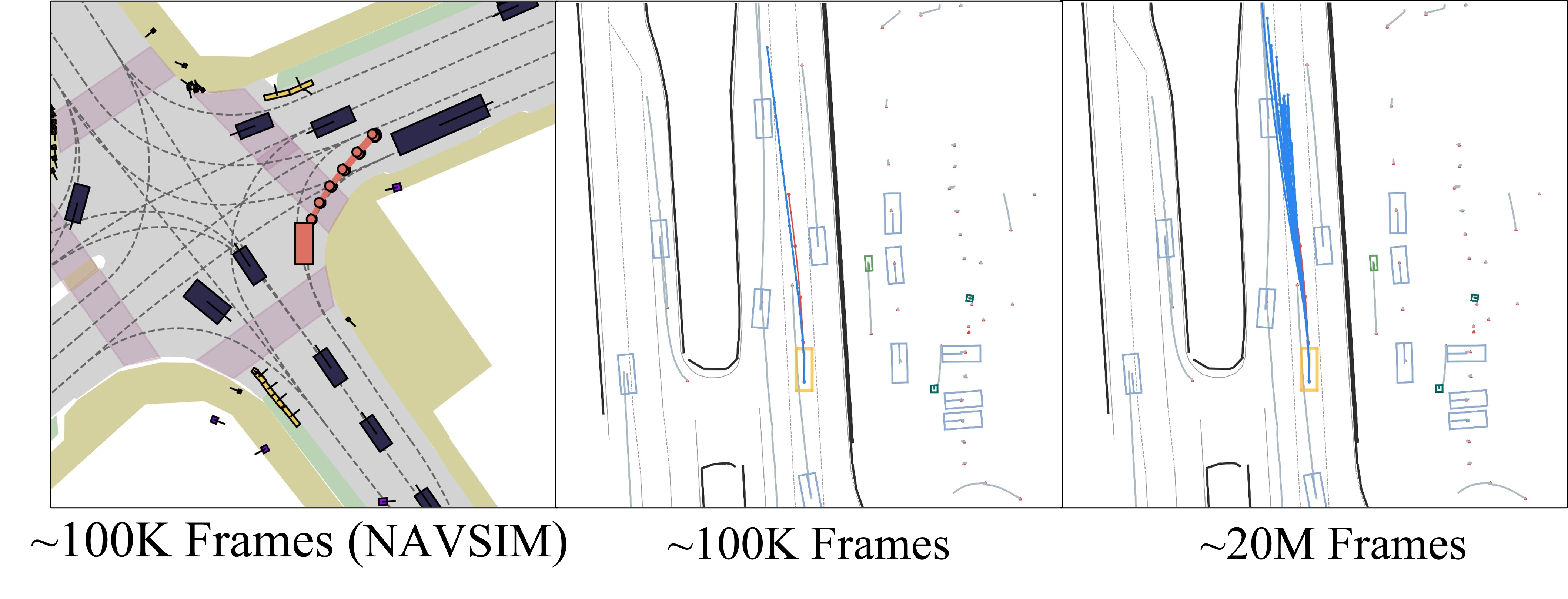}
        \captionof{figure}{\small Visualization of planning trajectories obtained under different data sources and dataset sizes.}
        \label{fig:multimodal_vis}
    \end{minipage}

    \vspace{-10pt}
\end{figure*}

\textbf{Multimodal Generation Capability}. We train our model with data from 100K (NAVSIM equivalent) to 20M frames, and use the \textit{Trajectory Divergence} introduced in Section~\ref{subsec:metrics} to measure multimodal generation.
\begin{wrapfigure}{r}{0.35\textwidth}
    \centering
    \vspace{-6pt}
    \includegraphics[width=\linewidth]{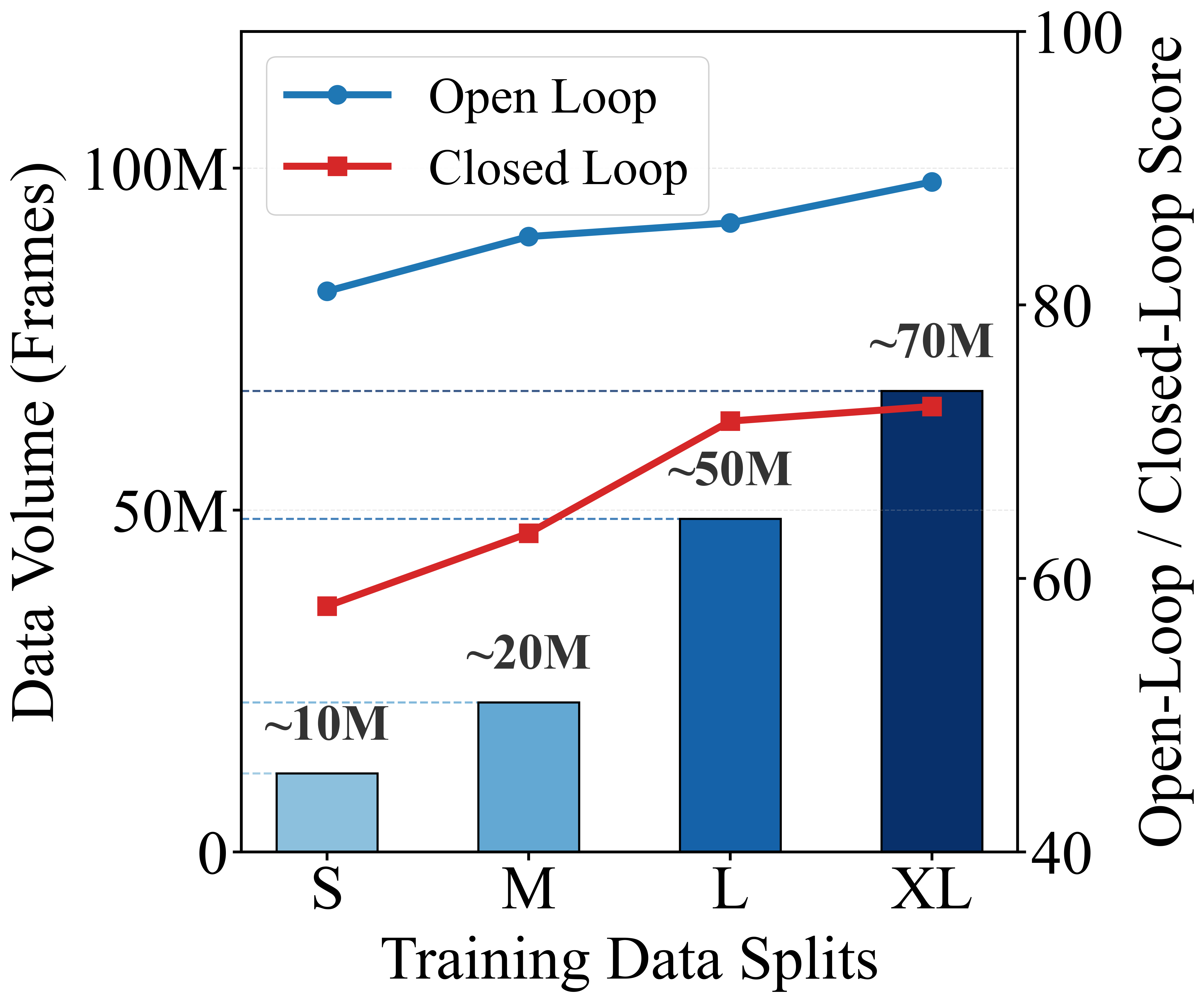}
    \caption{\small Data scaling experiments. Both open- and closed-loop performance improve as data scales up.}
    \label{fig:data_scaling}
    \vspace{-15pt}
\end{wrapfigure}
The results are shown in Figure~\ref{fig:divergence_scaling_curve}. It is observed that the model exhibits negligible multimodal capability when trained on 100K frames of data, consistent with the mode collapse observation in existing AD benchmarks. However, as the training frames increase, the divergence score grows rapidly, suggesting the emergence of multimodal behavior and enhanced generalization performance.
% and continues to increase even after the training data exceeds 1M. 
This can also be verified by inspecting the generated planning trajectories in Figure~\ref{fig:multimodal_vis}, that the generated planning trajectories exhibit clear multimodal behavior when trained with 20M frames of data, whereas all trajectories collapse to a single mode when trained on only 100K frames. Our finding is consistent with the theoretical results in \citet{zhang2023emergence}, that diffusion models need sufficient training data for generalization. It also demonstrates that diffusion models can capture multimodal behavior in diverse driving scenarios with proper scaling of training data, even without prior knowledge or bias, such as anchor~\citep{diffusiondrive} or goal conditioning~\citep{xing2025goalflow}.

% At the beginning, the model exhibits negligible multi-modal generation capability when trained on 100,000 frames of data, which is consistent with the phenomenon of mode collapse in open benchmarks. As the number of frames increases, however, the divergence grows rapidly, and continues to increase even after the number of frames of training data exceeds 1 million. These results demonstrate that the diffusion models can capture multi-modal behavior in diverse driving scenarios with proper scaling of training data, even without any prior knowledge or bias, such as anchor~\citep{diffusiondrive} or goal~\citep{xing2025goalflow}.

% \begin{figure}[h]
% \centering
%     \includegraphics[width=0.40\textwidth]{assets/data_scaling_curve.pdf}
%     \caption{The model divergence against the number of frames of training data. The divergence is computed using a group size of 64.}
%     \label{fig:data_scaling_curve}
% \end{figure}

\textbf{Performance Scaling}. We also observe a continuous improvement in both open- and closed-loop performance of our model as the data increases, as shown in Figure~\ref{fig:data_scaling}. By simply increasing the number of training data from 10M to 70M frames, the model's closed-loop performance increased by more than $20\%$, and open-loop increased by $10\%$, indicating a data scaling property on real vehicles. This demonstrates the huge potential of our methods for large-scale industrial-level applications. However we do not find prominent closed-loop improvement for regression model with data scaling, as shown in Appendix~\ref{app:ilresults}.
% verifying the scaling law of real vehicle performance. 

% \begin{figure}[h]
% \centering
%     \includegraphics[width=0.40\textwidth]{assets/data_distribution.png}
%     \caption{Numbers of frames of training data splits.}
%     \label{fig:train_data_splits}
% \end{figure}

% \begin{figure*}[t]
% \centering
%     \begin{minipage}[t]{\linewidth}
%         \includegraphics[width=0.24\textwidth]{assets/real/real_9a.png}
%         \includegraphics[width=0.24\textwidth]{assets/real/real_9b.png}
%         \includegraphics[width=0.24\textwidth]{assets/real/real_9c.png}
%         \includegraphics[width=0.24\textwidth]{assets/real/real_9d.png}
%     \end{minipage}
%     \caption{Place holder.}
%     \label{fig:real_il_vs_rl}
% \end{figure*}

\section{Reinforcement Learning Post-training}
\label{sec:rl}

% In this section, we introduce a Reinforcement Learning (RL) stage that complements imitation learning with reward signals to further enhance the model's behavioral performance.

% \subsection{Diffusion-Based Reinforcement Learning}
\textbf{KL-Regularized RL}. We adopt standard RL notation conventions, formulating the diffusion-based planner as the policy $\pi(a|s)$. Specifically, the action $a$ corresponds to the generated trajectory $\tau^{\mathbf{v}}_0$, and the state $s$ is the driving scene. The diffusion model takes as input the encoded scene $C := \mathrm{Enc}(s)$ from the scene encoder of Section~\ref{sec:basemodel}. We consider an fine-tuning setting where, at iteration $k$, we aim to optimize the policy $\pi^k$ to maximize the expected reward $r(s,a)$, starting from the previous policy $\pi^{k-1}$:
\begin{align}
\label{eq:optim}
\max_{\pi^k} \; \mathbb{E}_{s\sim \mathcal{D}}\!\left[\,\mathbb{E}_{a \sim \pi^k(\cdot\mid s)}\!\left[r(s, a)\right] - \frac{1}{\beta}\, D_\text{KL}\!\big(\pi^k(\cdot\!\mid\! s)\,\big\|\,\pi^{k-1}(\cdot\!\mid\! s)\big)\,\right],
\end{align}
where $\mathcal{D}$ is the replay buffer, $\beta > 0$ is the temperature, and $D_\text{KL}\left(p \| q\right) = \mathbb{E}_{x \sim p}\left[\log \left(p(x)/q(x) \right) \right]$. The KL-regularized objective in Eq.~(\ref{eq:optim}) provides a closed-form solution for $\pi^k$ as follows~\citep{nair2020awac}:
\begin{equation}
\label{eq:closedform}
    \pi^{k^\star}(a\mid s) \propto \pi^{k-1}(a\mid s)\cdot \exp\!\left(\beta r(s,a) \right).
\end{equation}
To extract the optimal policy in Eq.~(\ref{eq:closedform}), one approach is to use classifier guidance to steer the diffusion process toward generating high-reward actions during inference~\citep{lu2023contrastive, zheng2025diffusionplanner}. However, this method requires additional inference-time gradient computation, which is very costly and difficult to implement on real vehicles. An alternative approach is to employ a weighted regression loss based on the diffusion imitation loss~\citep{black2023training, liang2026dipole, zheng2024safe}. Specialized to our setting—$\tau_0$-prediction loss on the velocity-based trajectory representation (so that $a \equiv \tau^{\mathbf{v}}_0$ and $\tau^{\mathbf{v}}_t = \alpha_t \tau^{\mathbf{v}}_0 + \sigma_t \epsilon$)—it reads
\begin{align}
\label{eq:awr}
\mathcal{L}_{RL}=\mathbb{E}_{t,\,\epsilon,\,(s,\,\tau^{\mathbf{v}}_0)\sim \mathcal{D}}\!\left[ \exp\!\left(\beta\, r(s,\tau^{\mathbf{v}}_0) \right) \big\lVert\tau^{\mathbf{v};k}_\theta(\tau^{\mathbf{v}}_t, t, C) - \tau^{\mathbf{v}}_0 \big\rVert_2^2\right],
\end{align}
where $\tau^{\mathbf{v};k}_\theta$ denotes the parameterized diffusion model corresponding to the policy $\pi^{k}$, and the actions $\tau^{\mathbf{v}}_0$ in $\mathcal{D}$ are drawn from the previous-iteration policy $\pi^{k-1}$. The weighted regression loss in Eq.~(\ref{eq:awr}) only modifies the imitation loss with a weight term, maintaining almost the same computational cost as IL. In contrast, other methods model the denoising process as a multi-step MDP with Gaussian transitions to estimate intermediate log-likelihoods~\citep{black2023training, ren2024diffusion}, and use RL algorithms like PPO~\citep{schulman2017proximal} for policy optimization. However, these approaches require storing gradients for all denoising steps during inference and assume a large number of steps to ensure Gaussian transition validity, leading to significantly increased computational cost. We further provide a detailed discussion in Appendix~\ref{app:rlresults}.

\textbf{RL-Hybrid Loss}. To maintain consistency with the hybrid loss defined in Eq.~(\ref{eq:hybrid_loss}) used during imitation pretraining, we introduce the RL-hybrid loss for the post-training phase. This consistency loss design helps mitigate distribution shift, which is a common problem in the offline-to-online fine-tuning setting~\citep{levine2020offline}. We provide further results in Section~\ref{sec:rl_results}.
\begin{align}
\label{eq:awr_hybrid}
    \mathcal{L}_{RL\text{-}hybrid} = \mathbb{E}_{t,\,\epsilon,\,(s,\,\tau^{\mathbf{v}}_0)\sim\mathcal{D}}\!\left[\exp\!\left(\beta\, r(s,\tau^{\mathbf{v}}_0) \right) \big\lVert \tau^{\mathbf{v};k}_\theta(\tau^{\mathbf{v}}_t, t, C) - \tau^{\mathbf{v}}_0 \big\rVert_{\mathit{P}}^{2}\right].
\end{align}
Besides, we prove that the hybrid loss can be naturally combined during the RL post-training procedure to optimize the policy, due to its simple formulation as a weighted regression, as shown in Theorem~\ref{theorem:fisor}. Proof see Appendix~\ref{app:proof_awr_hybrid}.

\begin{conclusionbox}
\begin{restatable}{theorem}{OptimalPolicy}
    \label{theorem:fisor}
    Optimal action $a \sim \pi^{k^\star}(a|s)$ in Eq.~(\ref{eq:closedform}) can be generated by optimizing the weighted diffusion loss in Eq.~(\ref{eq:awr_hybrid}) and solving the diffusion reverse process with the learned $\tau^{\mathbf{v};k^\star}_\theta$.
\end{restatable}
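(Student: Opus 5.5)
The plan is to show that the weighted loss in Eq.~(\ref{eq:awr_hybrid}) is exactly the hybrid loss of Theorem~\ref{thm:general_score_matching} applied to the \emph{reweighted} target distribution $\pi^{k^\star}$. The minimizer is then the $\tau_0$-predictor of $\pi^{k^\star}$, and Lemma~\ref{lem:bregman} together with Tweedie's formula delivers the correct score.

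\textbf{Step 1: absorb the weight into the distribution.} Fix a state $s$, and hence $C=\mathrm{Enc}(s)$. Since $\tau^{\mathbf{v}}_0\sim\pi^{k-1}(\cdot\mid s)$ in $\mathcal{D}$, I would write the inner expectation as an integral against $\pi^{k-1}(a\mid s)\exp(\beta r(s,a))$. Let $Z(s)=\int \pi^{k-1}(a\mid s)\exp(\beta r(s,a))\,{\rm d}a$, which I will assume is finite. Then
\[
\mathcal{L}_{RL\text{-}hybrid}=\mathbb{E}_{s}\Big[Z(s)\,\mathbb{E}_{t,\epsilon,\,a\sim\pi^{k^\star}(\cdot\mid s)}\big\lVert \tau^{\mathbf{v};k}_\theta(\alpha_t a+\sigma_t\epsilon,t,C)-a\big\rVert_P^2\Big],
\]
using the closed form in Eq.~(\ref{eq:closedform}). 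The noising kernel $q_{t0}$ does not depend on the data distribution, so the joint law of $(a,\tau^{\mathbf{v}}_t)$ inside is precisely the forward process started from $\pi^{k^\star}(\cdot\mid s)$.

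\textbf{Step 2: pointwise minimization.} Because the network takes $C$ as input, it can be optimized separately for each $(s,t,\tau^{\mathbf{v}}_t)$. The positive factor $Z(s)$ changes neither the minimizer nor the fact that the objective is a conditional regression. Theorem~\ref{thm:general_score_matching} gives that $P=I+\omega\Delta t^2M^\top M\succ0$. Lemma~\ref{lem:bregman} applied to the distribution $\pi^{k^\star}(\cdot\mid s)$ then yields the unique minimizer
\[
\tau^{\mathbf{v};k^\star}_\theta(\tau^{\mathbf{v}}_t,t,C)=\mathbb{E}_{\pi^{k^\star}}[\tau^{\mathbf{v}}_0\mid \tau^{\mathbf{v}}_t,s].
\]

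\textbf{Step 3: recover the score and the samples.} Tweedie's formula gives $\nabla\log q^{k^\star}_t(\tau^{\mathbf{v}}_t\mid s)=(\alpha_t\mathbb{E}[\tau^{\mathbf{v}}_0\mid\tau^{\mathbf{v}}_t]-\tau^{\mathbf{v}}_t)/\sigma_t^2$, where $q^{k^\star}_t$ is the marginal of the forward process initialized at $\pi^{k^\star}$. Substituting this score into Eq.~(\ref{eq:reverse}) and integrating from $t=1$ to $t=0$ produces samples whose marginal at $t=0$ is $\pi^{k^\star}(\cdot\mid s)$. This uses the standard fact that the probability-flow ODE preserves the marginals, together with the assumption that $q_1\approx\mathcal{N}(0,\mathbf{I})$ also holds for the reweighted distribution.

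\textbf{Main obstacle.} The hardest step is justifying Step 1 rigorously. It requires $\mathcal{D}$ to be sampled from $\pi^{k-1}$ (an on-policy replay buffer) and $Z(s)<\infty$ for, say, bounded rewards. It also requires the state-dependent factor $Z(s)$ not to bias the minimizer across states; this holds only because the minimization is pointwise in $C$. That in turn assumes a sufficiently expressive network in the nonparametric sense, and I would state this as an explicit assumption.
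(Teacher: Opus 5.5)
Your proposal is correct and follows essentially the same route as the paper: absorb $\exp(\beta r)$ into $\pi^{k-1}$ via the normalizer $Z$, so that Eq.~(\ref{eq:awr_hybrid}) becomes $Z$ times the $P$-norm hybrid loss under $\pi^{k^\star}$, then invoke Theorem~\ref{thm:general_score_matching} (with Lemma~\ref{lem:bregman} and Tweedie's formula) to conclude that the learned predictor yields the score of $\pi^{k^\star}$. Your explicit handling of the state-dependent $Z(s)$ through pointwise minimization in $C$, under an expressiveness assumption, is a more careful version of the paper's step, which suppresses the state and treats $Z$ as a constant.
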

\end{conclusionbox}
% \subsection{Practical Implementation}

In practice, we initialize the policy $\pi^0$ for RL post-training using an imitation model pretrained with the hybrid loss in Eq.~(\ref{eq:hybrid_loss}). We first consider the safety reward $r_\text{safety}$, which captures collision risk and yields a safety-aware policy. To further enhance performance, our framework naturally extends to a multi-reward setting: we refine the safety signal into a continuous risk score $r_\text{risk}$, and additionally include $r_\text{follow}$ for car-following comfort and $r_\text{lane}$ for lane-keeping robustness. The training reward $r$ used in Eq.~(\ref{eq:awr_hybrid}) is then a weighted sum of these components. See Appendix~\ref{ap:implementation}, \ref{app:rewards} for more implementation details.

\begin{table*}[t]
\centering
\caption{\small Main results. \colorbox{mine}{\color{mine}{a}} represents the highest score. The open-loop score is evaluated through data replay on test datasets, while the closed-loop score is obtained from real-world road testing on a real-vehicle platform.}
\vspace{-5pt}
\resizebox{\linewidth}{!}{\scriptsize
\begin{tabular}{l c c c c c} \toprule 
\multirow{2}{*}[-0.8ex]{\makecell[l]{\textbf{Model Name}}} & \multirow{2}{*}[-0.8ex]{\makecell[l]{\textbf{Data Size}}} & \multirow{2}{*}[-0.8ex]{\makecell[l]{\textbf{Open-Loop Score}}} & \multicolumn{3}{c}{\textbf{Closed-Loop Score}} \\ \cmidrule(lr){4-6} 
& & & \textbf{Success Rate} & \textbf{Stability Score} & \textbf{Overall Score} \\ \midrule
Base Model & M& 51.07 & 15.67 & 0.00 & 7.83 \\
with $\tau_0$-loss \& $\tau_0$-pred & M& 75.27 & 22.84 & 0.00 & 11.42 \\
+ Velocity Supervision & M& 84.38 & 34.72 & 9.24 & 21.98 \\
+ Hybrid Loss & M & 85.05 & 61.88 & 53.88 & 57.88 \\ \midrule
+ Data Scaling & L& 86.07 & 70.59 & 59.00 &64.79 \\
+ Data Scaling (HDP) & XL& \colorbox{mine}{88.94} & 71.24 & {79.53} & 75.38 \\ \midrule
+ RL with safety reward (HDP-RL$^\dagger$) & -& - & {72.89} & {79.53} & {76.20}\\
+ RL with multi-rewards (HDP-RL)  & -& - & \colorbox{mine}{83.49} & \colorbox{mine}{84.65} & \colorbox{mine}{84.07}\\
\bottomrule \end{tabular}}
\label{tab:mainresults}
\vspace{-5pt}
\end{table*}

\begin{figure}
    \centering
    \includegraphics[width=\linewidth]{assets/scaling_closed_loop.png}
    \vspace{-10pt}
    \caption{\small (a) Relative success rate vs.\ dataset volume in frequently-occurring scenarios; (b) stability score under the same scaling; (c) relative success rate before and after RL fine-tuning in safety-critical scenarios.}
    \label{fig:closed_loop_scale_and_rl}
    \vspace{-5pt}
\end{figure}

\section{Real-Vehicle Testing Results}
\label{sec:exp}

Given the above findings and designs for diffusion-based planning methods for E2E AD, we incorporate all the aforementioned innovations into a complete framework, \textit{\textbf{H}yper \textbf{D}iffusion \textbf{P}lanner}  (\textit{\name{}}). We begin with the base model introduced in Section~\ref{sec:basemodel}, which uses $\epsilon$-loss and $\epsilon$-pred (Base Model).  In Section~\ref{sec:diffusion_loss}, we find that using $\tau_0$-pred and $\tau_0$-loss achieves the best trajectory quality among other diffusion loss variants (with $\tau_0$-loss \& $\tau_0$-pred). Afterwards, in Section~\ref{sec:trajectoryrepresentation}, we discover that using velocity as a supervision signal performs better than using waypoints (+ Velocity Supervision). Combining both improvements, we introduce a hybrid loss function (+ Hybrid Loss). Furthermore, we scale up the dataset in Section~\ref{sec:datascaling} from the original 20M samples to 50M (+ Data Scaling / L) and 70M (+ Data Scaling / XL), resulting in the final version of \textit{\name{}}. Applying the RL method in Section~\ref{sec:rl} with only $r_{\text{safety}}$ gives \textit{\name{}-RL$^\dagger$}, while extending to the multi-reward setting gives \textit{\name{}-RL}. Implementation details are deferred to Appendix~\ref{ap:implementation}. As shown in Figure~\ref{fig:real_case}, our model handles complex real-world urban driving scenarios well; more cases are provided in Appendix~\ref{app:a}.

\subsection{Imitation Learning Pretraining Lays a Strong Foundation}

As shown in Table~\ref{tab:mainresults}. \textit{\name{}} achieves nearly a 10x improvement in closed-loop performance compared to the base model. For the open-loop setting, during imitation pretraining, it shows that a well-designed loss function and data scaling can steadily improve performance. However, a significant improvement in the closed-loop score is observed only after applying the hybrid loss, highlighting the difference between open-loop and closed-loop metrics. The key insight is that the hybrid loss greatly enhances stability, allowing the model to have a higher probability of completing each task, thereby achieving an overall noticeable improvement. In addition, when scaling up the data, we show the relative success rate on frequent scenarios, as illustrated in Figure~\ref{fig:closed_loop_scale_and_rl}. 
On XL, we observe a trade-off: ``Navigational lane change'' improves by +18.9 while ``Car-following with stopping'' drops by 6.2, suggesting that the model reallocates capacity toward more complex behaviors as data scales up.
As shown in Figure~\ref{fig:closed_loop_scale_and_rl}, we also observe a significant gain in the stability score, including both centering performance and speed compliance, indicating that the model better captures the underlying data distribution when trained on larger datasets. To further demonstrate the effectiveness of our method, we provide more comparison against baseline methods under real-vehicle setting in Appendix~\ref{app:ilresults}.

\begin{figure*}[t]
\centering
    \begin{minipage}[t]{0.49\linewidth}
        \includegraphics[width=0.49\textwidth]{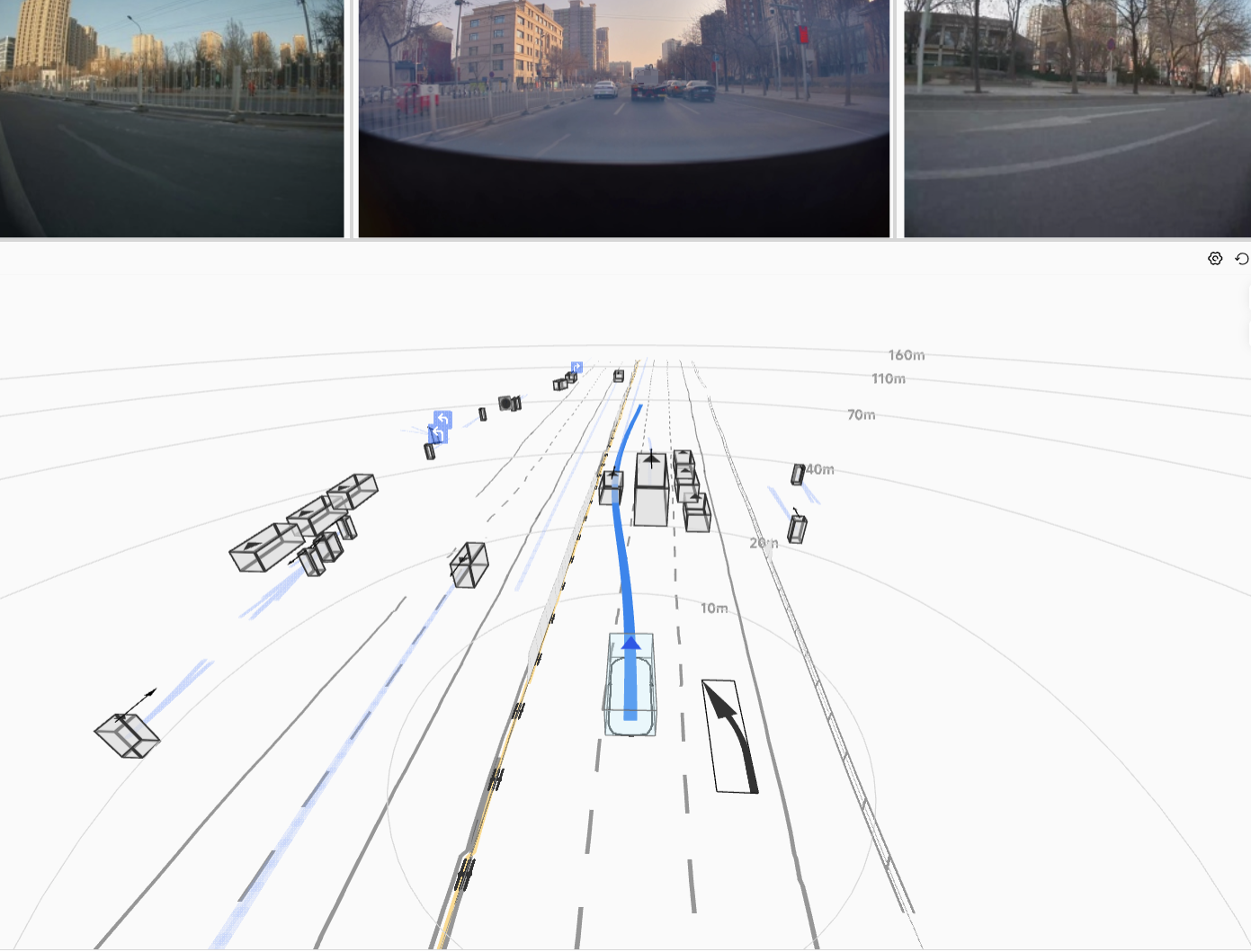}
        \includegraphics[width=0.49\textwidth]{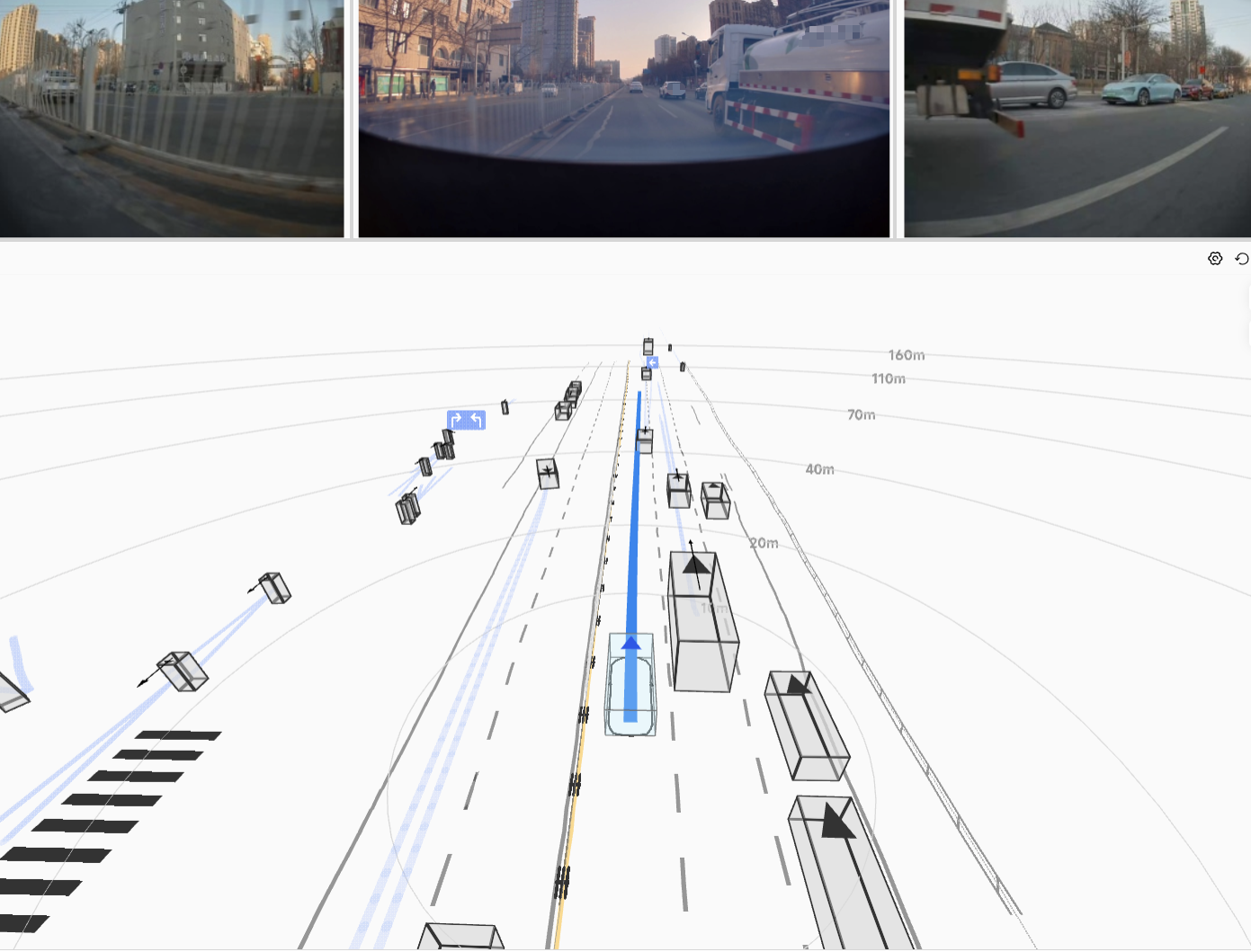}
        \subcaption{Efficient Lane Change.} \label{fig:efficientchange} 
    \end{minipage}
    \begin{minipage}[t]{0.49\linewidth}
        \includegraphics[width=0.49\textwidth]{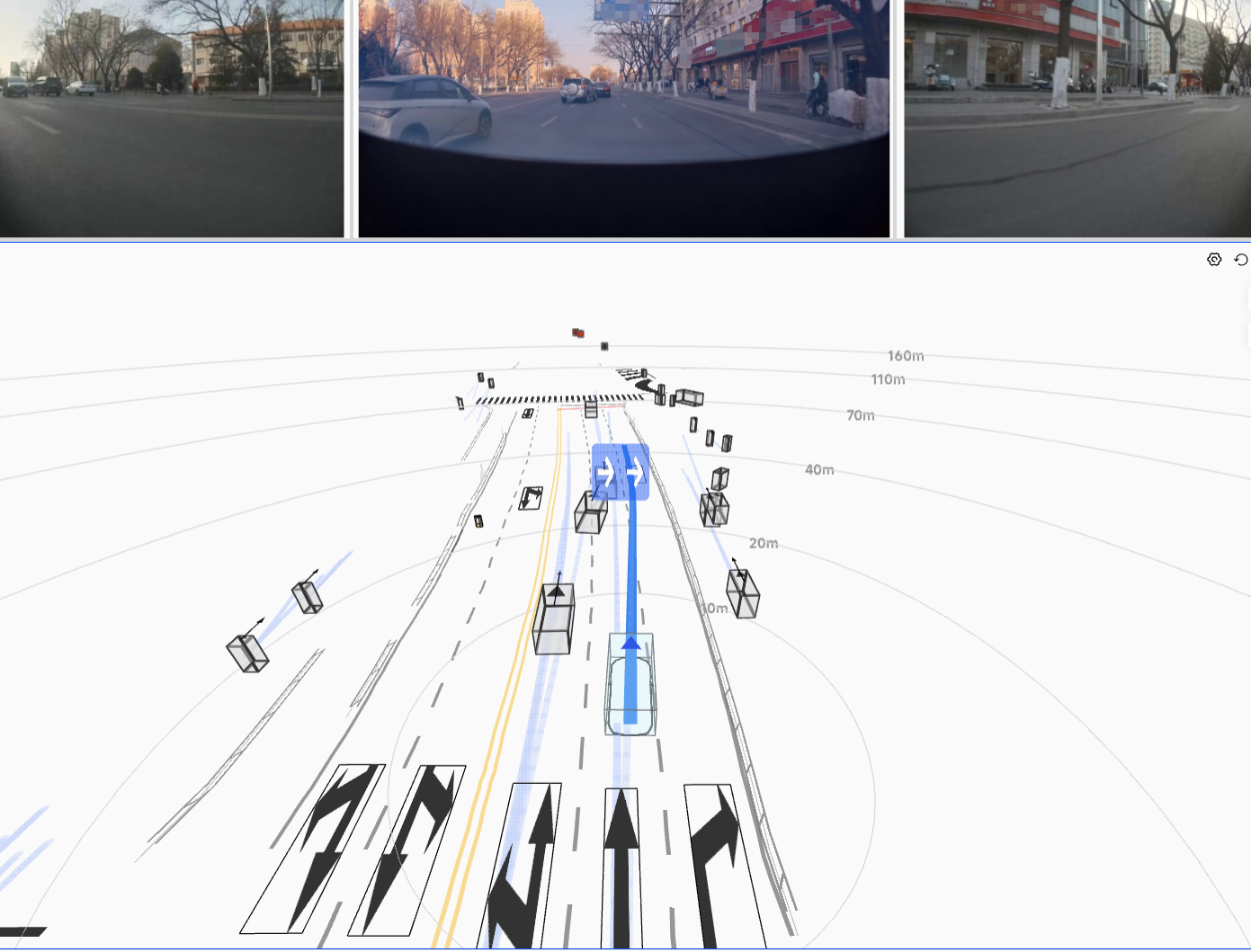}
        \includegraphics[width=0.49\textwidth]{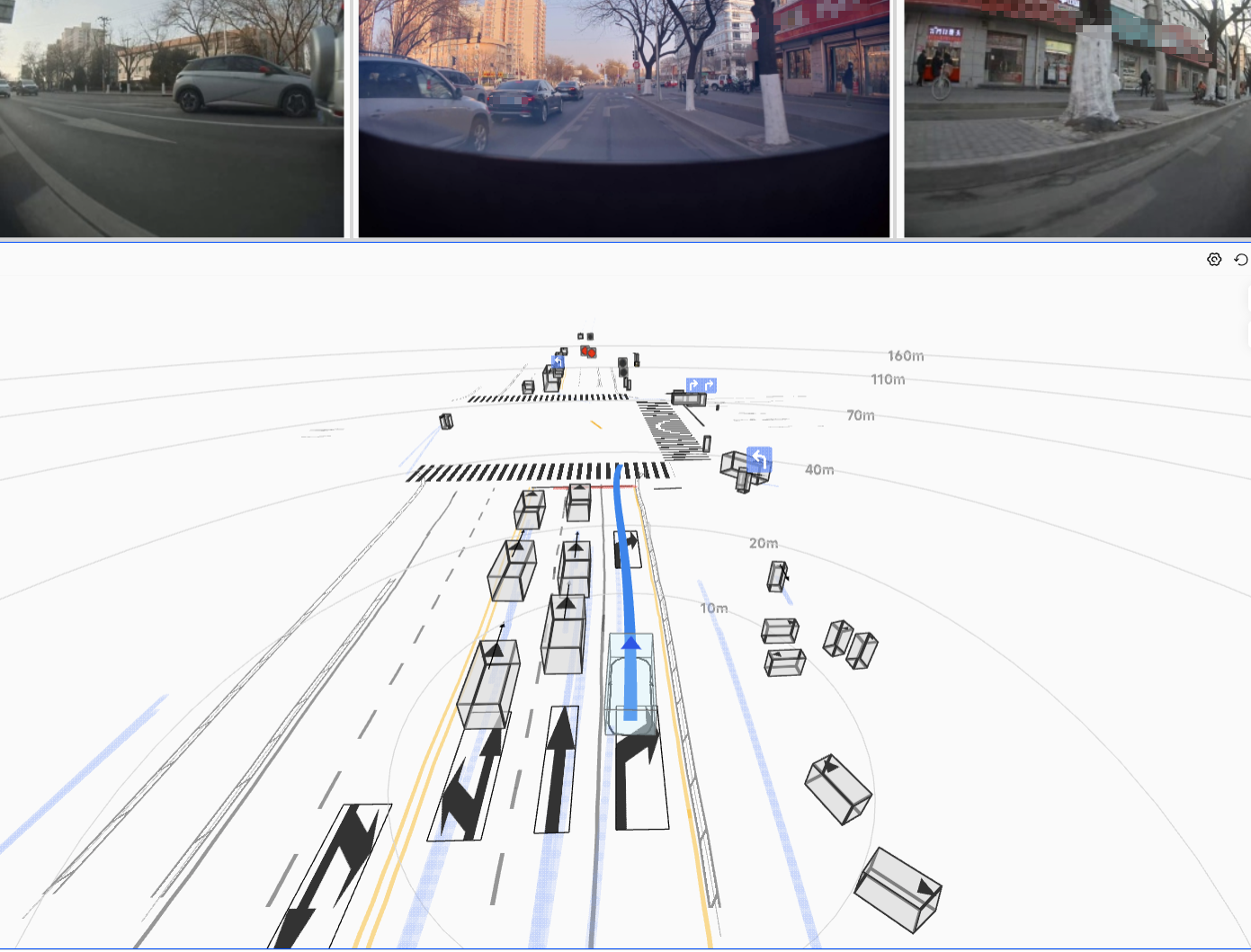}
        \subcaption{Navigational Lane Change.} \label{fig:navichange} 
    \end{minipage}

    \begin{minipage}[t]{0.49\linewidth}
        \includegraphics[width=0.49\textwidth]{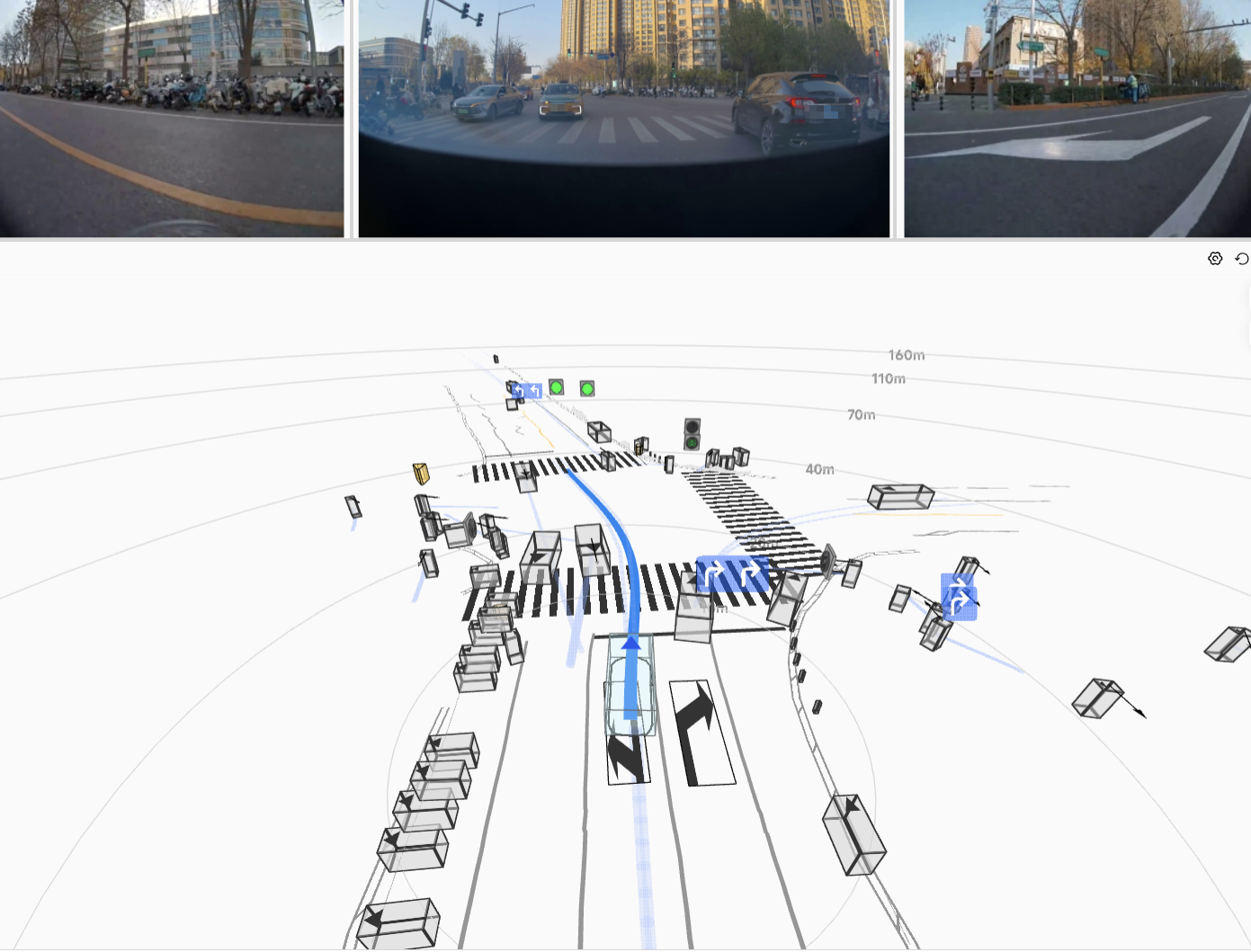}
        \includegraphics[width=0.49\textwidth]{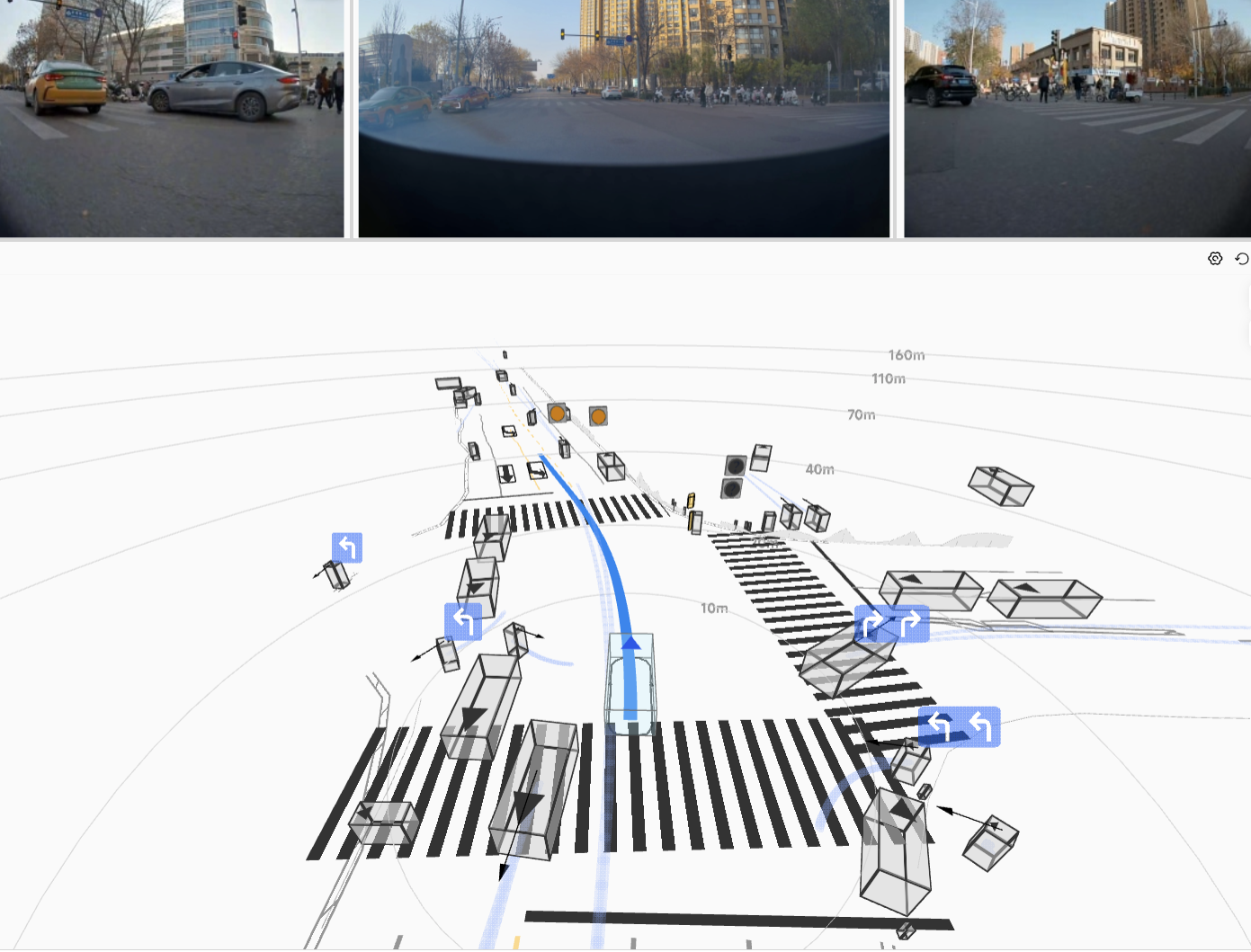}
        \subcaption{Vehicle avoidance at intersection.} \label{fig:avoidance} 
    \end{minipage}
    \begin{minipage}[t]{0.49\linewidth}
        \includegraphics[width=0.49\textwidth]{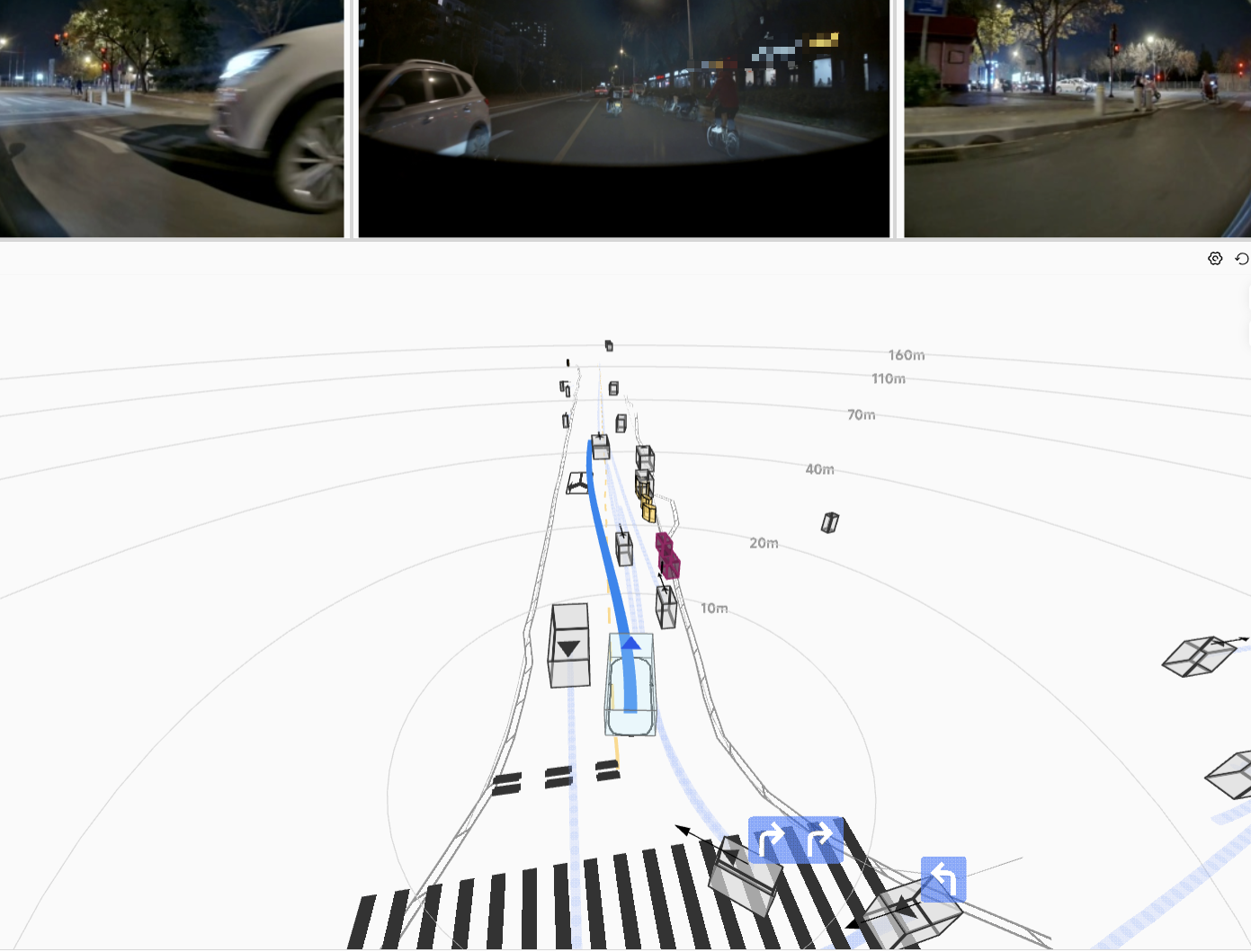}
        \includegraphics[width=0.49\textwidth]{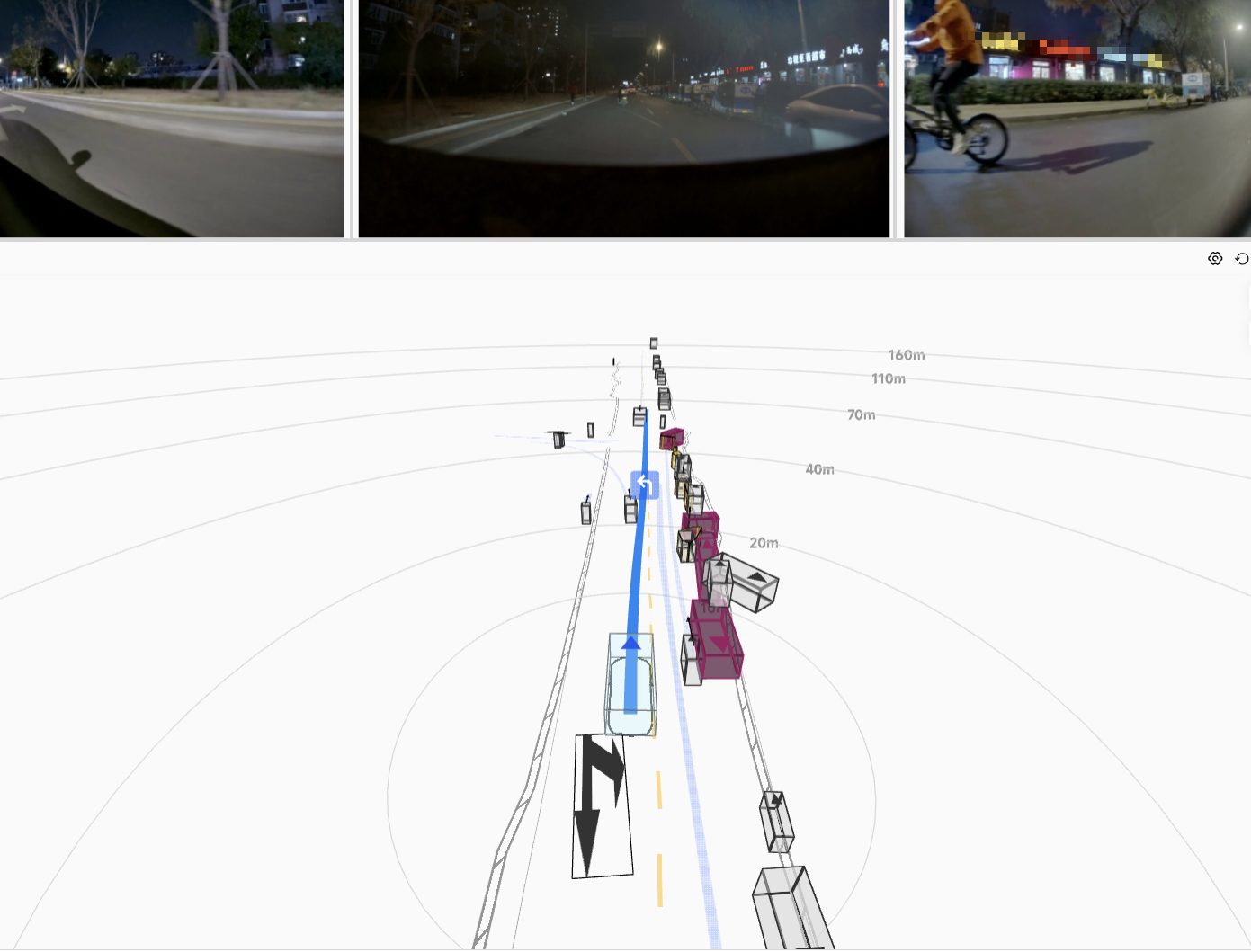}
        \subcaption{VRU avoidance.} \label{fig:vru} 
    \end{minipage}
    \caption{\small Closed-loop real-vehicle testing, illustrated with two representative frames.}
    \label{fig:real_case}
    \vspace{-10pt}
\end{figure*}

\subsection{Reinforcement Learning Post-Training Yields a Reliable Planner}
\label{sec:rl_results}

\begin{wrapfigure}{r}{0.45\linewidth}
    \centering
    \vspace{-8pt}
    % 第一张图
    \begin{minipage}[b]{\linewidth}
        \centering
        \includegraphics[width=\linewidth]{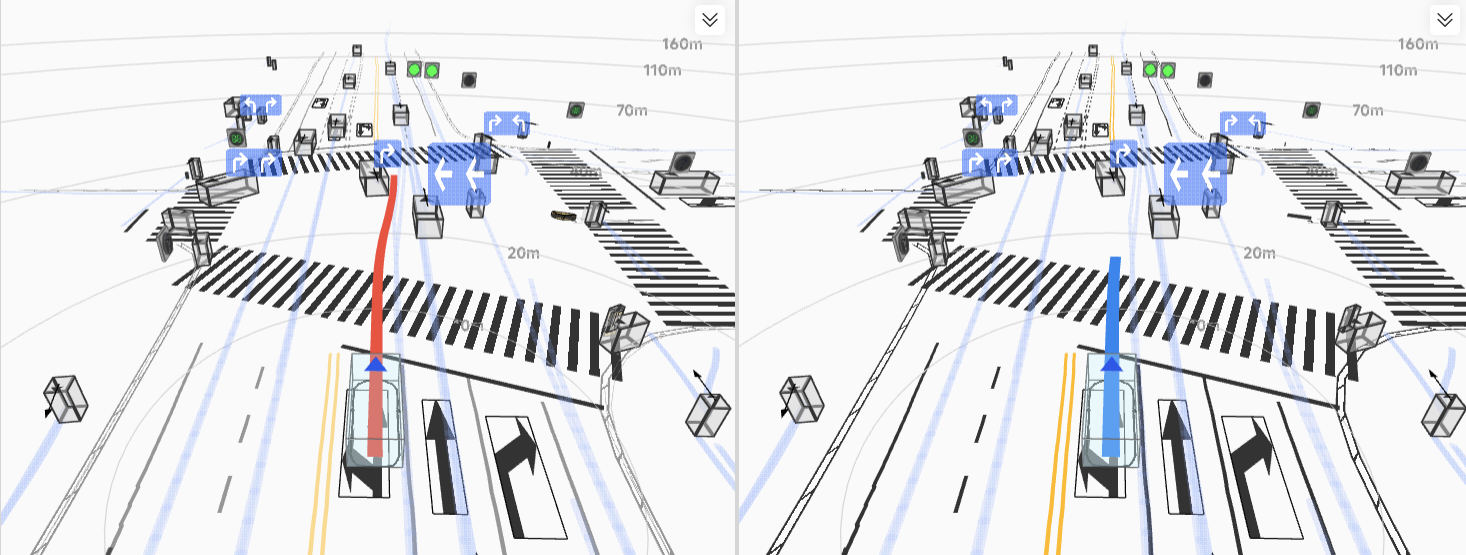}
        \subcaption{Avoid oncoming vehicles.}
        \label{fig:avoidoncoming}
    \end{minipage}
    \vspace{2pt}
    % 第二张图
    \begin{minipage}[b]{\linewidth}
        \centering
        \includegraphics[width=\linewidth]{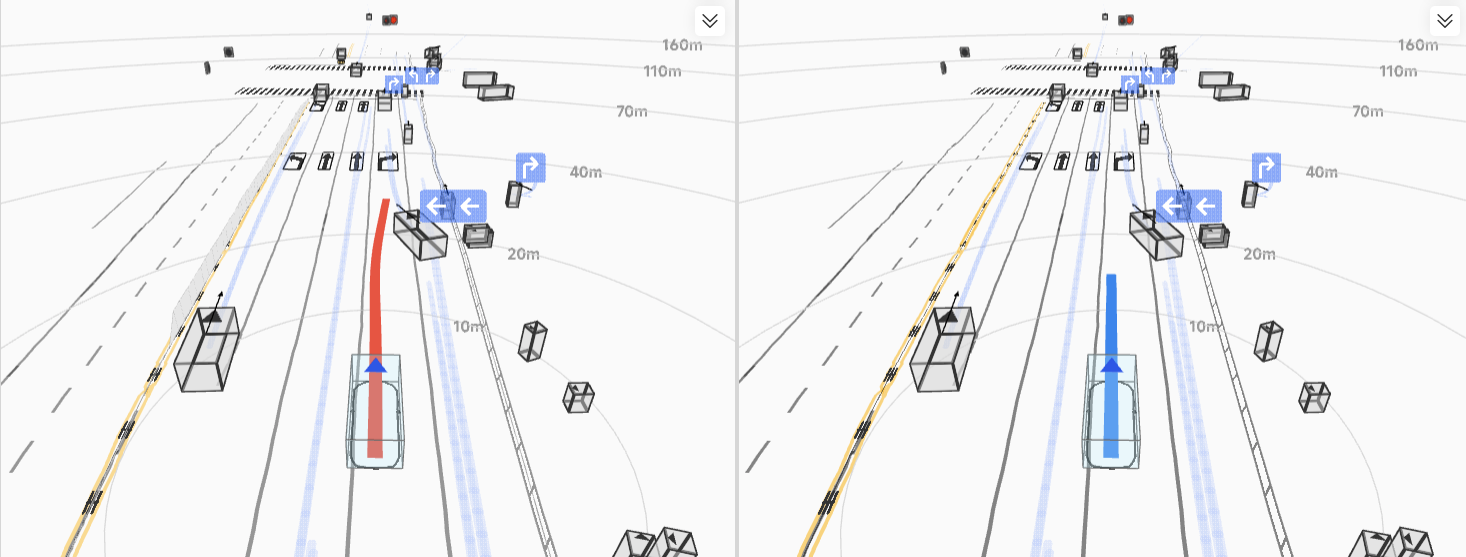}
        \subcaption{Avoid cutting-in vehicles.}
        \label{fig:avoidcutin}
    \end{minipage}
    \caption{\small Replay of bad cases before and after post-training. \textit{\name{}-RL$^\dagger$} in 
    \textcolor{ego_pred_blue}{blue}, \textit{\name{}} in 
    \textcolor{fut_gt_red}{red}.}
    \label{fig:rlcase}
    \vspace{-10pt}
\end{wrapfigure}

\textbf{RL with Safety Reward}. As shown in Figure~\ref{fig:closed_loop_scale_and_rl} and 
Table~\ref{tab:mainresults}, \textit{\name{}-RL$^\dagger$} substantially 
improves safety-related metrics. 
Figure~\ref{fig:rlcase} further illustrates the effect of RL post-training through two representative cases. In Figure~\ref{fig:rlcase}(a), when an oncoming vehicle slightly intrudes into the ego lane, \textit{\name{}-RL$^\dagger$} proactively steers laterally away to enlarge the safety margin, whereas \textit{\name{}} maintains a centered trajectory and leaves much narrower clearance. In Figure~\ref{fig:rlcase}(b), when a vehicle abruptly cuts in from the adjacent lane, \textit{\name{}-RL$^\dagger$} smoothly yields space to the cut-in vehicle, while \textit{\name{}} stays closer to the original path and passes the intruder at a notably smaller distance. These two cases consistently show that \textit{\name{}-RL$^\dagger$} produces safer and more defensive maneuvers around surrounding traffic participants, confirming the effectiveness of our RL algorithm. 

\textbf{RL with Multi-Rewards}. However, with only $r_\text{safety}$, the overall 
performance does not improve substantially. Since real-world AD is 
inherently a multi-objective problem, we extend the RL stage to the 
multi-reward setting: our best model \textit{\name{}-RL} achieves a 
10-point improvement (Table~\ref{tab:mainresults}) over \textit{\name{}}, showing that RL post-training 
truly works for real-world AD. Moreover, under the same challenging setting, our RL fine-tuning approach serves as 
an effective alternative to baseline RL methods, as shown in the 
Appendix~\ref{app:rlresults}.

\textbf{RL-Hybrid Loss Matters}. We further show that maintaining alignment with imitation pretraining is necessary when using the hybrid loss, as illustrated in Figure~\ref{fig:multi_rewards_hybrid}. In this setup, our \textit{\name{}-RL} employs the RL-hybrid loss, whereas the baseline without RL-hybrid loss uses vanilla weighted regression, as in Eq.~(\ref{eq:awr}). Because the policy is imitation-pretrained with the hybrid loss, preserving it during RL is critical: without it, we observe a substantial performance drop at the beginning of RL, which we attribute to distribution shift. By contrast, \textit{\name{}-RL} exhibits steadily increasing reward, with multiple reward components improving and advancing the Pareto frontier, while closed-loop performance improves as well. We emphasize that forgoing the hybrid loss may be acceptable when imitation pretraining uses the standard diffusion loss; however, as discussed in Section~\ref{sec:trajectoryrepresentation}, the hybrid loss is important for further improving closed-loop performance.

\begin{figure}[t]
    \centering
    \includegraphics[width=\textwidth]{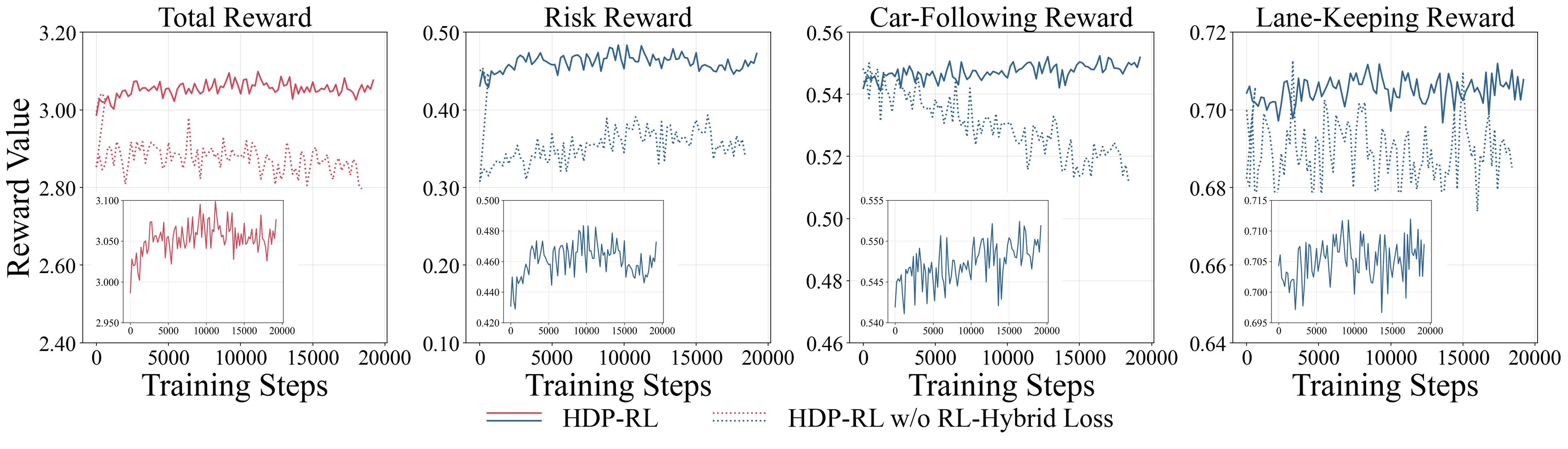}
    \caption{\small Reward versus training steps: HDP-RL with vs. without the RL-hybrid loss in Eq.~(\ref{eq:awr_hybrid}). The subgraph is the image after rescaling the coordinates.}
    \label{fig:multi_rewards_hybrid}
\end{figure}

\section{Conclusion} 
\label{sec:conclusion}

In this paper, we introduce the \textit{\textbf{H}yper \textbf{D}iffusion \textbf{P}lanner} (\textit{\name{}}), a novel framework that effectively harnesses the generative capabilities of diffusion models for E2E AD. Through comprehensive and controlled studies, we identify key insights into the diffusion loss space, trajectory representation, and data scaling, revealing their critical impact on E2E planning performance. Furthermore, we integrate an effective RL post-training strategy to enhance the safety and robustness of the learned planner. \textit{\name{}} is deployed on a real-vehicle platform and validated across 6 urban driving scenarios and 200 km of real-world testing, achieving a notable 10x performance improvement over the base diffusion planner. These results demonstrate that diffusion models, when properly designed and trained, serve as effective and scalable solutions for complex, real-world autonomous driving tasks. Due to space limit, more discussion on limitations and future direction can be found in Appendix \ref{ap:limitations}.

\section*{Acknowledgments}
This work is supported by Xiaomi EV and funding from Wuxi Research Institute of Applied Technologies, Tsinghua University under Grant 20242001120 and the Xiongan AI Institute. Furthermore, we would like to thank Zhiming Li, Huahang Liu, Yan Wang, Xuhui Lu, Xiaojun Ni and Guang Li from Xiaomi EV for their resource support and real vehicle deployment support. We would like to express our gratitude to Quanyun Zhou, Qi Tang, Cheng Chen, Xibin Yue, Qing Li from Xiaomi EV for their valuable discussion. In addition, we thank Jianxiong Li and Zhihao Wang from AIR, Tsinghua University, and Kexin Zheng from The University of Hong Kong for their helpful discussions.

\newpage
\bibliographystyle{assets/plainnat}
\bibliography{neurips_2026}
%%%%%%%%%%%%%%%%%%%%%%%%%%%%%%%%%%%%%%%%%%%%%%%%%%%%%%%%%%%%

\appendix

\newpage

\section{Visualization of Real-Vehicle Testing Results}
\label{app:a}
Below we provide more visualizations of real-vehicle testing besides Section~\ref{sec:exp}.

\begin{figure}[H]
\centering
\includegraphics[width=0.24\textwidth]{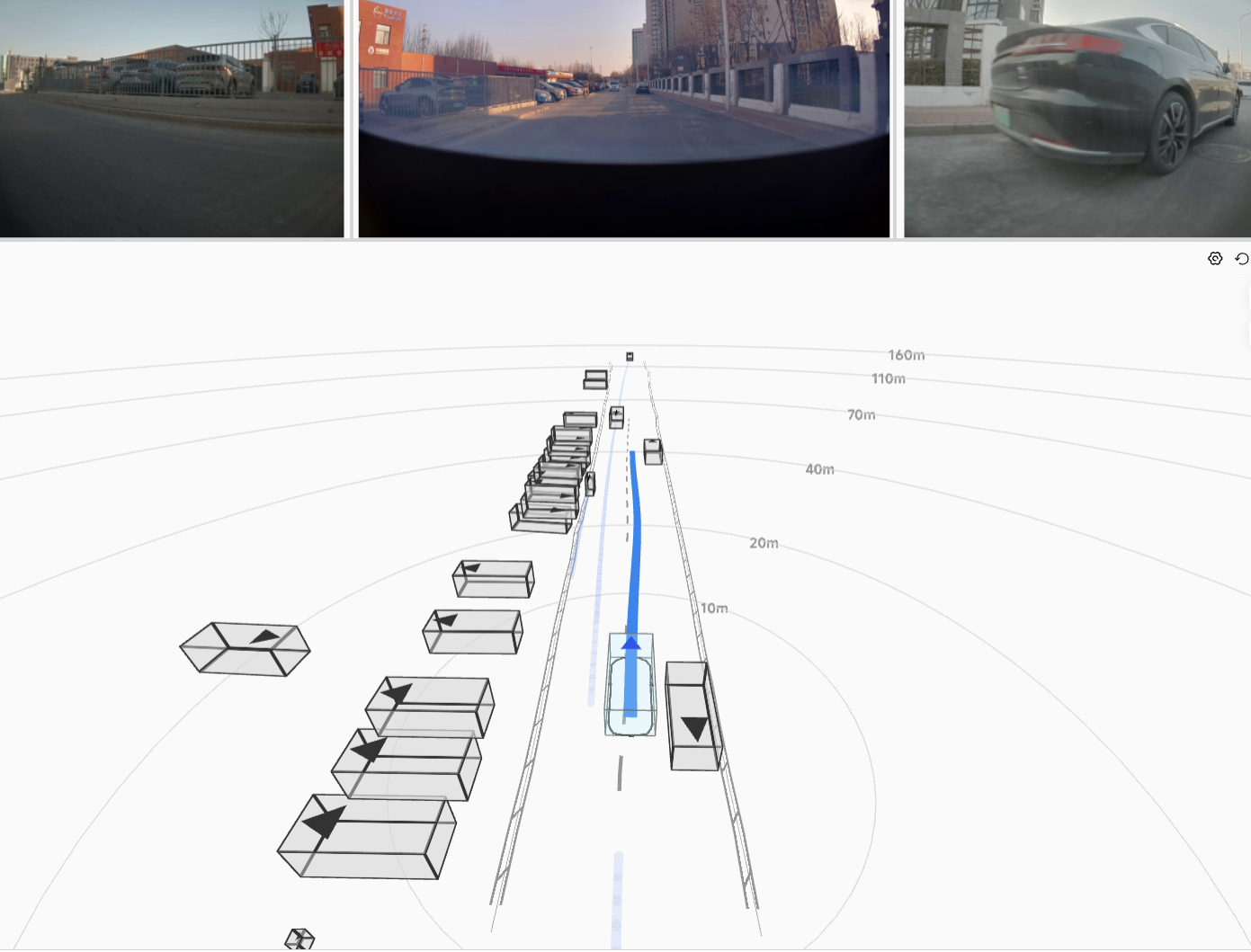}\hfill
\includegraphics[width=0.24\textwidth]{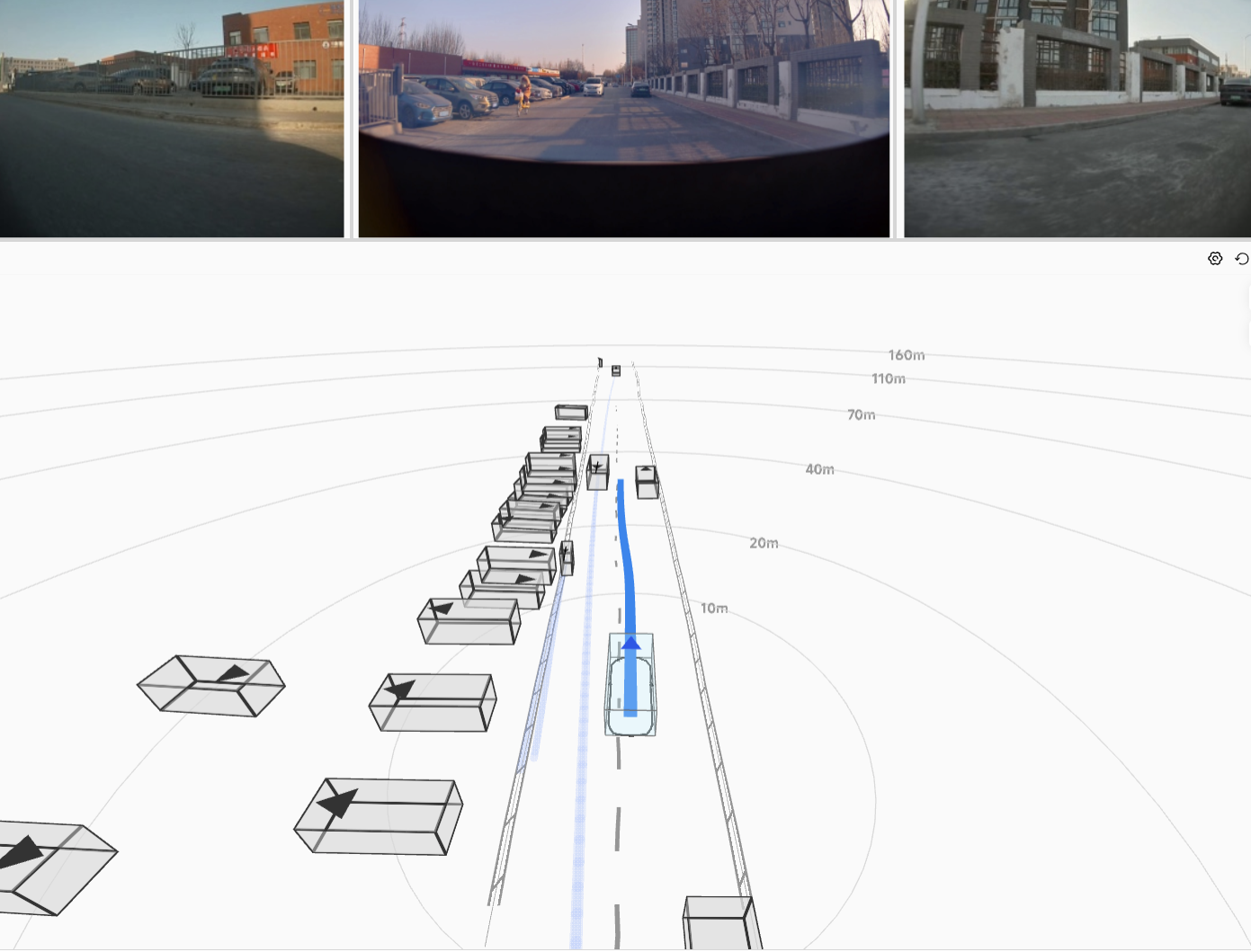}\hfill
\includegraphics[width=0.24\textwidth]{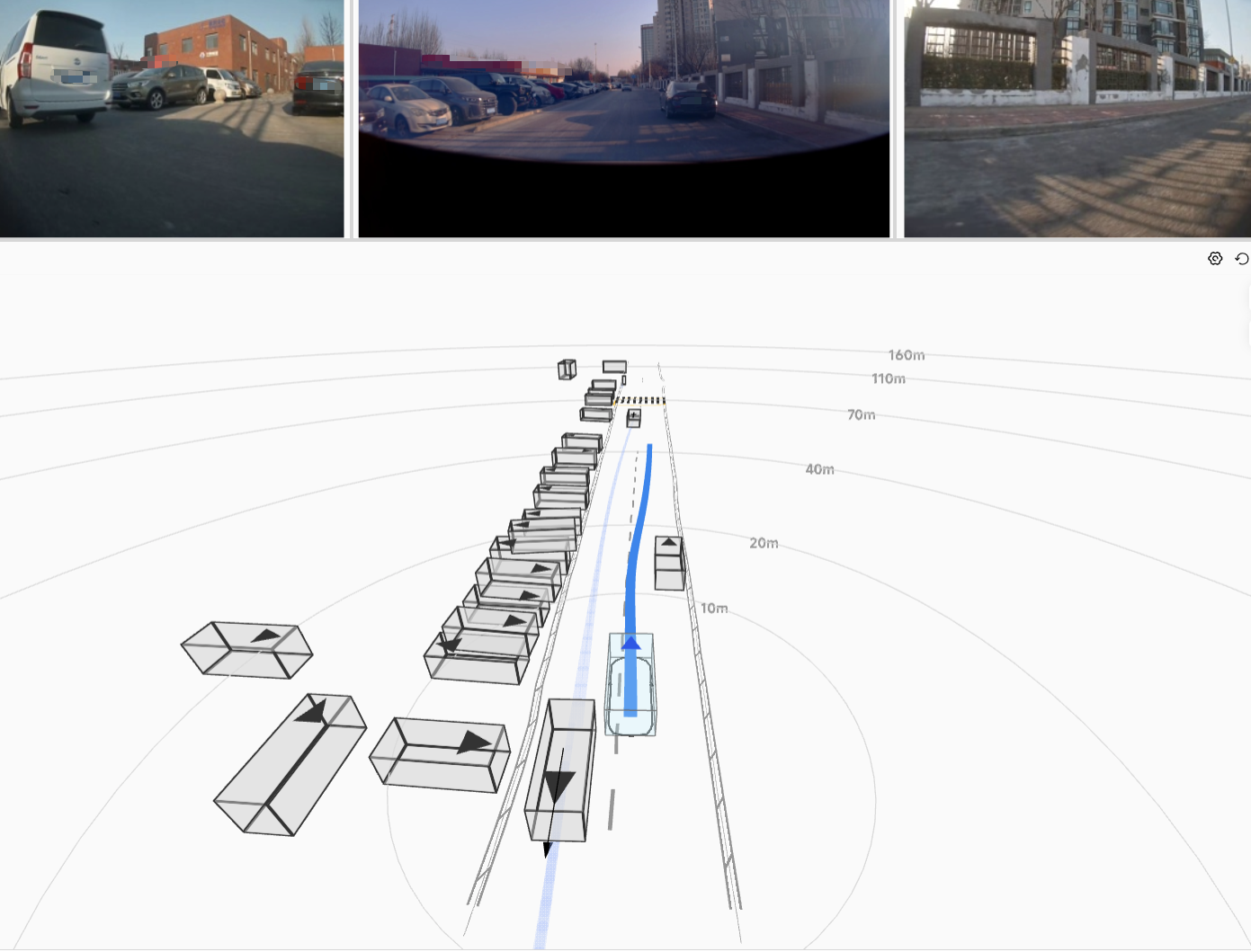}\hfill
\includegraphics[width=0.24\textwidth]{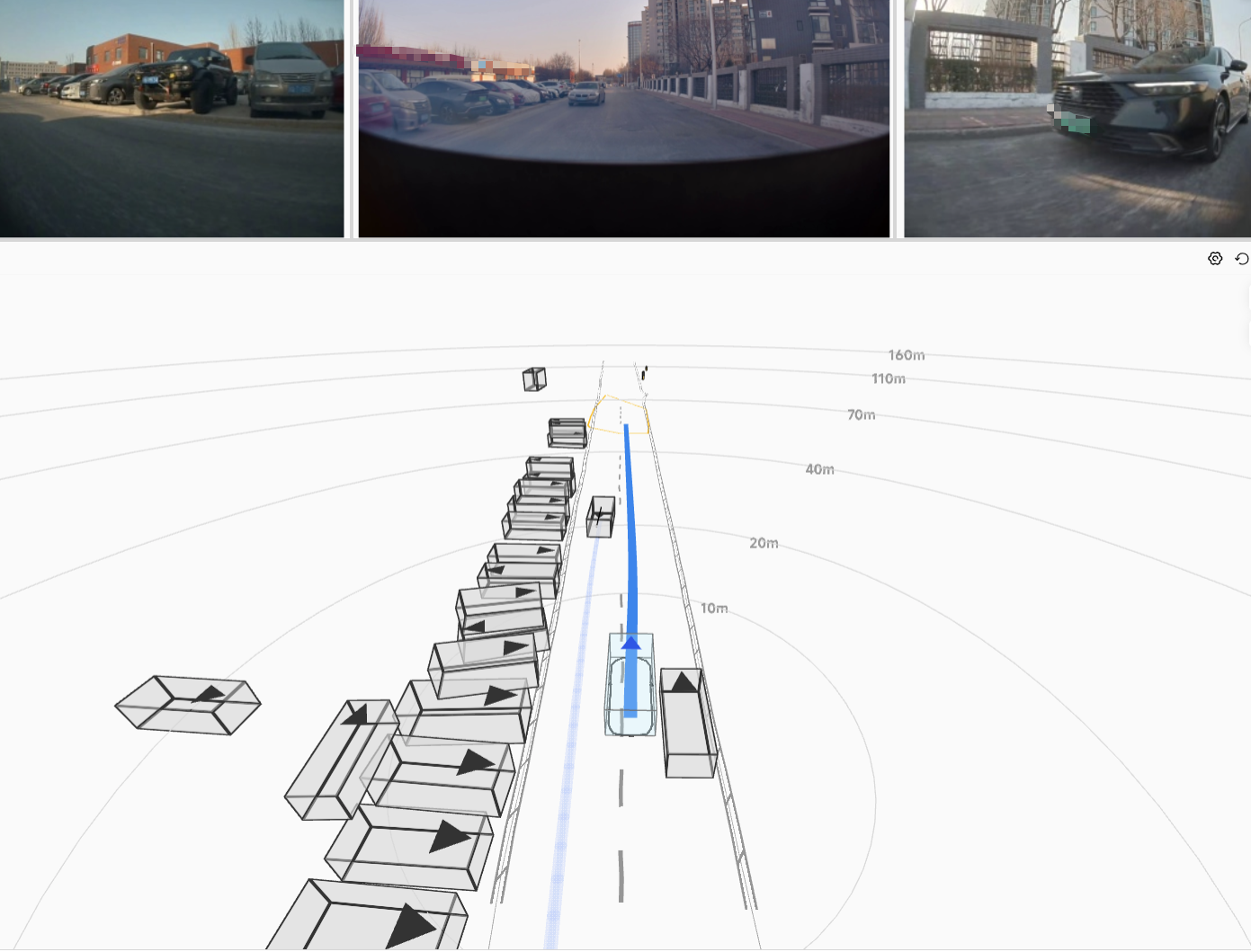}\\[2pt]
\includegraphics[width=0.24\textwidth]{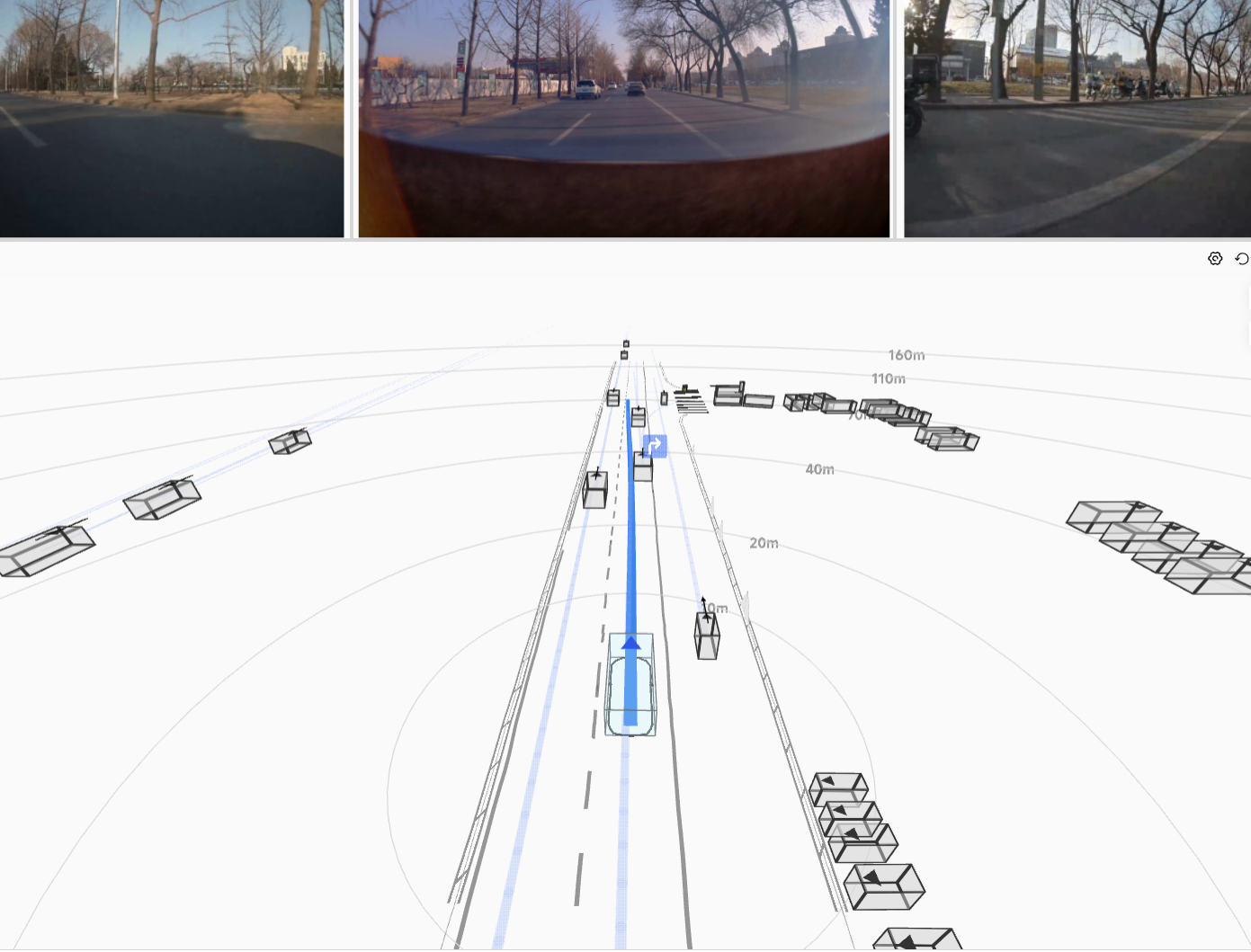}\hfill
\includegraphics[width=0.24\textwidth]{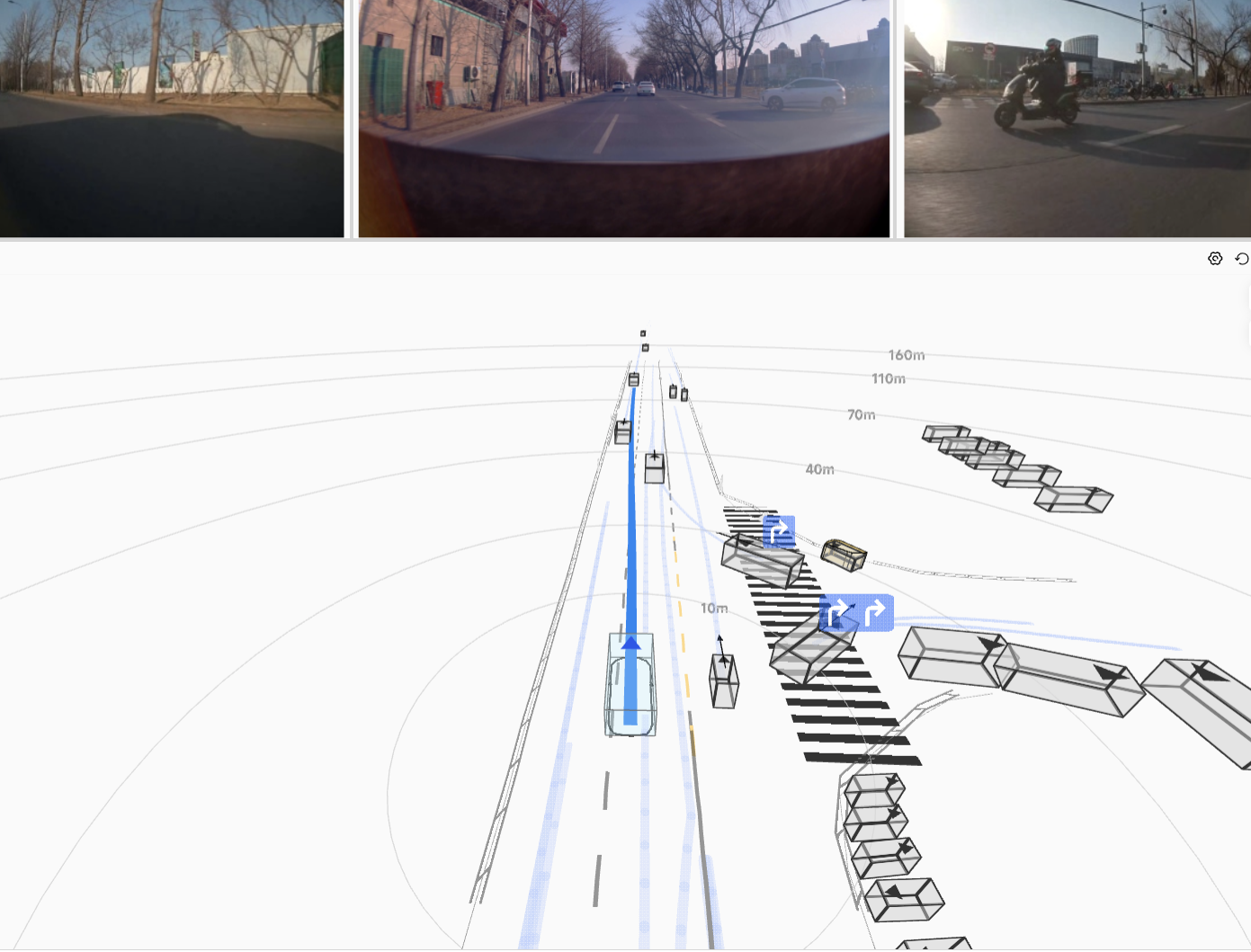}\hfill
\includegraphics[width=0.24\textwidth]{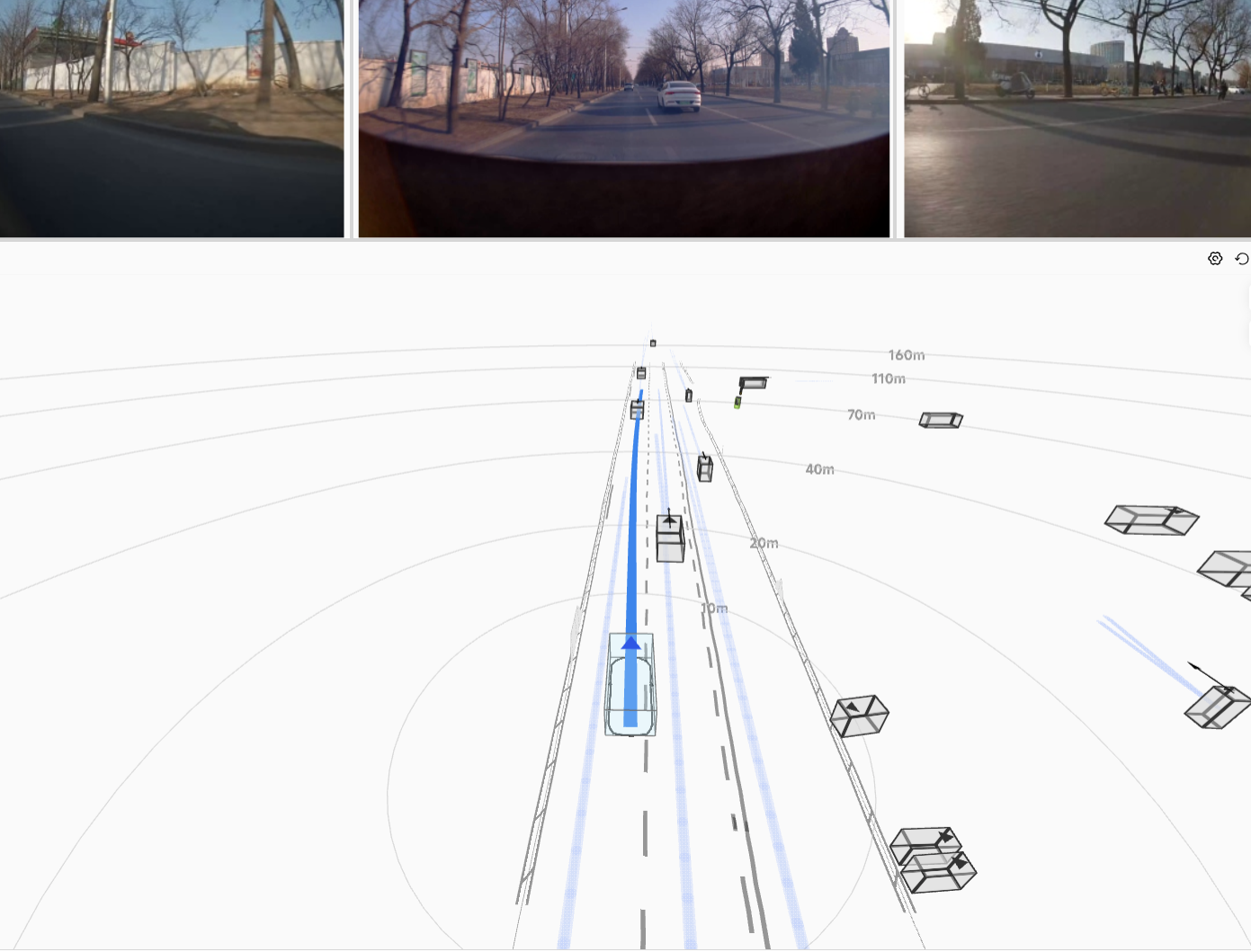}\hfill
\includegraphics[width=0.24\textwidth]{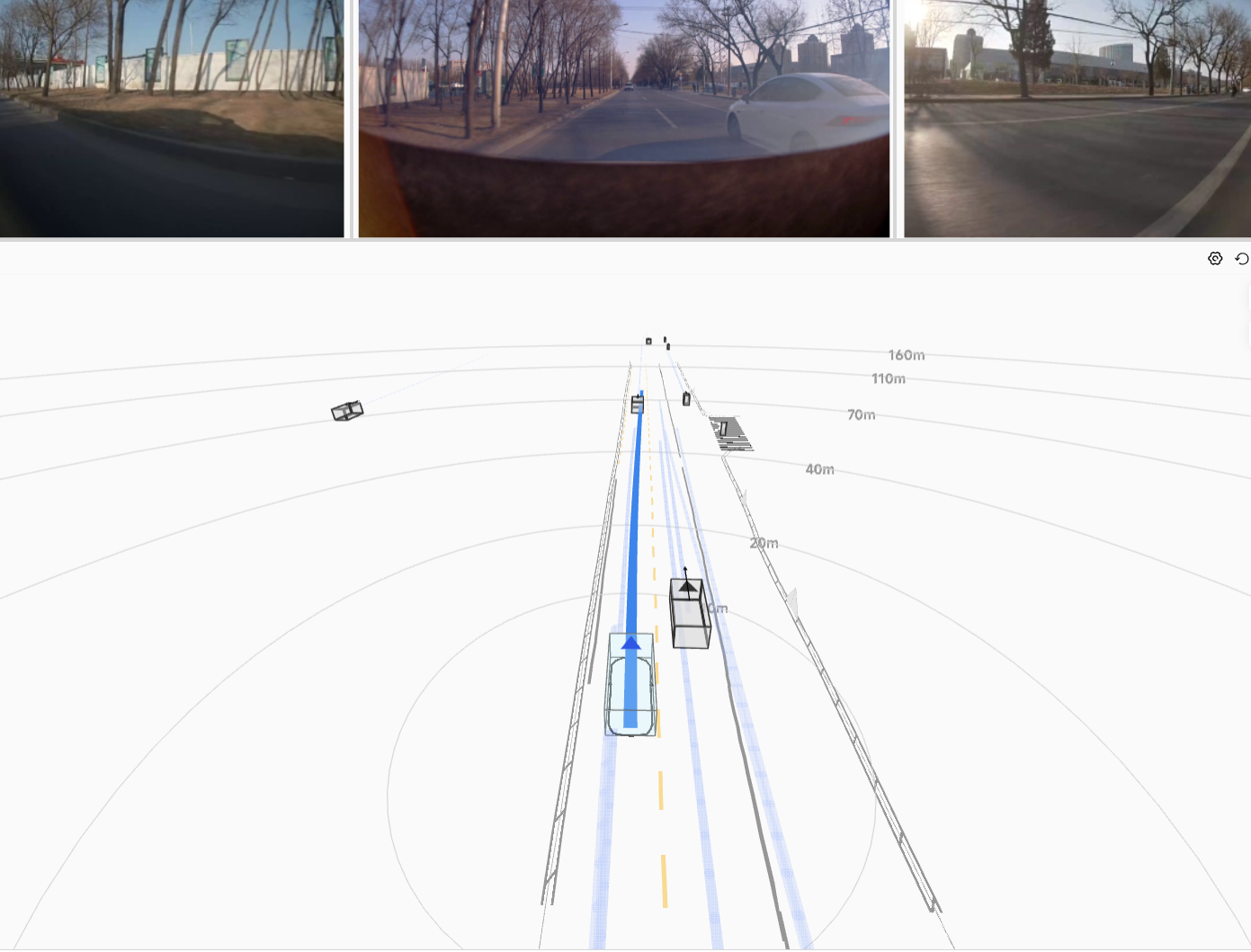}\\[2pt]
\includegraphics[width=0.24\textwidth]{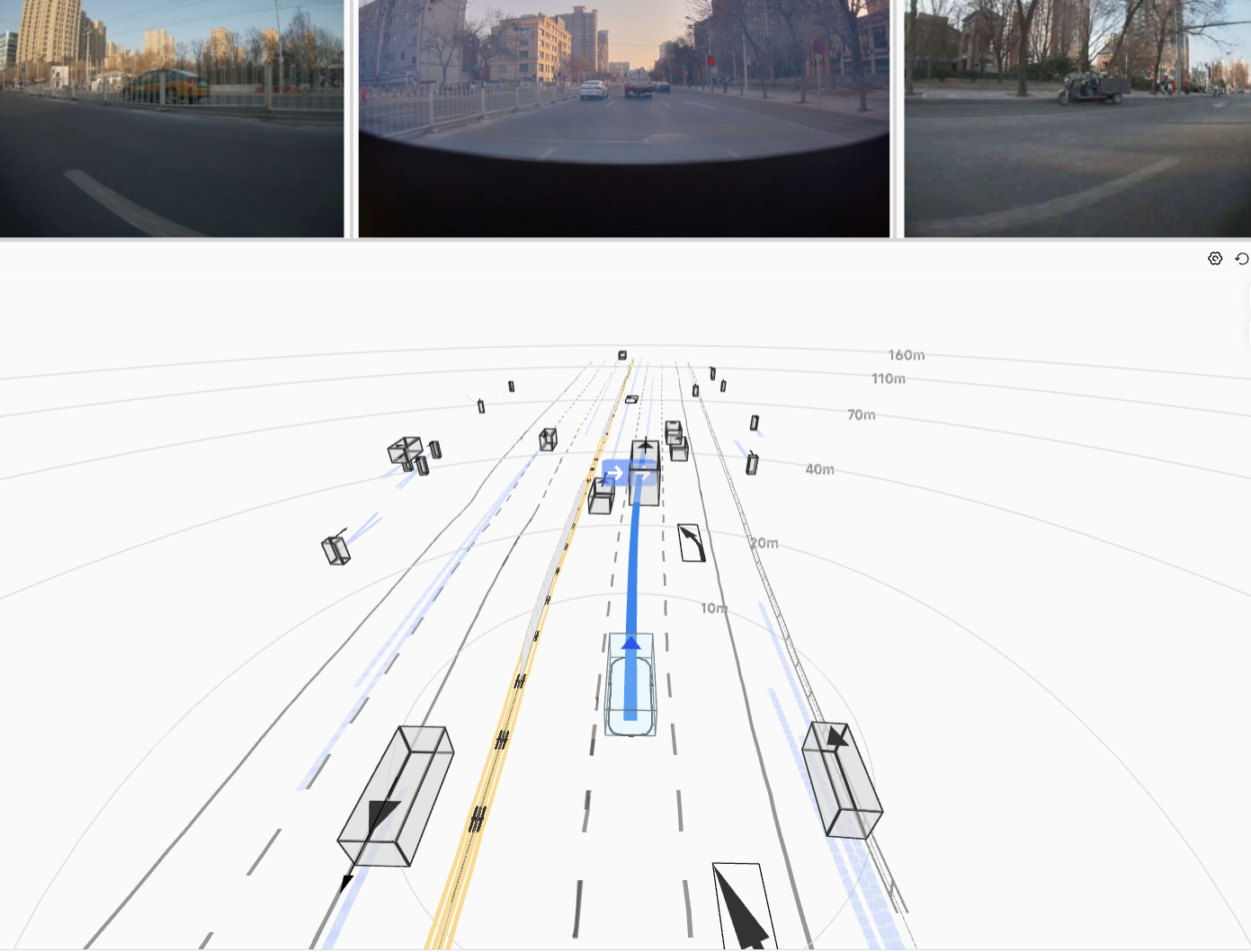}\hfill
\includegraphics[width=0.24\textwidth]{assets/real/real_4b.png}\hfill
\includegraphics[width=0.24\textwidth]{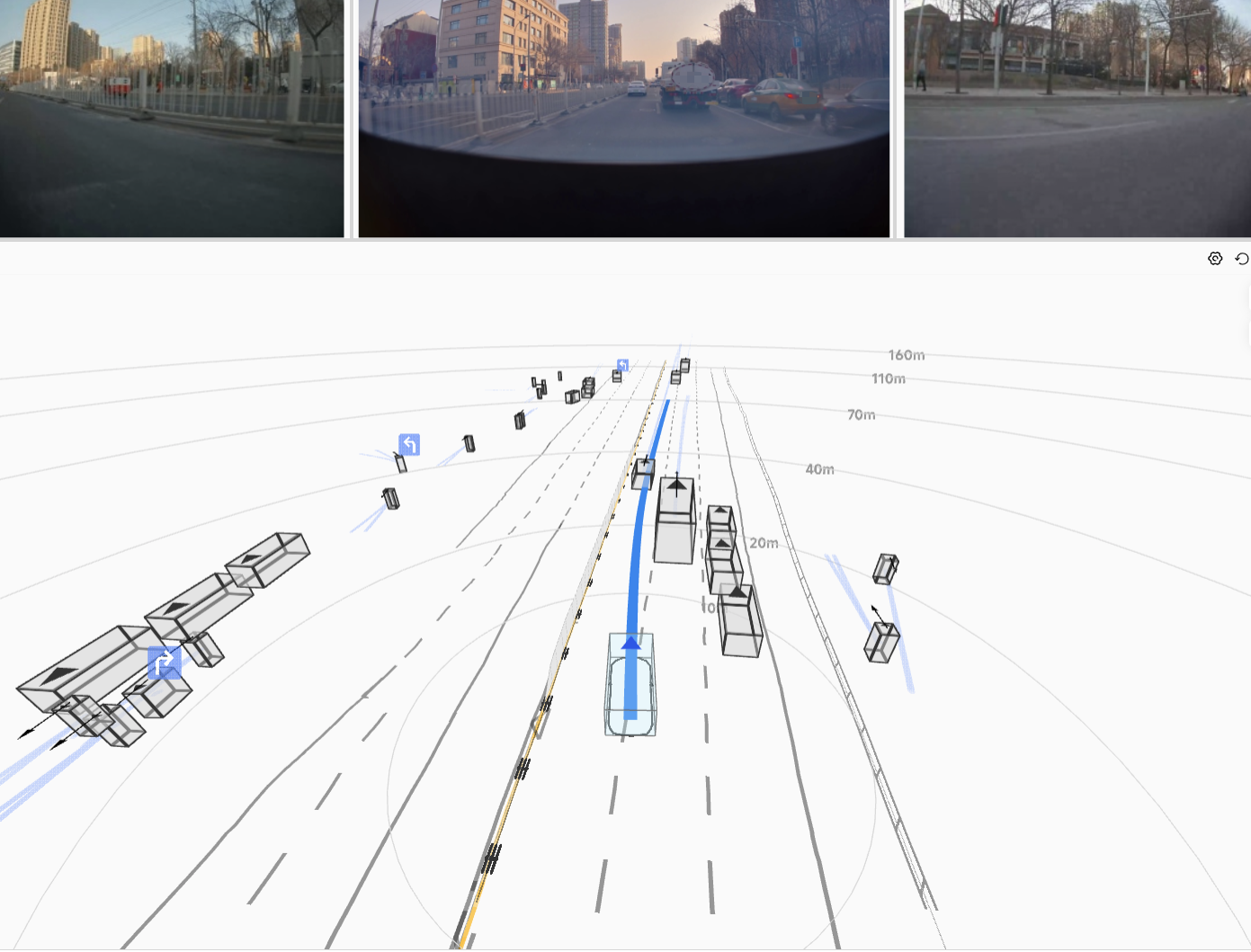}\hfill
\includegraphics[width=0.24\textwidth]{assets/real/real_4d.png}\\[2pt]
\includegraphics[width=0.24\textwidth]{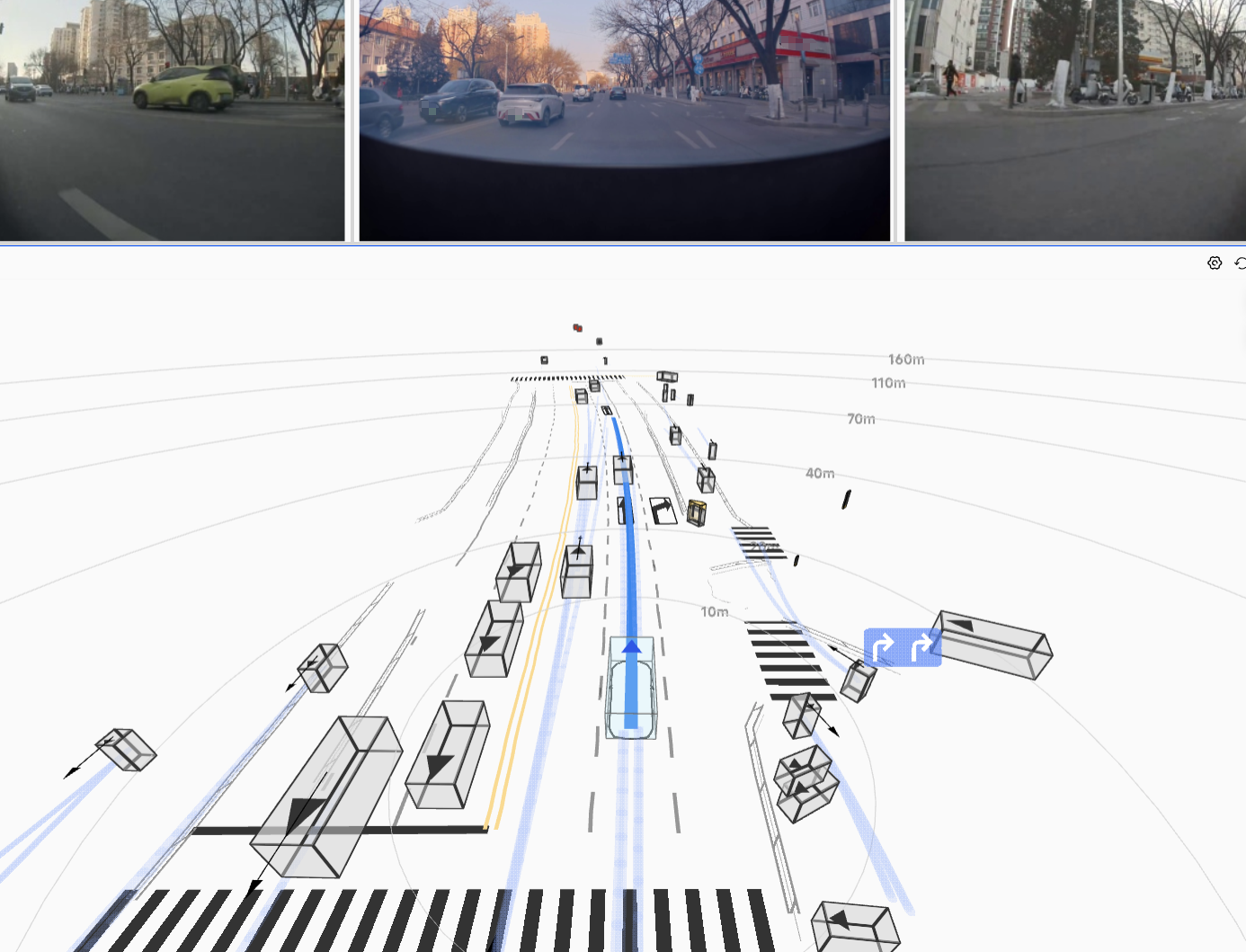}\hfill
\includegraphics[width=0.24\textwidth]{assets/real/real_5b.png}\hfill
\includegraphics[width=0.24\textwidth]{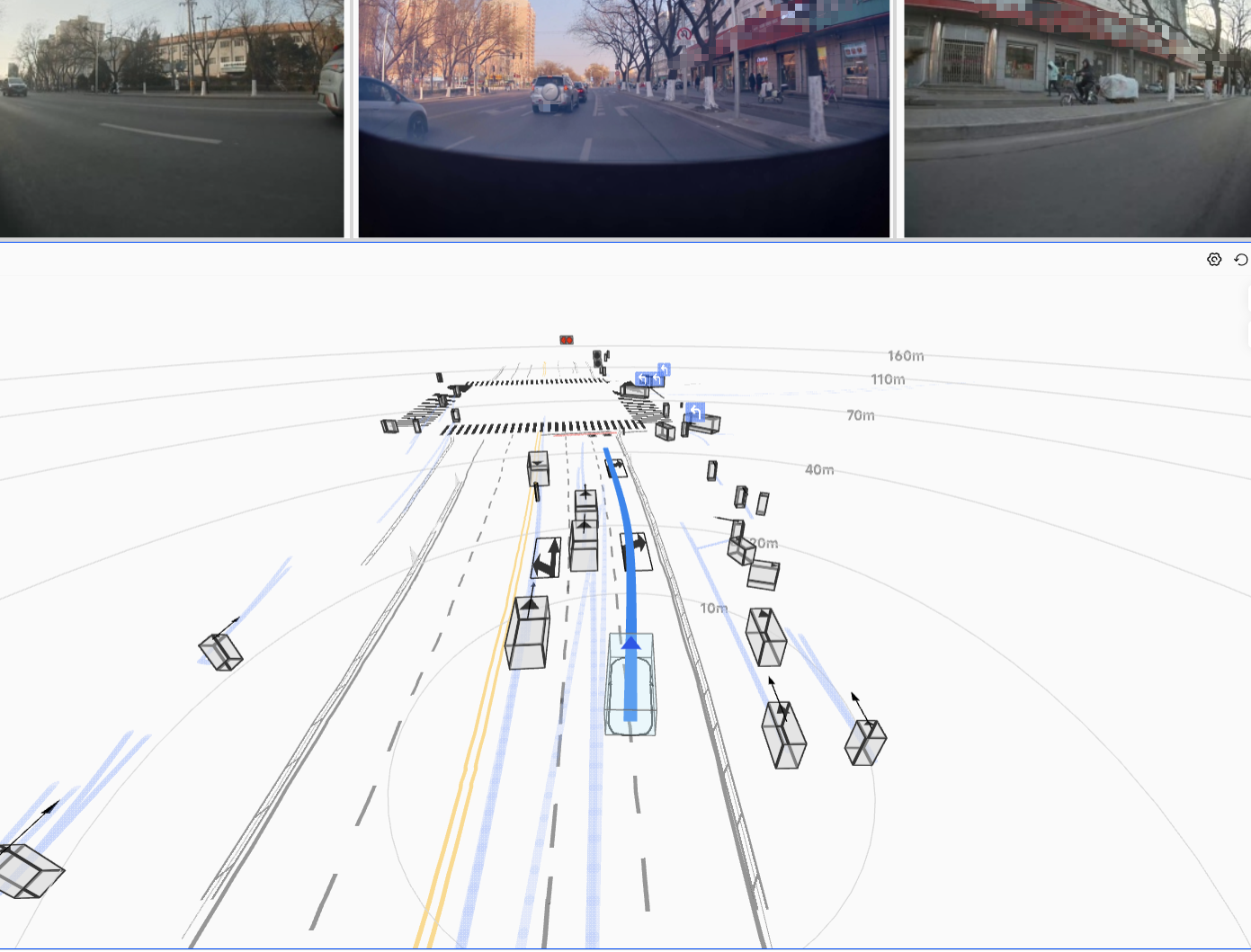}\hfill
\includegraphics[width=0.24\textwidth]{assets/real/real_5d.png}\\[2pt]
\includegraphics[width=0.24\textwidth]{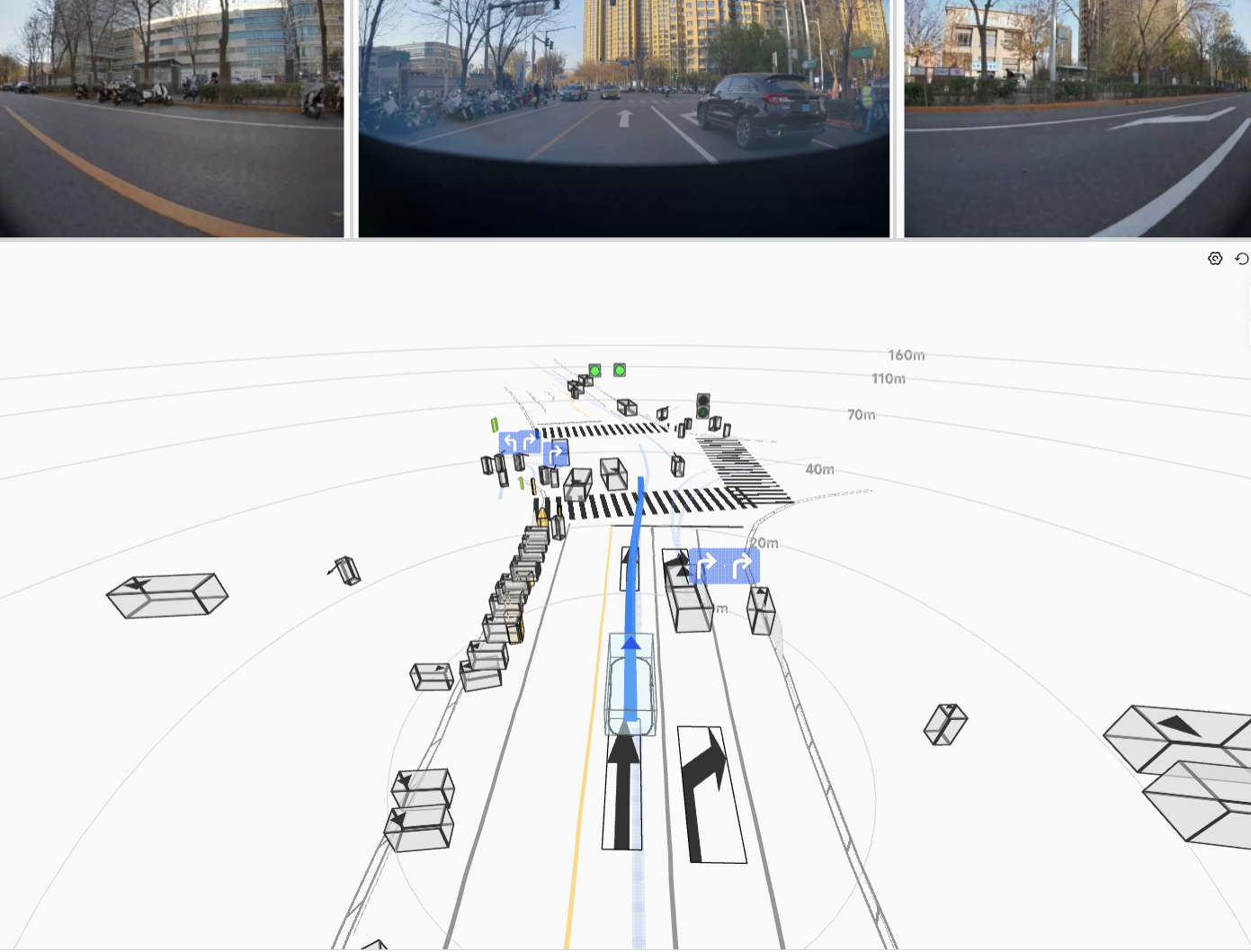}\hfill
\includegraphics[width=0.24\textwidth]{assets/real/real_10b.png}\hfill
\includegraphics[width=0.24\textwidth]{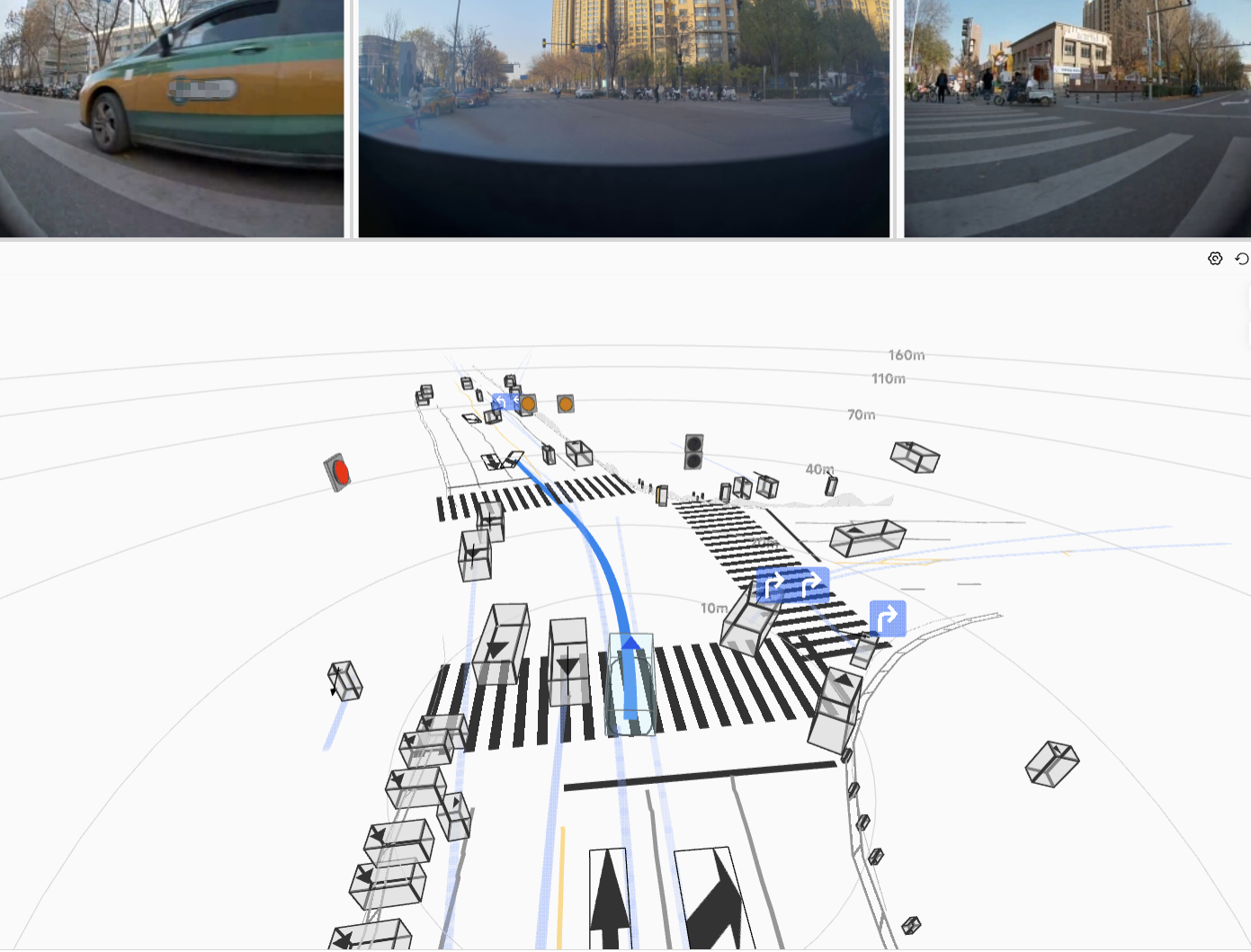}\hfill
\includegraphics[width=0.24\textwidth]{assets/real/real_10d.png}\\[2pt]
\includegraphics[width=0.24\textwidth]{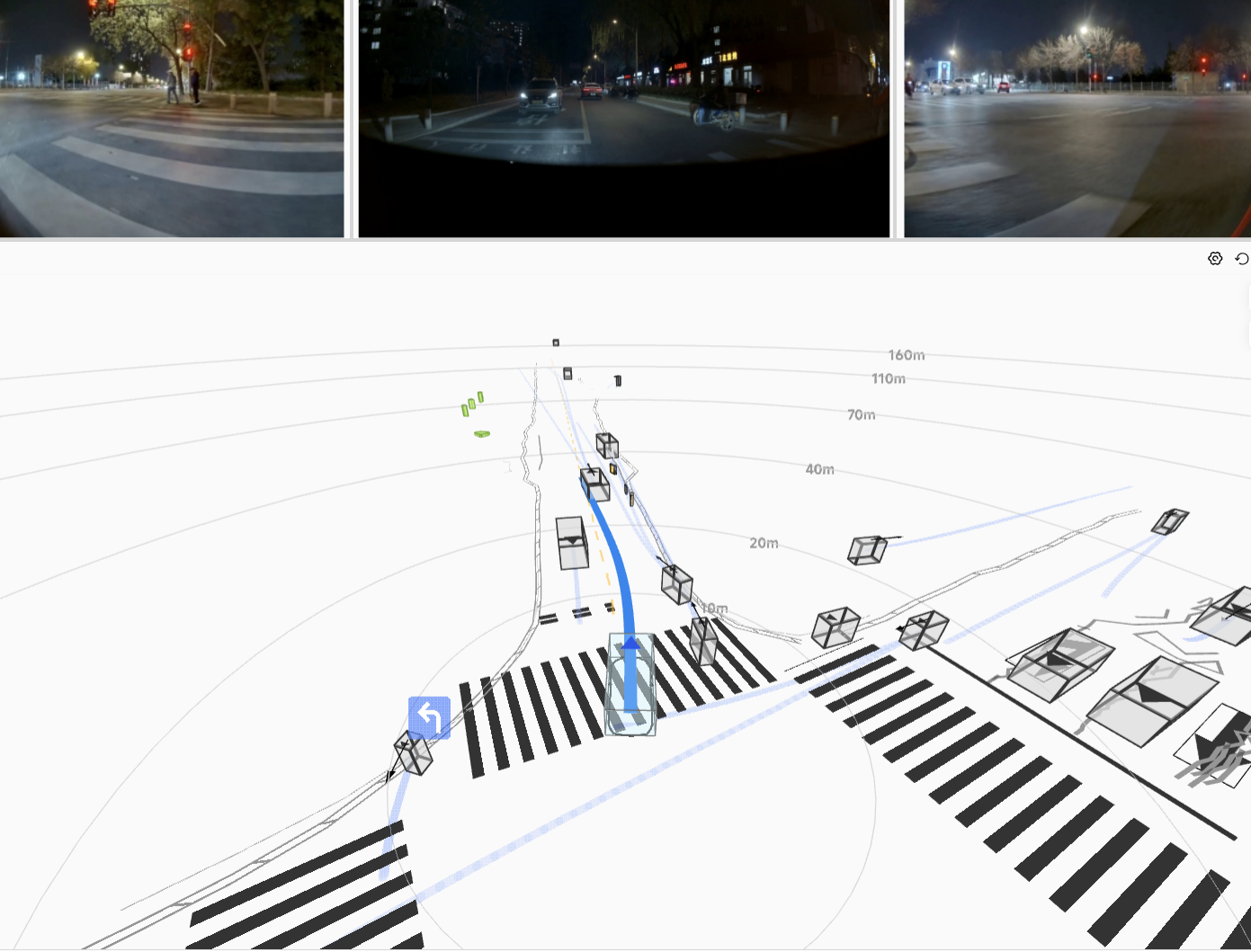}\hfill
\includegraphics[width=0.24\textwidth]{assets/real/real_11b.png}\hfill
\includegraphics[width=0.24\textwidth]{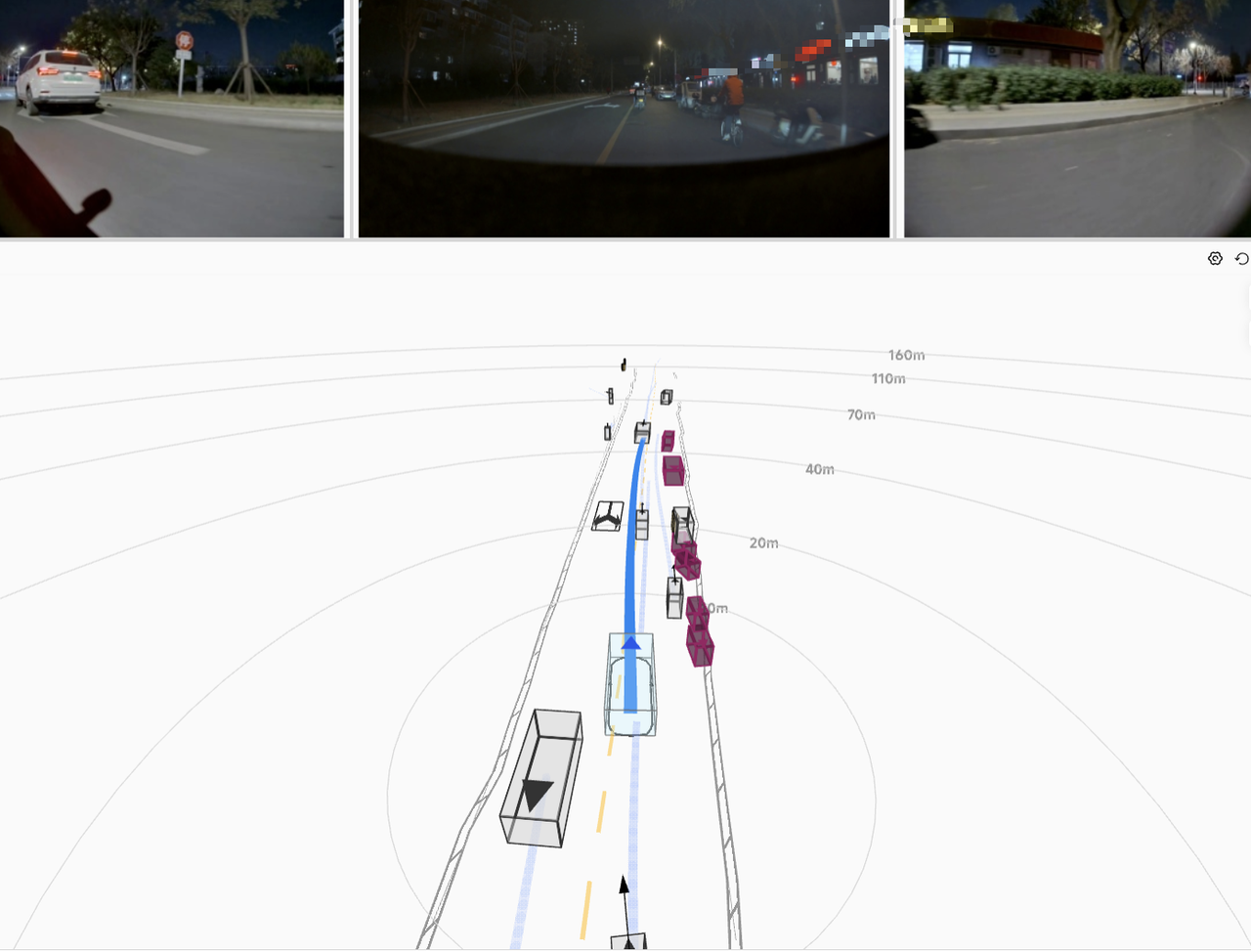}\hfill
\includegraphics[width=0.24\textwidth]{assets/real/real_11d.png}
\caption{\small Closed-loop real-vehicle testing results. Each row contains representative frames from the scenario.}
\label{fig:real_case_ap}
\end{figure}

\section{Related Works}

\textbf{Diffusion Model}.
Diffusion models~\citep{ho2020denoising, sohldickstein2015deep} learn 
complex data distributions through a forward noising process and a 
learned reverse denoising process. They have achieved remarkable 
performance in image and video generation~\citep{betker2023improving, 
croitoru2023diffusion, liu2024sora, rombach2022high}, and have more 
recently been adopted as expressive policy classes in robotics 
control~\citep{chi2023diffusion, black2024pi0visionlanguageactionflowmodel, 
intelligence2025pi_, liurdt}. Their ability to capture multimodal action 
distributions makes them a natural fit for autonomous driving (AD), where 
human driving behavior is similarly diverse and stochastic, and has 
motivated a wave of recent attempts to apply diffusion to AD planning.

\textbf{End-to-End Autonomous Driving}. End-to-end autonomous driving (E2E AD) directly maps raw sensor inputs to 
trajectory or control outputs through deep neural 
networks~\citep{bojarski2016end, chen2023end}. Modern E2E systems 
integrate perception, prediction, and planning into a single 
differentiable model~\citep{hu2023planning, jiang2023vad}, and frequently 
rely on auxiliary supervision on intermediate representations or 
trajectories to stabilize training~\citep{bansal2018chauffeurnet}. Most existing approaches are evaluated on open-loop datasets~\citep{nuscenes2019} or simulation-based benchmarks~\citep{caesar2021nuplan, Dauner2024NEURIPS, liuskill}. However, open-loop metrics are widely acknowledged to be poor proxies for true closed-loop performance on the road~\citep{Dauner2023CORL, li2024ego, zheng2025diffusionplanner}, while simulation-based settings exhibit a non-trivial sim-to-real gap. Conclusions drawn from these benchmarks 
therefore offer limited insight into real-world deployment, which 
motivates the on-road evaluation pursued in this work.

\textbf{Diffusion-Based Planning for Autonomous Driving}. \citet{zheng2025diffusionplanner} make a pioneering attempt by applying diffusion models to AD planning, although their formulation still relies on vectorized scene representations. \citet{diffusiondrive} introduce a truncated denoising process; however, this modification disrupts the original diffusion mechanism, and their model continues to depend heavily on pre-defined trajectory anchors for trajectory generation. \citet{wang2025alpamayo} develop a vision-language-action (VLA) model with a diffusion-based decoder, leveraging large language priors to generate trajectories with strong reasoning capability. \citet{tan2025flow} and \citet{li2025discrete} explore alternative parameterizations such as flow matching and discrete diffusion in the AD context. Despite strong benchmark numbers, many of these methods still rely on hand-crafted priors—rule-based post-processing~\citep{fan2018baidu}, pre-defined anchor trajectories~\citep{li2024hydra}, or explicit goal conditioning~\citep{albrecht2021interpretable, gu2021densetnt}—to remain competitive, which makes it difficult to attribute the reported gains to diffusion modeling itself and raises concerns about transfer to real-world deployment.

\textbf{Diffusion-Based Reinforcement Learning}.
Building upon imitation-pretrained diffusion models, several lines of work 
apply RL to further align generated outputs with task-specific objectives. 
One direction optimizes a differentiable reward or value function by 
back-propagating gradients through the denoising 
process~\citep{xu2023imagereward, clark2023directly}; however, these 
gradients are typically noisy and lead to unstable training. A second 
direction~\citep{ren2024diffusion} treats each denoising step as a 
Gaussian transition and applies policy-gradient methods such as 
PPO~\citep{schulman2017proximal} to the diffusion process, but this 
incurs substantial computational overhead. \citet{li2025recogdrive} 
extend this idea from~\citet{black2023training} to AD planning, while 
weighted-regression-based approaches~\citep{lee2023aligning, 
kang2023efficient, zheng2024safe, zhengtowards} offer simpler 
alternatives that retain the supervised structure of diffusion training. 
More recently, \citet{liang2026dipole} propose a dichotomous policy 
optimization method and fine-tune a one-billion-parameter diffusion-based 
VLA model for AD with stable training. Despite this rapid progress, most existing methods are validated only in simulation, and their real-vehicle effectiveness under safety constraints and out-of-distribution scenarios remains largely unexplored. Our work fills this gap, showing that an RL post-trained diffusion planner delivers measurable closed-loop gains in real-vehicle deployment.

\section{Theoretical Analysis}

\label{app:b}
In this section, we provide details on the conversion between different types of diffusion losses and predictions, as well as the proofs of the theorems.

\subsection{Diffusion Loss Space}
\label{sec:diffusion_loss_space}

Following the standard diffusion formulation~\citep{ho2020denoising, song2021scorebased}, the forward process diffuses a clean trajectory $\tau_0$ into a noised sample
\begin{equation}
    \tau_t = \alpha_t \tau_0 + \sigma_t \epsilon, \qquad \epsilon \sim \mathcal{N}(0, I),
\end{equation}
where $\{\alpha_t, \sigma_t\}$ is a predefined noise schedule and $t \in [0, T]$ indexes the diffusion timestep. A diffusion model is then trained to predict one of three equivalent quantities: the clean trajectory $\tau_0$, the injected noise $\epsilon$, or the velocity $v_t \!=\! \alpha_t \epsilon - \sigma_t \tau_0$. Given $(t, \alpha_t, \sigma_t, \tau_t)$, these three quantities are linearly inter-convertible, so any one of them is sufficient to recover the other two.

This convertibility decouples what the network predicts (the \textbf{parameterization}) from the space in which the loss is computed (the \textbf{loss space}), yielding a $3 \times 3$ design grid. The diagonal entries correspond to the canonical $\tau_0$-, $v_t$-, and $\epsilon$-objectives, where the parameterization and the loss space coincide, while the off-diagonal entries first map the network output to the target space via the closed-form transforms in Table~\ref{tab:diffusion_loss} and then compute the squared error. As an example, parameterizing the network as a clean-trajectory predictor $\tau_\theta$ but computing the loss in noise space gives
\begin{equation}
    \label{eq:diffusion_loss_instance}
    \mathcal{L} = \mathbb{E}_{\tau_0, t, \epsilon}\left\lVert \frac{\tau_t - \alpha_t \tau_\theta}{\sigma_t} - \epsilon \right\rVert^2.
\end{equation}

Although the nine combinations share the same underlying score-matching objective, they are not numerically equivalent during training: each combination induces a different implicit weighting over noise levels. As a result, the choice of (parameterization, loss space) materially affects optimization stability, sample quality, and the relative emphasis the model places on coarse vs.\ fine-grained trajectory details. We therefore treat parameterization and loss space as two orthogonal design axes and study them jointly in our experiments.
\begin{table*}[t]
    \centering
    \normalsize
    \renewcommand{\arraystretch}{1.2}
    \setlength{\tabcolsep}{6pt}
    \caption{\small Mutual conversions among the three diffusion quantities. Columns index the model parameterization and rows index the loss space. Predicted quantities are denoted with $\hat{\,\cdot\,}$ and the network is parameterized by $\theta$. Diagonal entries are the canonical objectives; off-diagonal entries first transform the network output into the target space and then compute the squared error.}
    \label{tab:diffusion_loss}
    \scalebox{1.0}{
        \begin{tabular}{c | c c c}
        \toprule
        & \textbf{${\tau}_0$-pred.}
        & \textbf{${v}_t$-pred.}
        & \textbf{${\epsilon}$-pred.} \\
        \midrule
        \textbf{$\tau_0$-loss: $\mathbb{E}\|\hat{\tau}_0 - \tau_0\|^2$}
        & $\hat{\tau}_0 = \tau_\theta$
        & $\hat{\tau}_0 = \alpha_t \tau_t - \sigma_t v_{\theta;t}$
        & $\hat{\tau}_0 = (\tau_t - \sigma_t \epsilon_\theta)/\alpha_t$ \\
        \textbf{$v_t$-loss: $\mathbb{E}\|\hat{v}_t - v_t\|^2$}
        & $\hat{v}_t = (\alpha_t \tau_t - \tau_\theta)/\sigma_t$
        & $\hat{v}_t = v_{\theta;t}$
        & $\hat{v}_t = (\epsilon_\theta - \sigma_t \tau_t)/\alpha_t$ \\
        \textbf{$\epsilon$-loss: $\mathbb{E}\|\hat{\epsilon} - \epsilon\|^2$}
        & $\hat{\epsilon} = (\tau_t - \alpha_t \tau_\theta)/\sigma_t$
        & $\hat{\epsilon} = \sigma_t \tau_t + \alpha_t v_{\theta;t}$
        & $\hat{\epsilon} = \epsilon_\theta$ \\
        \bottomrule
        \end{tabular}
    }
\end{table*}

% equation labeling in the restated thm is erased here to avoid label chaos in the context

\subsection{Proofs of Lemma~\ref{lem:bregman} and Theorem~\ref{thm:general_score_matching}}
\label{app:proof_bregman}
\label{app:proof_general_score}

In this subsection, we prove that the hybrid loss in Eq.~(\ref{eq:hybrid_loss}) is a valid diffusion score-matching objective whose unique minimizer recovers the marginal score. We first establish a general Bregman-divergence lemma (Lemma~\ref{lem:bregman}), then apply it to identify the $P$-norm structure of the hybrid loss and its score-matching minimizer (Theorem~\ref{thm:general_score_matching}).

\BregmanLemma*

\begin{proof}
\textbf{(i) $D_{P}$ is a Bregman divergence.} The potential $\Phi_{P}(u)=u^{\top}\!P u$ is strictly convex (since $P\succ 0$) with $\nabla\Phi_{P}(u)=2Pu$, and direct expansion gives
\begin{equation*}
    \Phi_{P}(u)-\Phi_{P}(v)-\langle\nabla\Phi_{P}(v),u-v\rangle
    =(u-v)^{\top}P(u-v)
    =D_{P}(u,v).
\end{equation*}

\textbf{(ii) Minimizer.} Conditioning on $\tau^{\mathbf{v}}_{t}$ and expanding the quadratic form in $f:=\tau^{\mathbf{v}}_{\theta}(\tau^{\mathbf{v}}_{t},t)$,
\begin{equation*}
    \mathbb{E}\!\left[D_{P}(f,\tau^{\mathbf{v}}_{0})\,\big|\,\tau^{\mathbf{v}}_{t}\right]
    = f^{\top}\!Pf - 2f^{\top}\!P\,\mathbb{E}[\tau^{\mathbf{v}}_{0}\mid\tau^{\mathbf{v}}_{t}] + C(\tau^{\mathbf{v}}_{t}),
\end{equation*}
with $C$ independent of $f$. The first-order condition yields $\tau^{\mathbf{v},\,\star}_{\theta}(\tau^{\mathbf{v}}_{t},t)=\mathbb{E}[\tau^{\mathbf{v}}_{0}\mid\tau^{\mathbf{v}}_{t}]$, and the Hessian $2P\succ 0$ makes this the unique minimizer.

\textbf{(iii) Connection to the score.} By Tweedie's formula applied to $\tau^{\mathbf{v}}_{t}=\alpha_{t}\tau^{\mathbf{v}}_{0}+\sigma_{t}\epsilon$,
\begin{equation*}
    \mathbb{E}[\tau^{\mathbf{v}}_{0}\mid\tau^{\mathbf{v}}_{t}]
    =\frac{1}{\alpha_{t}}\!\left(\tau^{\mathbf{v}}_{t}+\sigma_{t}^{2}\,\nabla_{\tau^{\mathbf{v}}_{t}}\!\log q_{t}(\tau^{\mathbf{v}}_{t})\right),
\end{equation*}
which is in one-to-one correspondence with the marginal score in Eq.~(\ref{eq:reverse}).
\end{proof}

% \subsection{Proof of Theorem~\ref{thm:general_score_matching}}
% \label{app:proof_general_score}

\begingroup
\renewenvironment{equation}{$$}{$$}
\GeneralScoreMatching*
\endgroup

\begin{proof}
\textbf{(i) $\mathcal{L}_{hybrid}$ as a $\mathit{P}$-norm.} The two terms of the hybrid loss admit matrix forms (using linearity of $M$ for the second):
\begin{equation*}
    \mathcal{L}_{velocity}=\mathbb{E}\!\left[(\tau^{\mathbf{v}}_{\theta}-\tau^{\mathbf{v}}_{0})^{\top}I(\tau^{\mathbf{v}}_{\theta}-\tau^{\mathbf{v}}_{0})\right],
    \quad
    \mathcal{L}_{waypoints}=\mathbb{E}\!\left[\Delta t^{2}(\tau^{\mathbf{v}}_{\theta}-\tau^{\mathbf{v}}_{0})^{\top}M^{\top}\!M(\tau^{\mathbf{v}}_{\theta}-\tau^{\mathbf{v}}_{0})\right].
\end{equation*}
Plugging into Eq.~(\ref{eq:hybrid_loss}),
\begin{equation*}
    \mathcal{L}_{hybrid}
    = \mathbb{E}\!\left[(\tau^{\mathbf{v}}_{\theta}-\tau^{\mathbf{v}}_{0})^{\top}\!\big(I+\omega\Delta t^{2}M^{\top}\!M\big)(\tau^{\mathbf{v}}_{\theta}-\tau^{\mathbf{v}}_{0})\right]
    = \mathbb{E}\!\left[||\tau^{\mathbf{v}}_{\theta}-\tau^{\mathbf{v}}_{0}||_{\mathit{P}}^{2}\right],
\end{equation*}
with $\mathit{P}:=I+\omega\Delta t^{2}M^{\top}\!M$.

\textbf{(ii) $\mathit{P}\succ 0$.} $\mathit{P}$ is symmetric, and for any $z\neq 0$,
\begin{equation*}
    z^{\top}\mathit{P}z=||z||_{2}^{2}+\omega\Delta t^{2}||Mz||_{2}^{2}>0.
\end{equation*}
\textbf{(iii) Minimizer.} With $\mathit{P}\succ 0$, $\mathcal{L}_{hybrid}$ matches the conditional regression objective in Lemma~\ref{lem:bregman}, so its minimizer is $\tau^{\mathbf{v},\,\star}_{\theta}(\tau^{\mathbf{v}}_{t},t)=\mathbb{E}[\tau^{\mathbf{v}}_{0}\mid\tau^{\mathbf{v}}_{t}]$, which by Tweedie equals the marginal score in Eq.~(\ref{eq:reverse}).
\end{proof}

\subsection{Proof of Theorem~\ref{theorem:fisor}}
\label{app:proof_awr_hybrid}

We formalize the RL post-training as a Markov Decision Process (MDP)~\citep{sutton1998reinforcement} $\mathcal{M} = (\mathcal{S}, \mathcal{A}, \mathcal{P}, r, \gamma)$, where the state $s\in\mathcal{S}$ is the driving scene, the action $a \equiv \tau^{\mathbf{v}}_0 \in \mathcal{A}$ is the velocity-parameterized trajectory, and the policy $\pi^k(a\mid s)$ is parameterized by the diffusion model $\tau^{\mathbf{v};k}_\theta$ at iteration $k$. We update $\pi^k$ from the previous policy $\pi^{k-1}$ by solving the KL-regularized objective in Eq.~(\ref{eq:optim}):
\begin{equation*}
    \max_{\pi^k}\; \mathbb{E}_{s \sim \mathcal{D}}\!\left[\mathbb{E}_{a \sim \pi^k(\cdot \mid s)}[r(s,a)] - \tfrac{1}{\beta}\, D_{\mathrm{KL}}\!\big(\pi^k(\cdot\mid s)\,\big\|\,\pi^{k-1}(\cdot\mid s)\big)\right],
\end{equation*}
subject to the simplex constraint $\int_a \pi^k(a\mid s)\,\mathrm{d}a = 1$ for all $s$. With state density $d(s)$ from $\mathcal{D}$ and multiplier $\alpha_s$ for the simplex constraint, the Lagrangian is
\begin{equation*}
\begin{aligned}
    \mathcal{L}(\pi^k, \alpha) &= \int_s d(s)\!\left[\int_a \pi^k(a\mid s)\, r(s,a)\, \mathrm{d}a - \tfrac{1}{\beta}\!\int_a \pi^k(a\mid s)\,\log\!\tfrac{\pi^k(a\mid s)}{\pi^{k-1}(a\mid s)}\, \mathrm{d}a\right] \mathrm{d}s \\
    &\qquad + \int_s \alpha_s\!\left(\int_a \pi^k(a\mid s)\,\mathrm{d}a - 1\right) \mathrm{d}s.
\end{aligned}
\end{equation*}
Taking the derivative with respect to $\pi^k(a\mid s)$ and setting it to zero yields
\begin{equation*}
    r(s,a) - \tfrac{1}{\beta}\!\left(\log\!\tfrac{\pi^k(a\mid s)}{\pi^{k-1}(a\mid s)} + 1\right) + \tfrac{\alpha_s}{d(s)} = 0.
\end{equation*}
Solving for $\pi^k$ and absorbing the $a$-independent term $\exp(\beta\alpha_s/d(s)-1)$ into the normalization gives the closed-form optimum~\citep{nair2020awac}
\begin{equation*}
    \pi^{k^\star}(a\mid s) \;\propto\; \pi^{k-1}(a\mid s)\cdot \exp(\beta\, r(s,a)),
\end{equation*}
matching Eq.~(\ref{eq:closedform}). Next, we show that the hybrid loss can be expanded into a reward-weighted diffusion loss whose minimizer recovers this $\pi^{k^\star}$.

\OptimalPolicy*

\begin{proof}
    Here we consider the velocity of future trajectory as action $a = \tau^\mathbf{v}_0$. To prove that we can sample from the optimal policy in Eq.~\ref{eq:closedform}, we only need to show that the reward-weighted objective in Eq.~\ref{eq:awr_hybrid} is equivalent to the corresponding score matching objective of the optimal policy distribution. For simplicity, we omit the state condition in the derivation.
    \begin{equation}
        \label{eq:proof_of_diffusion_loss}
        \begin{aligned}
            &\mathbb{E}_{\tau_0^\mathbf{v}\sim\pi^{k-1},\epsilon\sim p_\epsilon(\epsilon),t\sim p_t(t)}\left[ \exp(\beta r)||\tau^{\mathbf{v};k}_\theta-\tau_0^\mathbf{v}||_P^2 \right] \\
            &=\int_{\tau^\mathbf{v}_0,\epsilon,t} \exp(\beta r)||\tau^{\mathbf{v};k}_\theta - \tau^\mathbf{v}_0||_P^2 \cdot \pi^{k-1}(\tau^\mathbf{v}_0)p_\epsilon(\epsilon) p_t(t) \mathrm{d}\epsilon \mathrm{d}t \mathrm{d}\tau^\mathbf{v}_0\\
            &\begin{split}
                &= Z \int_{\tau^\mathbf{v}_0,\epsilon,t} 
                   \|\tau^{\mathbf{v};k}_\theta - \tau^\mathbf{v}_0\|_P^2 
                   \cdot \frac{\exp(\beta r)\pi^{k-1}(\tau^\mathbf{v}_0)}{Z} \\
                &\qquad \cdot p_\epsilon(\epsilon)\, p_t(t)
                   \,\mathrm{d}\epsilon \,\mathrm{d}t \,\mathrm{d}\tau^\mathbf{v}_0
            \end{split} \\
            &=Z\int_{\tau^\mathbf{v}_0,\epsilon,t} ||\tau^{\mathbf{v};k}_\theta - \tau^\mathbf{v}_0||_P^2 \cdot \pi^{k^\star}(\tau^\mathbf{v}_0)\cdot p_\epsilon(\epsilon) p_t(t)  \mathrm{d}\epsilon \mathrm{d}t \mathrm{d}\tau^\mathbf{v}_0\\
            &=Z\mathbb{E}_{\tau^\mathbf{v}_0\sim\pi^{k^\star},\epsilon\sim p_\epsilon(\epsilon),t\sim p_t(t)}\left[||\tau^{\mathbf{v};k}_\theta - \tau^\mathbf{v}_0||_P^2 \right]
        \end{aligned}
    \end{equation}
    where $Z = \int_{\tau^\mathbf{v}_0} \exp(\beta r)\pi^{k-1}(\tau^\mathbf{v}_0) \mathrm{d}\tau^\mathbf{v}_0$ is the normalizing constant. This indicates that the reward-weighted objective is equivalent to the standard score matching objective over the optimal policy, scaled by a constant that does not change the minimizer. By Theorem~\ref{thm:general_score_matching}, this minimizer recovers the marginal score of $\pi^{k^\star}$. Consequently, the diffusion reverse process driven by the learned $\tau^{\mathbf{v};k^\star}_\theta$ samples from $\pi^{k^\star}(a\mid s)$, completing the proof.
\end{proof}

\section{Experimental Details}
\label{app:c}

In this section, we provide the experimental details, including the metrics used for open-loop and closed-loop evaluation, as well as the implementation details of imitation learning pre-training and reinforcement learning post-training.

\subsection{Evaluation Metric Design}
\label{app:metrics}
We consider two types of evaluation metrics: open-loop metrics for assessing trajectory quality, and closed-loop metrics for evaluating performance during real-vehicle testing.

\textbf{Open-Loop Metrics}. To perform a comparable open-loop evaluation, we consider widely adopted open-loop measures and compute a final score as the aggregated open-loop metric. The diffusion model exhibits multi-modal behavior during trajectory generation. We generate $6$ trajectories for evaluation. To reduce randomness, we compute the minADE (minimum average Euclidean distance between each predicted trajectory and the ground truth across all waypoints) and minFDE (minimum Euclidean distance between the final waypoint of each predicted trajectory and the ground truth) and calculate the corresponding scores as follows:
        \begin{equation}
            \label{eq:ade_fde_score}
            \begin{aligned}
                S_{ADE} &= 100 \times Clip(1 - \frac{minADE}{Thresh_{ADE}}, 0, 1) \\
                S_{FDE} &= 100 \times Clip(1 - \frac{minFDE}{Thresh_{FDE}}, 0, 1) \\
            \end{aligned}
        \end{equation}
        in which we clip and scale the corresponding scores into $[0, 100]$. To evaluate the comfort and smoothness of the model's generated trajectories, we computed a comfort score as a combination of average acceleration (\textit{Acc}) and jerk (\textit{Jerk}), and calculated the average score over the $6$ trajectories:
        \begin{equation}
            \label{eq:comfort_score}
            \begin{aligned}
            &Cost = \frac{1}{N_1} \sum\limits_{i=1}^{N_1} \left(Cost_{Acc} \times Acc + Cost_{Jerk} \times Jerk\right) \\
                &S_{Comfort} = 100 \times Clip(1 - \frac{Cost}{Thresh_{Comfort}}, 0, 1) 
        \end{aligned}
        \end{equation}
        The final aggregated open-loop score is computed as a weighted sum of the previous metrics scaled by the average collision rate (\textit{CR}):
        \begin{equation}
            \label{eq:aggregated_score}
            \begin{aligned}
                S_{Open-Loop} &= (1 - CR) \times (\sum\limits_{m \in \mathcal{M}} \omega_m S_m) \\
                \mathcal{M} &= \{ADE,~FDE,~Comfort\} 
            \end{aligned}
        \end{equation}
        In addition, to evaluate the multi-modal generation ability of the model, we consider the divergence of each rollout with $64$ generations, measured by the average distance of trajectory endpoints to their geometrical center:
        \begin{equation}
            \begin{aligned}
                Divergence~Score = \frac{1}{N_2} \sum\limits_{i=1}^{N_2} \lVert P^{L}_i - \frac{1}{N_2} \sum\limits_{i=1}^{N_2} P_i^L \lVert_2
            \end{aligned}
        \end{equation}
        where $P^L_i$ is the endpoint of the $i$-th trajectory. The choice of hyperparameters can be found in Table~\ref{tab:metric_hyperparameter}.

\begin{figure}[htbp]
    \centering
    \renewcommand{\arraystretch}{1.2}
    \begin{minipage}[c]{0.55\linewidth}
        \centering
        \captionof{table}{\small Hyperparameters for the open-loop metrics.}
        \label{tab:metric_hyperparameter}
        \begin{tabular}{l c @{\hspace{0.3cm}} l c}
            \toprule
            \textbf{Hyperparameter} & \textbf{Value} & \textbf{Hyperparameter} & \textbf{Value} \\
            \midrule
            $Thresh_{ADE}$     & 4    & $Cost_{jerk}$      & 0.5  \\
            $Thresh_{FDE}$     & 8    & $\omega_{ADE}$     & 0.35 \\
            $Thresh_{Comfort}$ & 200  & $\omega_{FDE}$     & 0.25 \\
            $Cost_{Acc}$       & 1.0  & $\omega_{Comfort}$ & 0.40 \\
            \bottomrule
        \end{tabular}

        \vspace{1em}

        \captionof{table}{\small Hyperparameters for the closed-loop metrics.}
        \label{tab:metric_hyperparameter_closed}
        \begin{tabular}{l c @{\hspace{0.3cm}} l c}
            \toprule
            \textbf{Hyperparameter} & \textbf{Value} & \textbf{Hyperparameter} & \textbf{Value} \\
            \midrule
            $w_1$ & 0.1  & $w_5$             & 0.1 \\
            $w_2$ & 0.25 & $w_6$             & 0.2 \\
            $w_3$ & 0.25 & $Thresh_{center}$ & 40  \\
            $w_4$ & 0.1  & $Thresh_{speed}$  & 40  \\
            \bottomrule
        \end{tabular}
    \end{minipage}
    \hfill
    \begin{minipage}[c]{0.40\linewidth}
        \centering
        \includegraphics[width=\linewidth, angle=90]{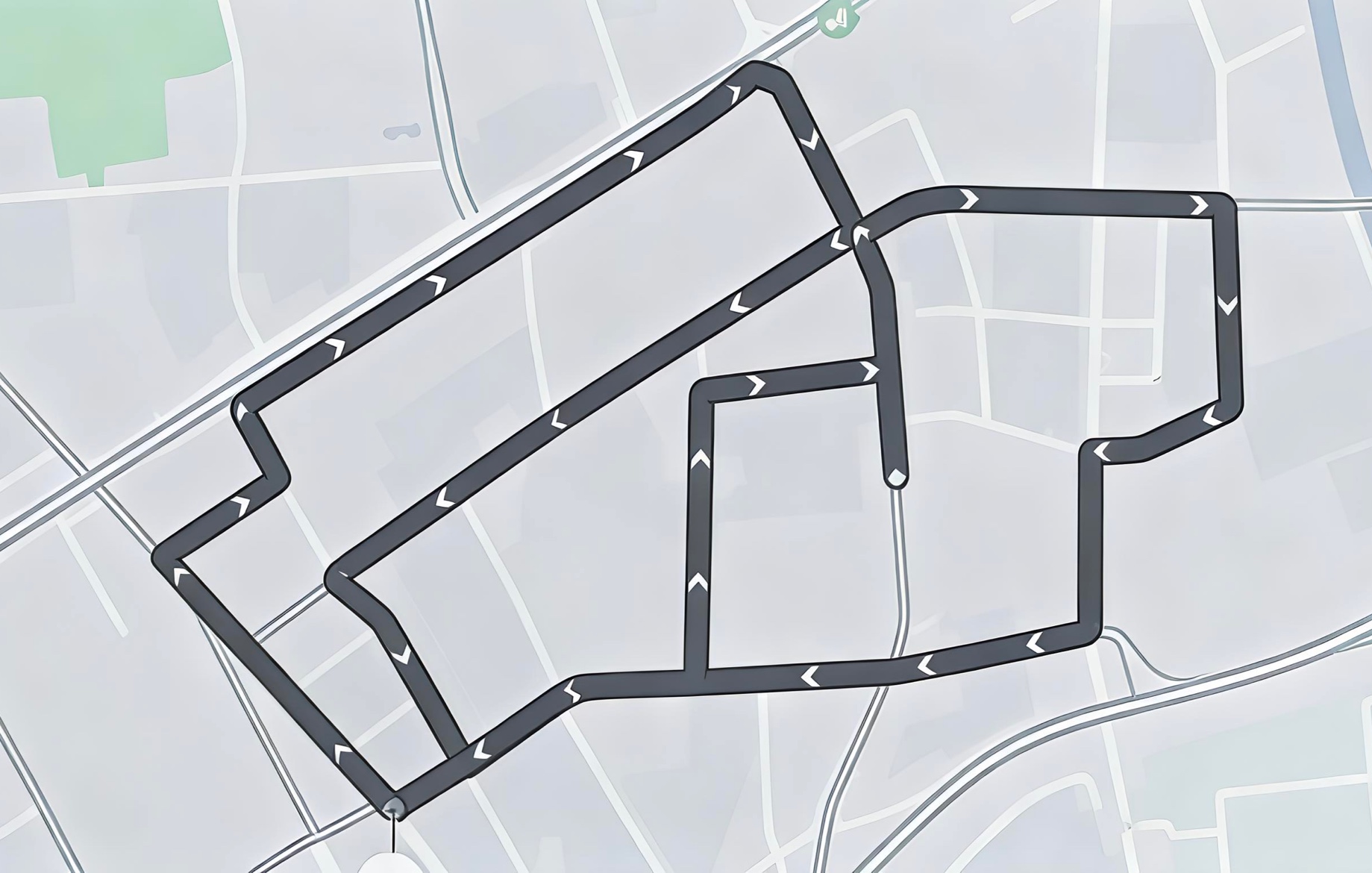}
        \captionof{figure}{\small Real vehicle test route.}
        \label{fig:map}
    \end{minipage}
\end{figure}

        \textbf{Closed-Loop Metrics}. We provide two types of closed-loop metrics: the success rate and the stability score. To ensure a fair comparison, we use a fixed route, as shown in Figure~(\ref{fig:map}), for each model, and each model performs two loops. During the test, we mark specific scenarios, including starting maneuvers ($s_1$), car-following with stopping ($s_2$), navigational lane changes ($s_3$), yielding to VRUs ($s_4$), yielding to cross traffic at intersections ($s_5$), and left and right turns ($s_6$). For each task, a trial is considered a failure if a human takeover occurs; otherwise, it is considered a success. The success rate for each scenario is then calculated. To obtain an overall success rate, we compute a weighted mean across all six scenarios, assigning higher weights to more frequent scenarios for a more accurate evaluation.
        \begin{equation}
            \begin{aligned}
                Success~Rate = &w_1s_1 + w_2 s_2 + w_3 s_3 \\
                &+ w_4  s_4 + w_5  s_5 + w_6  s_6
            \end{aligned}
        \end{equation}

Moreover, we also consider the stability score. Unlike the success rate, this metric is not constrained to specific scenarios. Instead, we evaluate abnormal centering behavior and abnormal speeds, such as driving too slowly or too fast. We record the occurrences of these abnormal behaviors and normalize the counts per 100 km ($k_{center}, k_{speed}$). Afterward, we calculate the scores for centering performance and speed compliance:
\begin{equation}
    \label{eq:centeringvelocity}
    \begin{aligned}
        S_{center} &= 100 \times Clip(1 - \frac{k_{center}}{Thresh_{center}}, 0, 1) \\
        S_{speed} &= 100 \times Clip(1 - \frac{k_{speed}}{Thresh_{speed}}, 0, 1) \\
    \end{aligned}
\end{equation}
Finally, we calculate the overall score using the average of the centering performance and speed compliance scores. The choice of hyperparameters can be found in Table~\ref{tab:metric_hyperparameter}.

\subsection{Scene Encoder Details}
\label{app:scene_encoder}

We consider an E2E AD system that directly processes multi-modal sensory inputs (camera images and LiDAR point clouds) to generate planning trajectories~\citep{chen2023end, hu2023planning}. Since our contribution centers on diffusion-based planning, we adopt an in-house validated perception backbone as the scene encoder $\mathrm{Enc}$ rather than re-engineering perception from scratch. Heterogeneous sensory streams are first compressed into a unified Bird's Eye View (BEV) feature map~\citep{li2024bevformer}, exposing a common spatial frame for downstream perception and planning. On top of the BEV features, we employ two distinct sets of transformer-based queries: Object Detection (OD) tokens for vehicle and pedestrian localization, and Lane Detection (LD) tokens for road structure understanding; the encoder is pretrained on these perception tasks to provide a solid initialization before being plugged into the planner. At planning time, the OD and LD tokens are concatenated with Navi tokens (encoding high-level navigation commands) and further fused via self-attention blocks, yielding the latent condition $C := \mathrm{Enc}(s)$ consumed by the diffusion decoder.

\subsection{Hybrid Loss Implementations}
\label{app:hybrid loss}

The pseudocode for hybrid loss with detach is shown in Algorithm~\ref{alg:hybrid_loss}, implemented in torch. In practice, we set $W=L-1$. We further illustrate the stop-gradient operation in hybrid loss with a small example: $L=6$ time steps and detach window $W=3$. The matrix-form expression 

\begin{equation}
\label{eq:detached_integral}
\begin{aligned}\hat{\tau}^{\mathbf{x}}_{\theta} = M_W \tau^{\mathbf{v}}_{\theta}\Delta t + \mathrm{sg}((M-M_W)\tau^{\mathbf{v}}_{\theta}\Delta t)
    \end{aligned}
\end{equation}

\input{code}

\input{code_rl}

The integration matrix $M$ and its $W$-banded version $M_{W}$ are
\begin{equation*}
    M = \begin{pmatrix}
        1 & 0 & 0 & 0 & 0 & 0 \\
        1 & 1 & 0 & 0 & 0 & 0 \\
        1 & 1 & 1 & 0 & 0 & 0 \\
        1 & 1 & 1 & 1 & 0 & 0 \\
        1 & 1 & 1 & 1 & 1 & 0 \\
        1 & 1 & 1 & 1 & 1 & 1
    \end{pmatrix},\qquad
    M_{W} = \begin{pmatrix}
        1 & 0 & 0 & 0 & 0 & 0 \\
        1 & 1 & 0 & 0 & 0 & 0 \\
        1 & 1 & 1 & 0 & 0 & 0 \\
        0 & 1 & 1 & 1 & 0 & 0 \\
        0 & 0 & 1 & 1 & 1 & 0 \\
        0 & 0 & 0 & 1 & 1 & 1
    \end{pmatrix}.
\end{equation*}
The first $W=3$ rows of $M_{W}$ coincide with $M$ (early waypoints fall entirely within the $W$-step horizon), whereas later rows retain only the most recent $W$ entries.

\paragraph{Forward pass.}
Write the predicted velocities as $\tau^{\mathbf{v}}_{\theta}=(v^{1},v^{2},\ldots,v^{6})^{\top}$. The two branches in Eq.~(\ref{eq:detached_integral}) decompose as
\begin{equation*}
    M_{W}\tau^{\mathbf{v}}_{\theta}\Delta t = \begin{pmatrix}
        v^{1} \\ v^{1}+v^{2} \\ v^{1}+v^{2}+v^{3} \\ v^{2}+v^{3}+v^{4} \\ v^{3}+v^{4}+v^{5} \\ v^{4}+v^{5}+v^{6}
    \end{pmatrix}\Delta t,\qquad
    (M-M_{W})\tau^{\mathbf{v}}_{\theta}\Delta t = \begin{pmatrix}
        0 \\ 0 \\ 0 \\ v^{1} \\ v^{1}+v^{2} \\ v^{1}+v^{2}+v^{3}
    \end{pmatrix}\Delta t.
\end{equation*}
Their sum is the standard waypoint integration $M\tau^{\mathbf{v}}_{\theta}\Delta t = \big(v^{1},\,v^{1}{+}v^{2},\,\ldots,\,\textstyle\sum_{j=1}^{6}v^{j}\big)^{\!\top}\!\Delta t$. Since $\mathrm{sg}(\cdot)$ is the identity in forward, $\hat{\tau}^{\mathbf{x}}_{\theta} = M\tau^{\mathbf{v}}_{\theta}\Delta t = \tau^{\mathbf{x}}_{\theta}$ numerically: \emph{the loss value is the same as without detach}.

\paragraph{Backward pass.}
Only the $M_{W}\tau^{\mathbf{v}}_{\theta}\Delta t$ branch contributes to the gradient (the second branch is wrapped in $\mathrm{sg}$). The waypoint loss $\mathcal{L}_{waypoints}=\sum_{i=1}^{L}(\hat{x}^{i}_{\theta}-x^{i}_{0})^{2}$ therefore yields
\begin{equation}
    \frac{\partial\mathcal{L}_{waypoints}}{\partial v^{k}}
    \;=\; 2\Delta t\!\!\sum_{i\,:\,(M_{W})_{ik}=1}\!\!(\hat{x}^{i}_{\theta}-x^{i}_{0})
    \;=\; 2\Delta t\!\!\sum_{i=k}^{\min(k+W-1,\,L)}\!\!(\hat{x}^{i}_{\theta}-x^{i}_{0}).
    \label{eq:detached_grad}
\end{equation}
Concretely, for $L=6, W=3$ the gradient of $\mathcal{L}_{waypoints}$ w.r.t.\ each velocity component picks up residuals only from the listed waypoints:
\begin{center}
\begin{tabular}{c|c|c}
\toprule
component & full integration ($M$) & detached ($M_{W}$, $W=3$) \\
\midrule
$v^{1}$ & $\{1,2,3,4,5,6\}$ & $\{1,2,3\}$ \\
$v^{2}$ & $\{2,3,4,5,6\}$   & $\{2,3,4\}$ \\
$v^{3}$ & $\{3,4,5,6\}$     & $\{3,4,5\}$ \\
$v^{4}$ & $\{4,5,6\}$       & $\{4,5,6\}$ \\
$v^{5}$ & $\{5,6\}$         & $\{5,6\}$ \\
$v^{6}$ & $\{6\}$           & $\{6\}$ \\
\bottomrule
\end{tabular}
\end{center}
Without detach, $v^{1}$ accumulates residuals from \emph{all six} future waypoints, whereas $v^{6}$ only sees one---the imbalance grows with $L$. With detach, each $v^{k}$ accumulates at most $W$ residuals, evening out the gradient magnitudes across time steps.

\paragraph{Theoretical compatibility.}
The decomposition in Eq.~(\ref{eq:detached_integral}) only modifies the backward pass; the forward value of $\hat{\tau}^{\mathbf{x}}_{\theta}$, and hence of $\mathcal{L}_{hybrid}$, is identical to the un-detached case. Consequently, the minimizer characterized by Theorem~\ref{thm:general_score_matching} is unchanged: the global minimizer remains the marginal score function. The detach merely reshapes the gradient field for training stabilization.

\subsection{Implementation Details} 
\label{ap:implementation}

\textbf{Imitation Learning}. Based on the content in Section~\ref{sec:exp}, we provide the following details of the experimental setup. We adopt the variance-preserving(VP) noise schedule following \cite{zheng2025diffusionplanner} and use 6 sampling steps for efficient generation. Training was conducted using 64 NVIDIA H20 GPUs, with a batch size of $160$ per GPU over $10$ epochs, with a warmup phase. We use AdamW optimizer with a learning rate of $5\times10^{-4}$, weight decay of $0.01$. We report the other detailed setup in Table \ref{tab:param}. After being well trained, our models are deployed on a real vehicle platform for real-world closed-loop testing. Specifically, the model is first converted to the ONNX format and then optimized using TensorRT's inference compiler to enable hardware-accelerated execution. Furthermore, for multi-step inference, we follow the approach used by \citet{zheng2025diffusionplanner}, which employs the DPM-Solver~\citep{lu2022dpm} to accelerate the sampling process, achieving a final inference speed that easily meets the 10Hz requirement. It is worth noting that we apply only a light post-processing smoothing step after the model output, ensuring that the evaluation accurately reflects the model's inherent performance.

\textbf{Reinforcement Learning}. Given the safety risks of online RL on real vehicles~\citep{kendall2019learning} and the high computational cost of world-model-based simulators~\citep{agarwal2025cosmos, hu2023gaia}, we train with a non-reactive pseudo-closed-loop simulator~\citep{Dauner2024NEURIPS} built on real-world datasets, while still evaluating the trained model on real vehicles. In this setup, neighboring vehicles replay logged behaviors, while our model generates planning trajectories.  To achieve stable training using Eq.~(\ref{eq:awr_hybrid}), we apply reward group normalization~\citep{shao2024deepseekmath} to obtain an appropriate numerical range for weighting. Additionally, we discard samples in which all actions receive identical rewards to improve learning effectiveness. Finally, we employ Exponential Moving Average (EMA) for policy updates to further enhance stability. More details about rewards design are shown in Appendix~\ref{app:rewards}.

%%%%%%%%%%%%%%%%%%%%%%%%%%%%%%%%%%%%%%%%%%%%%%%%%%%%%%%%%%%%

\subsection{Reward Function Details}
\label{app:rewards}

We provide concrete forms of the four rewards used in Section~\ref{sec:rl}. All rewards return a scalar in $[0, 1]$, evaluated on the candidate trajectory over the planning horizon of $L$ steps. The single-reward baseline uses $r_\text{safety}$ alone; the multi-reward setting replaces it with $r_\text{risk}$ and adds $r_\text{follow}$ and $r_\text{lane}$.

\paragraph{Safety Reward $r_\text{safety}$.} Collisions between the ego and each neighboring vehicle are detected via the Separating Axis Theorem (SAT) applied to their oriented bounding boxes at every future timestep $l$. The per-step collision indicator is $c_l = 1.0$ for active (head-on or lateral) collisions, $c_l = 0.3$ for rear-end collisions (attenuated to mitigate non-reactive simulator artifacts), and $c_l = 0$ otherwise. The trajectory-level reward is
\begin{equation}
r_\text{safety} \;=\; 1 - \max_{l=1,\ldots,L} c_l.
\end{equation}

\paragraph{Risk Reward $r_\text{risk}$.} A continuous, near-miss-aware refinement of $r_\text{safety}$. At each timestep $l$ we compute three sub-scores in $[0, 1]$ from time-to-collision (TTC), time-headway (THW), and occupancy distance to static/uncertain regions. Each sub-score is obtained by passing the raw quantity through a speed-adaptive shaping function: $1$ when the geometry is comfortably safe, decaying to $0$ as the configuration becomes critical. We aggregate conservatively---taking the most pessimistic value across both time and signals---yielding
\begin{equation}
r_\text{risk} \;=\; \min_{l=1,\ldots,L,\; m\in\{\text{TTC},\,\text{THW},\,\text{OCC}\}} s^m_l.
\end{equation}
The same rear-end attenuation as in $r_\text{safety}$ is applied so that artifacts of the non-reactive simulator are not over-penalized.
\begin{table}[t]
    \centering
    \caption{\small Hyperparameters of \textit{\name{}} / \textit{\name{}-RL}}
    \begin{tabular}{llcc}
    \toprule
     \textbf{Type}&   \textbf{Parameter}&\textbf{Symbol}  &\textbf{Value}\\
    \midrule
    \multirow{4}{*}[-0.ex]{IL}
        &Num. block &-& 6 \\
        &Dim. hidden layer &-& 256 \\
        &Num. multi-head &-& 8 \\
        &Hybrid loss weight & $\omega$ & 0.1 \\
    \midrule
    \multirow{6}{*}[-0.ex]{RL}
        &Group size &-& 32 \\
        &Temperature & $\beta$ & 1.0 \\
        &EMA &-& 0.05 \\
        &Risk reward weight   & $\lambda_\text{risk}$   & 1.0 \\
        &Car-following reward weight & $\lambda_\text{follow}$ & 3.0 \\
        &Lane-keeping reward weight & $\lambda_\text{lane}$ & 2.5 \\
    \bottomrule
    \end{tabular}
    \label{tab:param}
    % \vspace{-15pt}
\end{table}

\paragraph{Car-Following Reward $r_\text{follow}$.}
When a leader vehicle is present, this reward encourages human-like car-following, following the spirit of classical ACC heuristics. At each timestep $l$ we score the ego on four intuitive aspects, each normalized to $[0,1]$:
\begin{itemize}[leftmargin=*]
    \item \emph{time gap} $s^\text{gap}_l$: how close the current headway (in seconds) is to a speed-dependent ideal -- shorter at low speed, longer at high speed;
    \item \emph{spacing} $s^\text{dist}_l$: how close to the target vehicle, clipped to a safe min/max range;
    \item \emph{speed match} $s^\text{spd}_l$: how well the ego speed tracks the leader's speed;
    \item \emph{comfort} $s^\text{cmf}_l$: a penalty on longitudinal acceleration or braking that exceeds comfortable bounds.
\end{itemize}
The four scores are averaged over time and across aspects:
\begin{equation}
r_\text{follow} \;=\; \frac{1}{4L}\sum_{l=1}^{L}\Big(s^\text{gap}_l + s^\text{dist}_l + s^\text{spd}_l + s^\text{cmf}_l\Big).
\end{equation}

\paragraph{Lane-Keeping Reward $r_\text{lane}$.}
This reward encourages trajectories to stay close to the lane centerline during lane-keeping, improving lateral robustness. For each predicted waypoint we measure its perpendicular distance to the nearest lane centerline and convert it into a score $s^\text{ctr}_l \in [0,1]$ that is $1$ on the centerline and decays linearly to $0$ once the offset exceeds roughly half a lane width. The reward is the temporal average:
\begin{equation}
r_\text{center} \;=\; \frac{1}{L}\sum_{l=1}^{L} s^\text{ctr}_l.
\end{equation}
To avoid penalizing intentional lateral maneuvers, we mask out samples whose expert reference is off-lane or exhibits lane-change behavior.

\paragraph{Total Training Reward.} The reward $r$ used in Eq.~(\ref{eq:awr_hybrid}) is
\begin{equation}
r \;=\;
\begin{cases}
r_\text{safety}, & \text{single-reward baseline,} \\
\lambda_\text{risk}\, r_\text{risk} + \lambda_\text{follow}\, r_\text{follow} + \lambda_\text{lane}\, r_\text{lane}, & \text{multi-reward setting.}
\end{cases}
\end{equation}

\section{Baseline Comparison Results}
\label{app:moreresults}

\subsection{IL Pre-Training Results}
\label{app:ilresults}

On top of the results in Section~\ref{sec:exp}, we select two baselines for comparison: (1) a regression-adaptation of \textit{HDP} that shares the same architecture but is trained with regression loss; (2) an anchor-based planning algorithm with truncated diffusion~\citep{diffusiondrive}. To ensure the maximum capability gain from training, we use the largest dataset (70M frames) for training the two baselines.

\begin{table*}[t]
\centering
\caption{\small Compare the open-loop and closed-loop scores with baselines.}
\vspace{-5pt}
\resizebox{0.95\linewidth}{!}{\scriptsize
\begin{tabular}{l c c c c c} \toprule 
\multirow{2}{*}[-0.8ex]{\makecell[l]{\textbf{Model Name}}} & \multirow{2}{*}[-0.8ex]{\makecell[l]{\textbf{Data Size}}} & \multirow{2}{*}[-0.8ex]{\makecell[l]{\textbf{Open-Loop Score}}} & \multicolumn{3}{c}{\textbf{Closed-Loop Score}} \\ \cmidrule(lr){4-6} 
& & & \textbf{Success Rate} & \textbf{Stability Score} & \textbf{Overall Score} \\ \midrule
Regression Model & XL & 73.98 & 21.81 & 0.00 & 10.91 \\
Base Model & M& 51.07 & 15.67 & 0.00 & 7.83 \\
with $\tau_0$-loss \& $\tau_0$-pred & M& 75.27 & 22.84 & 0.00 & 11.42 \\
HDP & XL& \colorbox{mine}{88.94} & 71.24 & {79.53} & 75.38 \\ \midrule
HDP-RL  & -& - & \colorbox{mine}{83.49} & \colorbox{mine}{84.65} & \colorbox{mine}{84.07}\\
\bottomrule \end{tabular}}
\label{tab:ilbaseline}
\end{table*}

\begin{wrapfigure}{r}{0.45\linewidth} 
\includegraphics[width=\linewidth]{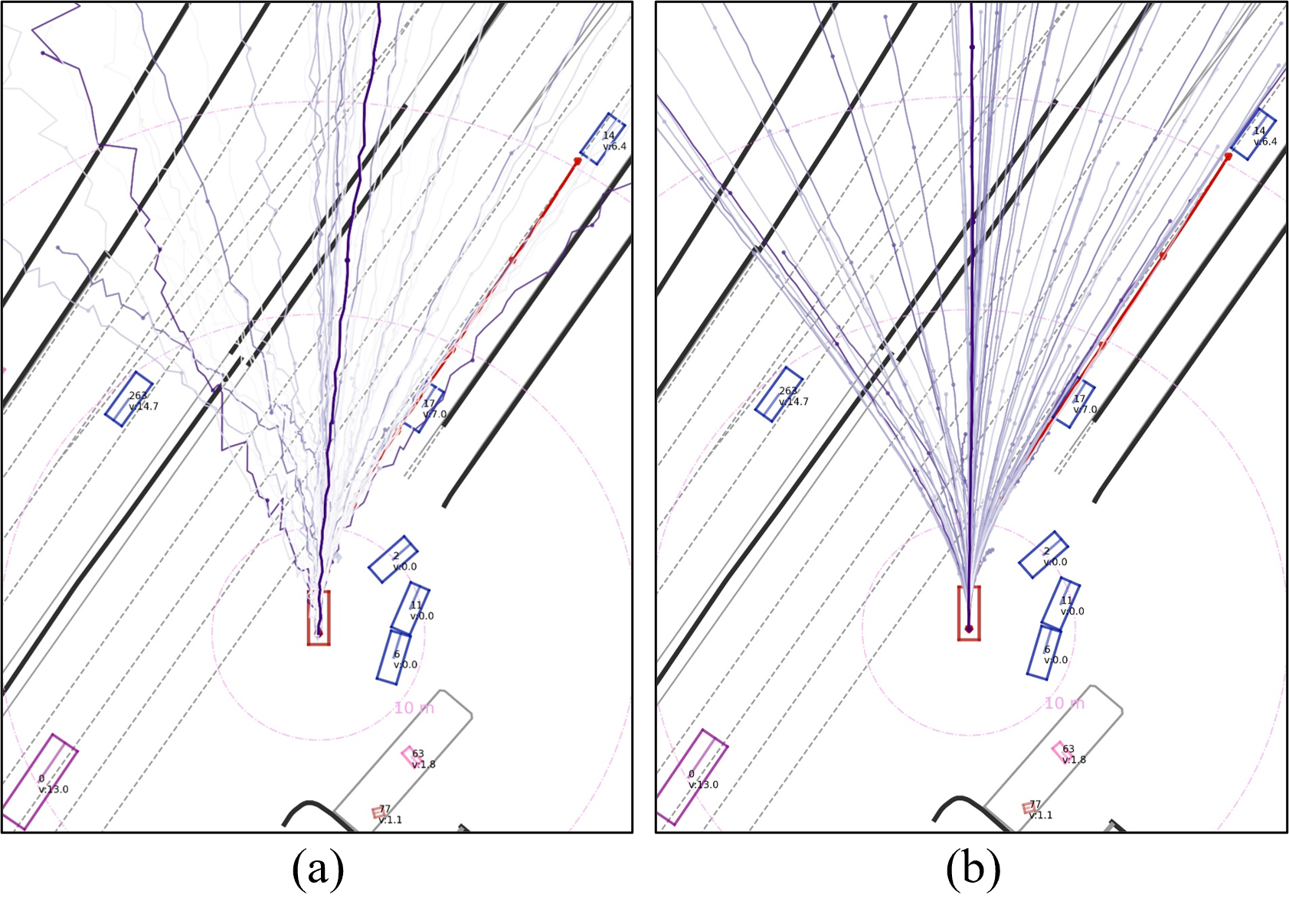}
\vspace{-2pt}
\caption{\small Visualization of anchor-based truncated diffusion using (a) offset prediction and (b) direct prediction. The ground truth is in red and model predictions in purple. The deeper the trajectory is, the higher the predicted score.}
\label{fig:diffusiondrive_openloop}
\vspace{-15pt}
\end{wrapfigure}
\textbf{Regression Adaptation.} The regression model has been used as a baseline method for autonomous driving with compatible benchmark performance~\citep{Chitta2023transfuser, diffusiondrive}. In our settings, the regression adaptation shares the same scene encoder with \textit{HDP}. For the decoder, we replace the embeddings of noised trajectory with a group of learnable queries to extract information from the scene features and to predict the final trajectory. For the model architecture, we retain the DiT-like architecture that uses stacked layers with attention blocks. The model is supervised with $L_2$ loss to predict the future trajectory in waypoint representation. For inference, the model directly outputs one trajectory for each rollout.

We evaluate the regression adaptation model in both open-loop and closed-loop settings, as shown in Table~\ref{tab:ilbaseline}. The model demonstrates compatible performance in terms of open-loop score, benefiting from large-scale training with sufficient data coverage. However, the model does not perform well in real-vehicle closed-loop evaluation, achieving an overall score (10.91) much lower than our \textit{HDP} (75.38). This indicates the inability of simple regression to scale up in real-world closed-loop scenarios, calling for more power generation capability of diffusion models with our proposed improvements.
% indicating the generalization capability in real-world scenarios of diffusion model with our proposed improvements.

\textbf{Anchor-Based Truncated Diffusion.} The model shares the scene encoder with \textit{HDP} as well. For the trajectory decoder, we followed DiffusionDrive~\citep{diffusiondrive} for implementation. Specifically, we first generate $N_{anchor}$ anchors using the training data with k-means algorithm. We select $N_{anchor}=64$ as a balance between mode coverage and compute efficiency compared with $N_{anchor}=20$ used in~\citep{diffusiondrive} for NavSim benchmark, given the larger data volume of ours. The decoder is composed of $N_{layer}=6$ repeated layers, each of which comprises separate attention blocks for scene features and ego-vehicle features, and two MLP-based heads for trajectory prediction and scoring respectively. For the diffusion part, we use DDIM~\citep{songdenoising} with $T=1000$. The model takes in slightly-noised anchors ($t \leq 50$) as well as scene embeddings as input, and predicts a group of trajectory candidates and classification scores in each layer. The output candidates of preceding layers are used as input for subsequent layers to enable multiple forward passes for refinement. During training, we select one positive sample from each of the $N_{layer}$ candidate groups. This sample is chosen based on the anchor closest to the ground truth. For supervision, $L_1$ loss is applied to the $N_{layer}$ trajectories. Additionally, focal loss is applied to the $N_{layer}$ classification scores, using the one-hot encoded index of the closest anchor as the target label. For inference, the model starts from noisy anchors with $t=8$, and the final prediction is selected from the last layer output as the trajectory with the highest score.

In practice, we find a severe performance mismatch compared with benchmark results from DiffusionDrive~\citep{diffusiondrive} as shown in Figure~\ref{fig:diffusiondrive_openloop}. The model's generations exhibit obvious irregular jitter, leading to low quality trajectories. We credit this to the offset prediction in the original implementation, which predicts the offset of noisy anchors from the ground truth:
$\text{offset} = \tau_0 - (\alpha_t \tau_0 + \sigma_t \epsilon)
= (1-\alpha_t)\tau_0 - \sigma_t \epsilon
\approx -\sigma_t \epsilon,~~t \ll T$.
This resembles the $\epsilon$-prediction as discussed in Section~\ref{sec:diffusion_loss}. Instead, we supervise the direct output of the model with the ground truth. This results in better smoothness in the generated trajectories, as shown in Figure~\ref{fig:diffusiondrive_openloop}. However, we still find that the model's predictions are strongly influenced by the anchor and fail to pick the correct candidate, and are unfortunately infeasible for real-vehicle deployment.

\subsection{RL Post-Training Results}
\label{app:rlresults}

% Figure~\ref{fig:rlcase} further 
% illustrates this: compared with \textit{\name{}}, \textit{\name{}-RL$^\dagger$} 
% avoids surrounding vehicles more effectively, confirming the effectiveness 
% of our RL algorithm.

To benchmark our RL-hybrid loss in Eq.~(\ref{eq:awr_hybrid}), we additionally implement a baseline following~\citep{black2023training, ren2024diffusion} (DPPO-style): the denoising chain $\tau_T \!\to\! \cdots \!\to\! \tau_0$ is treated as a $T$-step MDP with per-step Gaussian transitions $\pi_\theta(\tau_{t-1}\!\mid\!\tau_t, s) = \mathcal{N}(\mu_\theta(\tau_t, t, s),\, \Sigma_t)$, and a clipped PPO surrogate is applied at each transition:
\begin{equation}
\label{eq:dppo_loss}
    \mathcal{L} \;=\; -\,\mathbb{E}_{s,\,k,\,t}\!\left[\min\!\big(\rho^{k}_{t} A^{k}_{t},\; \mathrm{clip}(\rho^{k}_{t},\, 1\!\pm\!\epsilon)\, A^{k}_{t}\big)\right],
\end{equation}
where $\rho^{k}_{t}$ is the per-step current/old log-prob ratio and $A^{k}_{t}$ is a within-state advantage normalized over $K$ trajectory candidates per state, with rollouts collected under an EMA-updated old policy. In practice this baseline was fragile to train and required a battery of numerical safeguards. The results are shown in Table~\ref{tab:rlbaseline}. DPPO-style methods exhibit signs of reward hacking: although the stability score improves substantially, the low success rate indicates frequent takeovers, resulting in an unreliable planner. HDP-RL, in contrast, improves both metrics, and the multi-reward setting further enhances performance.

\begin{table*}[h]
\centering
\caption{\small Compare the closed-loop scores with DPPO-style RL methods~\citep{black2023training, ren2024diffusion}.}
\vspace{-5pt}
\resizebox{0.85\linewidth}{!}{\scriptsize
\begin{tabular}{l c c c} \toprule 
\multirow{2}{*}[-0.8ex]{\makecell[l]{\textbf{Model Name}}} & \multicolumn{3}{c}{\textbf{Closed-Loop Score}} \\ \cmidrule(lr){2-4} 
& \textbf{Success Rate} & \textbf{Stability Score} & \textbf{Overall Score} \\ \midrule
HDP-RL$^\dagger$ (only safety reward) & {72.89} & {79.53} & {76.20}\\ \midrule
DPPO-style methods (multi-rewards) & {65.50} & \colorbox{mine}{89.76} & {77.63}\\
HDP-RL (multi-rewards) & \colorbox{mine}{83.49} & {84.65} & \colorbox{mine}{84.07}\\
\bottomrule \end{tabular}}
\label{tab:rlbaseline}
\end{table*}

Let $T$ denote the number of denoising steps, $K$ the number of candidates per state. The per-state cost of one RL update is:
\begin{itemize}[leftmargin=*]
    \item \textbf{HDP-RL (ours):} the buffer holds $K$ trajectory tuples $(s, \tau_0, r)$; each is processed by one forward-backward at a single random $t$. Rollout cost: $K\,T$ no-grad forwards. Optimization cost: $K$ forward-backwards.
    \item \textbf{DPPO-style methods:} the buffer holds $K\,T$ transition tuples (the entire denoising chain for each candidate); each transition contributes one forward-backward to evaluate $\mathcal{L}_{DPPO}$. Rollout cost: $K\,T$ no-grad forwards. Optimization cost: $K\,T$ forward-backwards---a $T\times$ overhead over RL-hybrid.
\end{itemize}

\section{Limitations \& Discussions \& Future Work}
\label{ap:limitations}

Here, we discuss the limitations, potential solutions, and interesting directions for future research.

\begin{itemize}[leftmargin=*]

    \item \textbf{Lack of Trajectory Selection Mechanism}. Our framework directly adopts the trajectory generated by the diffusion model as the final planning output, without an additional selection or scoring stage. The strong closed-loop performance on real vehicles already demonstrates that the generated trajectories are of high quality. Nevertheless, from a practical deployment perspective, an explicit trajectory selection mechanism is often beneficial for filtering out occasional low-quality samples and further improving robustness.

    \textit{Solution and future work:} A common solution is to introduce an auxiliary scoring head, e.g., a learned scorer or a rule-based filter that ranks multiple candidate trajectories and selects the best one~\citep{li2024hydra}. However, the central goal of this work is to thoroughly investigate the inherent capability of diffusion models for E2E AD, and we therefore deliberately avoid introducing additional engineering components that may obscure the contribution of the diffusion model itself. Combining HDP with a learned trajectory scorer is a natural and promising direction for production-oriented systems, and is fully complementary to our findings.

    \item \textbf{Pseudo Closed-Loop Reinforcement Learning}. Our RL post-training is conducted in a non-reactive pseudo closed-loop simulator built upon real-world logged data, where neighboring vehicles replay their recorded behaviors and do not react to the ego vehicle. This inevitably introduces a gap with truly interactive closed-loop training, and may underestimate certain reactive behaviors of the surrounding agents.

    \textit{Solution and future work:} A natural alternative is to perform reactive closed-loop RL within a learned driving world model~\citep{hu2023gaia,agarwal2025cosmos}, so that surrounding agents can respond to the ego vehicle. However, current world models are still extremely expensive to train and run at scale, and their generative quality on truly out-of-distribution scenarios remains limited, making them more suitable as offline evaluators rather than as scalable RL environments at this stage. Despite this limitation, we observe that even with a simple non-reactive pseudo closed-loop simulator, our reward-weighted diffusion RL already yields substantial improvements on real-vehicle tests, validating the effectiveness of the proposed RL post-training framework. Combining HDP-RL with reactive world models is an exciting future direction that could further amplify these gains.

    \item \textbf{Compatibility with VLM/VLA Backbones}. In this work, we instantiate the diffusion planner on top of an in-house perception backbone, and have not investigated how vision--language pretraining or large-scale vision--language--action (VLA) models~\citep{wang2025alpamayo, liang2026dipole} can be combined with our framework. As a result, the world knowledge and reasoning capabilities accumulated by recent VLM/VLA foundations are not yet fully exploited in HDP.

    \textit{Solution and future work:} An important observation is that the contributions of this paper---diffusion loss space, hybrid trajectory representation, data scaling, and the RL-hybrid post-training---are largely orthogonal to the choice of upstream encoder. HDP can therefore be naturally used as a \emph{plug-in diffusion action head} on top of a VLM/VLA backbone, where the planner inherits the rich semantic and commonsense priors from large-scale vision--language pretraining, while still benefiting from our findings on stable, scalable diffusion-based action generation. We view this combination as a particularly promising path towards more generalizable and reasoning-aware E2E driving policies, and leave a thorough investigation as future work.

    \item \textbf{Limited Simulation-Based Evaluation}. Our evaluation primarily relies on offline open-loop replay metrics and large-scale closed-loop real-vehicle road tests. This makes our results harder to compare side-by-side with prior diffusion-based planners that are mainly evaluated in simulation.

    \textit{Solution and future work:} We argue that this is in fact a deliberate choice rather than a pure shortcoming. Existing simulation benchmarks are known to suffer from non-trivial biases, including limited scenario diversity, non-reactive log-replay neighbors, and a notable gap between simulator metrics and on-road performance~\citep{dauner2023parting}. Instead, we validate HDP on a real vehicle across 6 urban scenarios and 200\,km of road testing, and observe that our key design choices consistently translate into substantial closed-loop improvements, leading to a planner that performs reliably in real-world driving. Nevertheless, we acknowledge that real-vehicle testing is costly and not easily reproducible by the broader community, and we believe that building more realistic, reactive, and diverse closed-loop driving benchmarks is an important and pressing direction for future research.

\end{itemize}

Overall, although certain limitations exist, our work provides the systematic and large-scale investigation of diffusion-based planning for real-world E2E AD. Through carefully controlled studies, we identify a set of clean and principled design choices---diffusion loss space, hybrid trajectory representation, data scaling, and the RL-hybrid post-training---that together unleash the potential of diffusion models as a scalable and deployable E2E AD planner. The 10x closed-loop improvement achieved on real vehicles, with only a light smoothness post-processing and without heavy rule-based engineering, demonstrates that diffusion models can serve as a strong foundation for the next generation of end-to-end autonomous driving systems.

\end{document}

%% file: code.tex
\definecolor{codeblue}{rgb}{0.25,0.5,0.5}
\definecolor{codekw}{rgb}{0.85, 0.18, 0.50}

\definecolor{codesign}{RGB}{0, 0, 255}
\definecolor{codefunc}{rgb}{0.85, 0.18, 0.50}

\lstdefinelanguage{PythonFuncColor}{
  language=Python,
  keywordstyle=\color{blue}\bfseries,
  commentstyle=\color{codeblue},  % for lines starting with "#"
  stringstyle=\color{orange},
  showstringspaces=false,
  basicstyle=\ttfamily\small,
  % Match function calls and color them
  literate=
    % functions with one arg
    {*}{{\color{codesign}* }}{1}
    {-}{{\color{codesign}- }}{1}
    % {=}{{\color{codesign}= }}{1}
    {+}{{\color{codesign}+ }}{1}
    % function call pattern (common names)
    {dataloader}{{\color{codefunc}dataloader}}{1}
    {sample_t_r}{{\color{codefunc}sample\_t\_r}}{1}
    {randn}{{\color{codefunc}randn}}{1}
    {randn_like}{{\color{codefunc}randn\_like}}{1}
    {jvp}{{\color{codefunc}jvp}}{1}
    {stopgrad}{{\color{codefunc}stopgrad}}{1}
    {metric}{{\color{codefunc}metric}}{1}
    {torch}{{\color{codefunc}torch}}{1}
    {cumsum}{{\color{codefunc}cumsum}}{1}
    {roll}{{\color{codefunc}roll}}{1}
    {detached\_integral}{{\color{codefunc}detached\_integral}}{1}
    {hybrid\_loss}{{\color{codefunc}hybrid\_loss}}{1}
}

\lstset{
  language=PythonFuncColor,
  backgroundcolor=\color{white},
  basicstyle=\fontsize{9pt}{9.9pt}\ttfamily\selectfont,
  columns=fullflexible,
  breaklines=true,
  captionpos=b,
}

\begin{algorithm}[h]
\caption{Hybrid Loss with Detach}
\label{alg:hybrid_loss}
\begin{lstlisting}
def detached_integral(v, W, dt):
    # v: velocity of future trajectory
    # W: gradient detach window size
    # dt: time interval
    wpt_sg = torch.cumsum(v.detach()) * dt
    shift_sg = torch.roll(wpt_sg, shifts=W)
    shift_sg[:W] = 0
    
    wpt = torch.cumsum(v) * dt
    shift = torch.roll(wpt, shifts=W)
    shift[:W] = 0

    return wpt + shift_sg - shift

def hybrid_loss(pred_v, gt_v, W, omega):
    # omega: loss balancing weight
    # pred_v: predicted future velocity
    # gt_v: ground truth future velocity
    l_v = (pred_v - gt_v) ** 2
    l_wpt = (detached_integral(pred_v, W) - torch.cumsum(gt_v)) ** 2
    return l_v + omega * l_wpt
    
\end{lstlisting}
\end{algorithm}

%% file: code_rl.tex
% --- Add to preamble, right after existing \lstdefinelanguage{PythonFuncColor}{...} ---
\lstdefinelanguage{PythonFuncColorRL}{
  language=Python,
  keywordstyle=\color{blue}\bfseries,
  commentstyle=\color{codeblue},
  stringstyle=\color{orange},
  showstringspaces=false,
  basicstyle=\ttfamily\small,
  literate=
    % operators
    {*}{{\color{codesign}* }}{1}
    {-}{{\color{codesign}- }}{1}
    {+}{{\color{codesign}+ }}{1}
    % functions defined in this algorithm
    {reward_weight}{{\color{codefunc}reward\_weight}}{1}
    {rl_hybrid_loss}{{\color{codefunc}rl\_hybrid\_loss}}{1}
    % cross-reference: function reused from Algorithm 1
    {hybrid_loss}{{\color{codefunc}hybrid\_loss}}{1}
    % torch namespace and tensor methods
    {torch}{{\color{codefunc}torch}}{1}
    % {mean}{{\color{codefunc}mean}}{1}
    % {std}{{\color{codefunc}std}}{1}
    {exp}{{\color{codefunc}exp}}{1}
    % {detach}{{\color{codefunc}detach}}{1}
    {float}{{\color{codefunc}float}}{1}
}

% --- Replace your existing Algorithm 2 block ---
\begin{algorithm}[h]
\caption{RL-Hybrid Loss}
\label{alg:rl_hybrid_loss}
\begin{lstlisting}[language=PythonFuncColorRL]

def rl_hybrid_loss(r, beta, pred_v, gt_v, W, omega):
    # r:    per-candidate reward
    # beta: temperature
    # pred_v, gt_v, W, omega: same as Algorithm 1

    r_n = (r - r.mean() / (r.std() + 1e-6)
    weight = torch.exp(beta * r_n).detach()

    return weight * hybrid_loss(pred_v, gt_v, W, omega)
\end{lstlisting}
\end{algorithm}